\newcommand{\cP}{\mathcal{P}}
\newcommand{\bg}{\mathbf{g}}
\newcommand{\cL}{\mathcal{L}}
\newcommand{\cA}{\mathcal{A}}
\newcommand{\cW}{\mathcal{W}}
\newcommand{\cB}{\mathcal{B}}
\newcommand{\cT}{\mathcal{T}}
\newcommand{\R}{\mathbb{R}}
\newcommand{\E}{\mathop{\mathbb{E}}}
\newcommand{\kcount}{k_{\text{count}}}
\newcommand{\ktotal}{k_{\text{deviation}}}
\newcommand\numberthis{\addtocounter{equation}{1}\tag{\theequation}}
\newcommand{\one}{{\mathbbm 1}}
\DeclareMathOperator{\sign}{sign}
\theoremstyle{plain}
\newtheorem{Theorem}{Theorem}[section]
\newtheorem{Lemma}[Theorem]{Lemma}
\newtheorem{lemma}[Theorem]{Lemma}
\newtheorem{corollary}[Theorem]{Corollary}
\theoremstyle{definition}
\newtheorem{Definition}[Theorem]{Definition}
\theoremstyle{remark}
\definecolor{salmon}{RGB}{240,150,142}
\definecolor{yellowGreen}{RGB}{201,219,125}
\newcommand{\argmin}{\mathop{\text{argmin}}}
\pgfplotsset{compat=1.17}
\icmltitlerunning{Unconstrained Robust Online Convex Optimization}
\begin{document}

\twocolumn[
\icmltitle{Unconstrained Robust Online Convex Optimization}



\icmlsetsymbol{equal}{*}

\begin{icmlauthorlist}
\icmlauthor{Jiujia Zhang}{address}
\icmlauthor{Ashok Cutkosky}{address}
\end{icmlauthorlist}

\icmlaffiliation{address}{Department of Electrical and Computer Engineering, Boston University, Boston, USA}

\icmlcorrespondingauthor{Jiujia Zhang}{jiujiaz@bu.edu}
\icmlcorrespondingauthor{Ashok Cutkosky}{ashok@cutkosky.com}

\icmlkeywords{online learning, online convex optimization, adversarial corruption, comparator adaptive, parameter-free, unconstrained domain}

\vskip 0.3in
]



\printAffiliationsAndNotice{}  

\begin{abstract}
This paper addresses online learning with ``corrupted'' feedback. Our learner is provided with potentially corrupted gradients $\tilde g_t$ instead of the ``true'' gradients $g_t$. We make no assumptions about how the corruptions arise: they could be the result of outliers, mislabeled data, or even malicious interference. We focus on the difficult  ``unconstrained'' setting in which our algorithm must maintain low regret with respect to any comparison point $u \in \mathbb{R}^d$. The unconstrained setting is significantly more challenging as existing algorithms suffer extremely high regret  even with very tiny amounts of corruption (which is not true in the case of a bounded domain). Our algorithms guarantee regret $ \|u\|G (\sqrt{T} + k) $ when $G \ge \max_t \|g_t\|$ is known, where $k$ is a measure of the total amount of corruption. When $G$ is unknown we incur an extra additive penalty of $(\|u\|^2+G^2) k$. 
\end{abstract}

\section{Introduction}

In this paper, we consider unconstrained online convex optimization (OCO) under the presence of adversarial corruptions. In general, OCO is a framework in which a learner iteratively outputs a prediction \(w_t \in \mathcal{W}\), then observes a vector $g_t=\nabla \ell_t(w_t)$ for some convex loss function $\ell_t:\mathcal{W}\to \R$, and then incurs a loss of \(\ell_t(w_t)\). The learner's performance over a time horizon \(T\) is evaluated by the \textit{regret} relative to a fixed competitor \(u \in \mathcal{W}\), denoted as \(R_T(u)\):
\begin{align*}
    R_T (u) := \sum_{t=1}^T \langle g_t, w_t - u\rangle \ge \sum_{t=1}^{T} \ell_t(w_t) - \ell_t(u)
\end{align*}
The inequality above follows by convexity of $\ell_t$. 
Classical results in this field consider a bounded domain \(\mathcal{W}\) with known diameter \(D\) and a Lipschitz bound \(G \ge \max_t \|g_t\| \). In this setting, the optimal bound is $R_T(u) \le O ( G D\sqrt{T})$  \citep{zinkevich2003online, abernethy2008optimal}.

Our work focuses on the \emph{unconstrained} case \(\mathcal{W} = \mathbb{R}^d\), where it is typical to aim for a regret guarantee that scales not with a uniform diameter bound $D$, but with the norm of the comparator $\|u\|$. Such bounds are often called ``comparator adaptive'' (because they adapt to the comparator $u$), or ``parameter-free'' (because this adaptivity suggests that the algorithms require less hyperparameter tuning). In this unconstrained setting, the classical algorithms achieve \(R_T(u) =\tilde{O} (\|u\| G \sqrt{T})\) \citep{mcmahan2012no, mcmahan2014unconstrained, orabona2016coin, orabona2014simultaneous} (which is also optimal).

We are interested in a harder variant of the OCO framework with ``corrupted'' gradients. Specifically, instead of any direct information about the function $\ell_t$, after each round the learner is provided with a vector $\tilde g_t$ that should be interpreted as an estimate of $g_t=\nabla \ell_t(w_t)$. Our aim is to obtain a regret that scales as $\|u\|G(\sqrt{T} + k)$ for all $ u \in \cW $, where $k$ is some measure of the degree to which $\tilde g_t\ne g_t$ that will be formally defined in Section \ref{sec:assumptions}. Roughly speaking, $k$ can be interpreted as the number of rounds in which $\tilde g_t\ne g_t$. Notably, the desired rate is robust to adversarial corruptions in the sense that it allows $k = O(\sqrt{T})$ before the bound becomes worse than the optimal result \emph{without} corruptions. 

Our dual challenges of corrupted $\tilde g_t$ and unconstrained $\mathcal{W}$ are naturally motivated by problems in practice. The unconstrained setting is ubiquitous in machine learning - consider the classical logistic regression setting, for which it is unusual to impose constraints. The corrupted $\tilde g_t$ in contrast is less commonly studied, but represents a common practical issue: the computed gradients may not be good estimates of a ``true'' gradient, either due to the presence of statistical outliers, numerical precision issues in the gradient computation, or mislabeled or otherwise damaged data.

We distinguish two different settings in our results: one in which the algorithm is provided with prior knowledge of a number $G\ge \max_t \|g_t\|$, and one in which it is not. This is a common dichotomy in unconstrained OCO, even without corruptions. In the former case, the classical result of $\tilde O(\|u\|G\sqrt{T})$ is obtainable, while in the latter case it is not: instead the optimal results are \(R_T(u) \leq \tilde{O}(\|u\|\max_t\|g_t\|\sqrt{T} +  \|u\|^3 \max_t\|g_t\| )\) \citep{ cutkosky2019artificial, mhammedi2020lipschitz}, or $\tilde{O}( \|u\|\max_t\|g_t\|\sqrt{T} +  \|u\|^2 + \max_t\|g_t\|^2 )$ by \citet{cutkosky2024fully}. The later excels particularly whenever $G$ is not excessively large: $G \le \|u\|\sqrt{T}$.

To the best of our knowledge, the setting of unconstrained OCO with corruptions has not been studied before. Perhaps the closest works to ours are \citet{zhang2022parameter, jun2019parameter, van2019user} and \citet{van2021robust}. \cite{zhang2022parameter,jun2019parameter,van2019user} study the unconstrained setting, but assume that $\tilde g_t$ is a random value with $\E[\tilde g_t]=g_t$. In contrast, we assume no such stochastic structure on $\tilde g_t$. On the other hand, \citet{van2021robust} does not make any assumptions about the nature of the corruptions, but assumes that $\mathcal{W}$ has finite diameter $D$. They achieve a regret of $O(DG(\sqrt{T} + k))$.
Our development will borrow some ideas from \citet{van2021robust} with the aim to bound $R_T(u)$ 
, but we face unique difficulties. As detailed in Section~\ref{sec:challenge}, the unconstrained setting means that even very small corruptions could have dire consequences (unlike in the constrained setting). Moreover, if the Lipschitz constant $G$ is not known, the problem becomes even more challenging; as detailed in Section~\ref{sec:unknownG_chanllenge}, prior methods for handling unknown $G$ do not apply because we never learn $G$ even at the end of all $T$ rounds.

The notion of adversarial corruption is common in the field of robust statistics, with early efforts focusing primarily on the presence of outliers in linear regression \citep{huber2004robust, cook2000detection, thode2002testing}. These insprired broader application in machine learning, asuch as Robust PCA \citep{candes2011robust}, anomaly detection \citep{raginsky2012sequential, delibalta2016online, zhou2017anomaly, sankararaman2022fitness}, robust regression \citep{klivans2018efficient, cherapanamjeri2020optimal, chen2022online}, and mean estimation \citep{lugosi2021robust}. For a comprehensive review of recent advances in this area, see \citet{diakonikolas2019recent}.

Adversarial corruption has also been studied in iterative optimization setting other than OCO, such as in stochastic bandits \citep{lykouris2018stochastic, gupta2019better, ito2021optimal, agarwal2021stochastic} and stochastic optimization \citep{chang2022gradient,sankararaman2024online}.
\vspace{-1mm}

\paragraph{Contributions and Organization}
In the case that the algorithm is given prior knowledge of $G$, we provide an algorithm that achieves $R_T(u)=\tilde O(\|u\|G(\sqrt{T} + k))$ in Section \ref{sec:Lipshcitz_ub}, with a matching lower bound (see Section \ref{sec:lipschitz_lb}).  Alternatively, when $G$ is unknown, a regret bound with an additional penalty of $(\|u\|^2 + G^2) k$ is attained (see Section \ref{sec:error_correction_unknownG}).

Meanwhile, we provide two specific applications of our results in Appendix \ref{app:meta_knownG}. First, we show that our method can be used to solve stochastic convex optimization problems in some of the gradient computations are altered in an arbitrary way. Second, we solve a natural ``online'' version  of a distributionally robust optimization problem. Before providing our main results, we introduce notation and define our corruption model in Section \ref{sec:assumptions}.

\begin{figure*}[ht]
\begin{center}
\subfloat[\label{fig:1a}]{\includegraphics[width=0.3\textwidth]{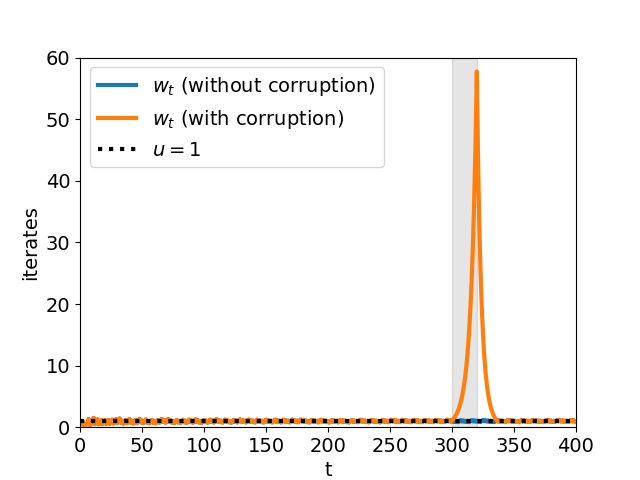}}\hfill
\subfloat[\label{fig:1b}]{\includegraphics[width=0.3\textwidth]{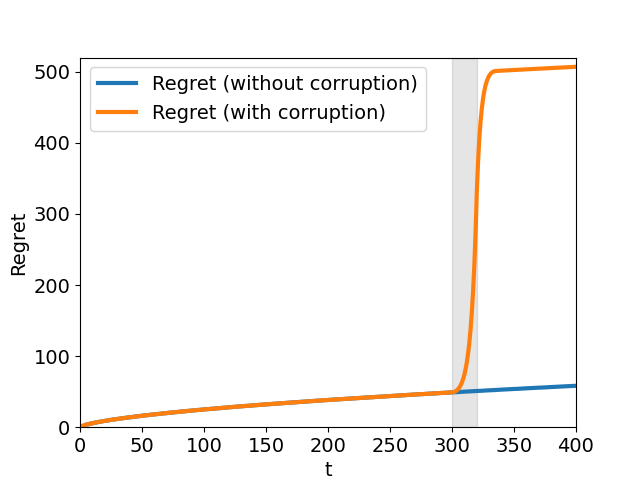}}\hfill
\subfloat[\label{fig:1c}]{\includegraphics[width=0.3\textwidth]{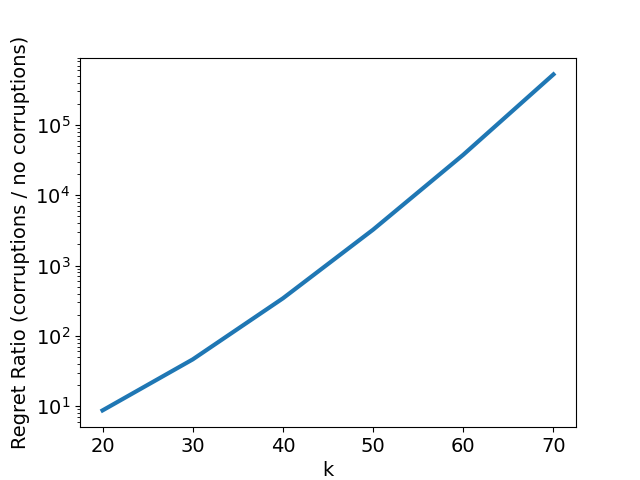}}
\caption{KT-bettor with $\ell(w) = |w-1|$ and comparator $u = 1$. (a)-(b): $T = 400$ and corruption happens during $t \in [300,319]$. (c): Ratio between regrets with and without corruptions with various total corrupted rounds $k \in [20, 30, 40, 50, 60, 70]$ and $T = k^2$.}
\label{fig:1}
\end{center}
\end{figure*}

\section{Notation and Problem Setup}\label{sec:assumptions}
\paragraph{Notation}
For each $t$, $\ell_t: \cW \rightarrow \R$ is a convex function, where and $\cW = \R^d$. Let $w_t \in \cW$ be iterates from some online learning algorithm and denote $g_t = \nabla \ell_t(w_t)$ as the ``true'' (sub)gradient. Let $\tilde{g}_t$ be the the corrupted gradient observed by the learner. Define $\one\{ \cdot \} $ as the indicator function, where $\one\{ \textsc{true}\} = 1, \one\{ \textsc{false}\} = 0 $. We use $|\cdot|$ to denote the cardinality of a set. Let $\| \cdot \|$ denote the Euclidean norm. Denote $\R^{+} = \{x \in \R: x \ge 0\}$. We define shorthand notation for sets $[T] = \{1, 2, \dots, T\}$ and $[a, T] = \{a, a+1, \dots, T\}$ for any $a \in [T]$. We use $\cB \subseteq [T]$ to denote an index set, and $\bar{\cB} = [T] \setminus \cB$ for its complement. We use $O(\cdot)$ to hide constant factors and $\tilde{O}(\cdot)$ to additionally conceal any polylogarithmic factors.

\paragraph{Problem Setup} 
Instead of the true gradients $g_t$, our algorithms only receive potentially corrupted gradients $\tilde{g}_t$. Two natural measures to quantify corruptions are:
\begin{align}
    \kcount := \sum_{t=1}^{T} \one \{ g_t \neq \tilde{g}_t\} 
    \label{eqn: eg1}
\end{align}
\begin{align}
    \ktotal := \frac{1}{G}\sum_{t=1}^{T} \|g_t - \tilde{g}_t\|\label{eqn: eg2}
\end{align}

where $G$ is a scalar that satisfies $G\ge \max_t \|g_t\|$ and
 is often referred as the ``Lipschitz constant". The metric 
$\kcount$ counts the rounds in which $\tilde g_t\ne g_t$ but allowing for arbitrarily large deviations $\|\tilde g_t - g_t\|$ in those rounds. This is suitable for detecting outlier effects and highlighted in studies such as \cite{van2021robust, sankararaman2024online}. Conversely, $\ktotal$ measures the cumulative deviation, accommodating corruption in every round, making it optimal for identifying subtle yet widespread errors or malicious activities, akin to the issues addressed in \cite{lykouris2018stochastic,gupta2019better, ito2021optimal, agarwal2021stochastic,chang2022gradient}.

In order to provide a unified way to study those two distinct corruption measures in Equation (\ref{eqn: eg1}) and (\ref{eqn: eg2}), we assume that our algorithm is provided with a number $k$ that satisfies:
\begin{align}
    | \cB  | := | \{t \in [T]: \|g_t -\tilde{g}_t \| \ge G \} | \le k 
    \label{eqn: assumption_big_rounds}
\end{align}
and also:
\begin{align}
    \frac{1}{G} \sum_{t=1}^{T} \min \left( \| g_t - \tilde g_t \|, G \right) \le  k \label{eqn:assumption_general}
\end{align}
Intuitively, $\cB$ denotes the rounds with a ``large'' amount of corruption. 
Notice that 
\begin{align*}
    | \cB  | \le \min \left(  \kcount,  \ktotal \right)
\end{align*}
and \begin{align*}
    \frac{1}{G} \sum_{t=1}^{T} \min \left( \| g_t - \tilde g_t \|, G \right) \le \min \left(  \kcount,  \ktotal \right)
\end{align*}
Hence, an algorithm whose complexity depends on $k$ satisfying Equation (\ref{eqn: assumption_big_rounds}) and (\ref{eqn:assumption_general}) can be used with either $k=\kcount$ or $k=\ktotal$ depending on which is more appropriate to the problem at hand.


\section{Challenges in Unconstrained Domain}\label{sec:challenge}
Dealing with corruptions with an unconstrained domain is significantly more challenging than one with a bounded domain. As an illustration of the added difficutly, suppose the corruptions are so ``small'' that $\|g_t - \tilde g_t \| \le G$ for all $t$. Then in a bounded $\cW$ with a diameter $D$, an algorithm that completely ignores the possibility of corruptions and directly runs on $\tilde{g}_t$ will have low regret. This can be seen as follows: since $\|u-w_t\| \le D$ for every $u, w_t \in \cW$, we have:
\begin{align*}
    \sum_{t=1}^T \langle g_t, w_t -u\rangle & \le \sum_{t=1}^T \langle \tilde g_t, w_t -u\rangle +\sum_{t=1}^T \|g_t-\tilde g_t\|\|w_t -u\| \\ &  \le \sum_{t=1}^T \langle \tilde g_t, w_t -u\rangle + kGD 
\end{align*}
In this case, $\|u-w_t\| \le D$ prevents the algorithm from straying too far from the comparator $u$. 

The situation is much more difficult in the \emph{unconstrained} setting. Algorithm for this setting typically produce outputs $w_t$ that potentially grow \emph{exponentially fast} in order to quickly compete with comparators that are very far from the starting point. However, this also means the algorithm is especially fragile to corruption since the growth of $w_t$ can be highly sensitive to deviations in $\|g_t - \tilde{g}_t \|$. Even a small deviation could cause $w_t$ to move extremely far away and therefore incur a very high regret. This phenomenon is illustrated in Figure~\ref{fig:1} with the KT-bettor algorithm \cite{orabona2016coin}, which is a standard example of an unconstrained learner. 

In Figure~\ref{fig:1}, we considered $\ell_t(w) = |x-1|$ for all $t$. Figure \ref{fig:1a} and \ref{fig:1b} demonstrate $k = 20$ gradients being corrupted by setting $\tilde g_t=-g_t$ during rounds $t \in [300,300+k-1]=[300,319]$ over a time span of $T = k^2=400$. This results in an exponential deviation away from the comparator $u=1$ and so incurs a high regret. Finally, we show that this problem becomes exacerbated as $k$ increases by simulating $k \in [20, 30, 40, 50, 60, 70]$ for $T = k^2$ in Figure \ref{fig:1c}.


\section{Robustification through Regularization}\label{sec:general_protocol}
In this section, we outline our general algorithm-design recipe. When receiving possibly corrupted gradients $\tilde{g}_t$, we first employ a gradient clipping step with some threshold $h_t$ that outputs a ``clipped'' version $\tilde g_t^c$, defined as follows:
\begin{align}
    \tilde g_t^c = \frac{\tilde g_t}{\| \tilde{g}_t \|} \min\left(h_t, \|\tilde g_t\| \right) \label{eqn: clip_general}
\end{align}
This preprocessing step ``corrects'' some corruption effect when $h_t$ is appropriately chosen. For example, in the case of $h_t = G \ge \max_t \|g_t\|$, then $\tilde{g}_t^c$ is always ``less corrupted'' than $\tilde{g}_t$, as $\|\tilde{g}_t^c-g_t\| \le \|\tilde{g}_t - g_t\|$. Then $\tilde g_t^c$ is used as a feedback to an online learner, yielding the following expression for $R_T(u)$:
\begin{align}
    R_T(u) & :=\sum_{t=1}^T \langle g_t, w_t-u\rangle \nonumber \\
    & = \sum_{t=1}^{T} \langle \tilde{g}^{c}_t , w_t - u \rangle + \sum_{t=1}^{T} \langle g_t - \tilde{g}^{c}_t,   w_t - u  \rangle \nonumber
\end{align}
Next, we incorporate a regularization function $r_t: \cW \rightarrow \R$. We can re-write $R_T(u)$ as the following:
\begin{align}
    R_T(u) &  := \underbrace{\sum_{t=1}^{T} \langle \tilde{g}^{c}_t , w_t - u \rangle + r_t(w_t) - r_t(u)}_{:= R_T^{\cA} (u)} \nonumber \\
    & \quad + \underbrace{\sum_{t=1}^{T} \langle g_t - \tilde{g}^{c}_t,   w_t  \rangle }_{:= \textsc{error}} - \underbrace{\sum_{t=1}^{T}  r_t(w_t) }_{:= \textsc{correction}}  \nonumber \\
    & \quad + \underbrace{ \left\langle \sum_{t=1}^{T}  g_t - \tilde{g}^{c}_t,   - u  \right \rangle  + \sum_{t=1}^{T} r_t (u) }_{:= \textsc{bias}}  \label{eqn: regret_general_composite} 
\end{align}
The interpretation is that $R_T(u)$ can be controlled through four components $\textsc{error}$, $\textsc{correction}$, $\textsc{bias}$ and a composite regret $R_T^{\cA}(u)$. Here, $R_T^{\cA}(u)$ is the regret of an online learner $\cA$ with respect to the losses $\langle \tilde g_t^c,w\rangle + r_t(w)$. Some information about the structure of $r_t$ may be known \emph{before} $w_t$ is chosen, making this problem similar to online optimization with \emph{composite} losses \cite{duchi2010composite}. We summarize the general procedure as Algorithm \ref{alg:general}. Depending on whether $G \ge \max_t \|g_t\| $ is known or not, the appropriate settings for $h_t$, the selection of regularizer $r_t$ and the online learner $\cA$ differs. We apply this general framework to the case where $G$ is known in Section~\ref{sec:Lipshcitz_ub} first, and the consider the significantly more challenging unknown-$G$ setting in Section~\ref{sec:non_lip}. 
\begin{algorithm}[H]
    \caption{General Protocol}\label{alg:general}
\begin{algorithmic}[1]
 \STATE {\bfseries Input:} Clipping thresholds: $0 < h_1 \le \cdots \le h_{T+1}$ \\
 An online learning algorithms $\cA$. \\ 
 A regularizer: $r_t: \cW \rightarrow \R^+$ 
 
   \FOR{$t=1$ {\bfseries to} $T$}
   \STATE Play $w_t$, receive $\tilde g_t$
   \STATE Compute $\tilde g_t^c$ as Eqn. (\ref{eqn: clip_general})
   \STATE Send $\tilde g_t^c, h_{t+1}$ to $\cA$ and get $w_{t+1}$ 
   \ENDFOR
\end{algorithmic}
\end{algorithm}

\section{Robust Learning with Knowledge of Lipschitz Constant}\label{sec:Lipshcitz}
 In this section, we proceed under the simpler setting that $G \ge \max_t \|g_t\|$ is known a priori. We therefore will set $h_t=G$ for all iterations in the definition of $\tilde g_t^c$ (see Equation~(\ref{eqn: clip_general})).

\subsection{The Algorithm and Regret Guarantee}\label{sec:Lipshcitz_ub}
The quantity $\textsc{error}$ as defined in Equation (\ref{eqn: regret_general_composite}) can be upper bounded by: 
\begin{align*}
    \textsc{error} & \le  \left( \sum_{t\in \cB} \| g_t - \tilde g_t^c \|  +  \sum_{t\in \bar{\cB}} \| g_t - \tilde g_t^c \| \right)  \max_t \|w_t \|  \\
     & \le  \left( \sum_{t\in \cB} \| g_t - \tilde g_t \|  +  G | \bar{\cB} | \right)  \max_t \|w_t \| \\
    & \le  k G \max_t \| w_t \| 
\end{align*}
where $\cB$ is defined in Equation (\ref{eqn: assumption_big_rounds}), and $\bar{\cB} = [T] \setminus \cB$. The second line is due to $\|g_t - \tilde g_t^c \| \le \|g_t - \tilde g_t \| \le G,  \forall t \in \bar{\cB} $, because $h_t=G\ge \|g_t\|$. The last inequality is due to the corruption model presented in Equation (\ref{eqn:assumption_general}). This suggests that the worst case value for $\textsc{error}$ is at most $k G \max_t \| w_t \|$. This could potentially be exponential in $t$ as shown in Lemma 8 \citep{zhang2022parameter}. On the other hand, no matter which regularizer $r_t$ we choose, a worst case upper bound on $\textsc{bias}$ can be derived:
\begin{align*}
    \textsc{bias} & \le kG \|u\| + \sum_{t=1}^{T} r_t (u)
\end{align*}
To enable \textsc{correction} to cancel \textsc{error} while ensuring $\textsc{bias}$ does not grow too large, the ideal choice of regularizer $r_t$ should achieve the following bounds simultaneously:
\begin{align*}
    \sum_{t=1}^{T} r_t (w_t) \ge \tilde \Omega (kG \max_t \|w_t\|), \quad \sum_{t=1}^{T} r_t (u) \le \tilde O (kG \|u\|)
\end{align*}
To accomplish this, we choose $r_t (w) = f_t (w)$ as displayed in Equation (\ref{eqn: regularizer}), which belongs to family of Huber losses first proposed by \cite{zhang2022parameter}:
\begin{align*}
    & f_t(w; c, p, \alpha)  =  c \sigma_t(w; p, \alpha)  / S_t ^{1-1/p} \numberthis \label{eqn: regularizer}
\end{align*}
where 
\begin{equation}
S_t = \sum_{i=1}^t \|w_i\|^p + \alpha^p
\label{eqn:st-definition}
\end{equation}
and a piecewise function $\sigma_t$:
\begin{align*}
\sigma_t (w; p, \alpha) = \begin{cases} 
       \|w\|^{p},  &\|w\| \le \|w_t\|\\
       ( p \|w\| - (p -1) \|w_t\|   ) \| w_t \|^{p -1}, &\text{otherwise} \\
    \end{cases} \numberthis \label{eqn: reg_huber}
\end{align*}

This function behaves polynomially near $\|w_t\|$ and linear otherwise. By setting $c = kG, p = \ln T, \alpha = \epsilon / k$, $f_t$ achieves the desired properties. See a detailed discussion of the characteristics of this family of losses in \cite{zhang2022parameter}. For completeness we include a proof of the relevant properties in Lemma \ref{lem:regularizer}. Overall, these bounds allow $\textsc{error} - \textsc{correction} \le \tilde O(1)$ and $\textsc{bias} \le \tilde{O} (kG \|u\|)$. 

Armed with this peculiar regularization function, we now need an online learner that can control $R_T^{\cA}(u)$. This is similar to online learning with loss $\tilde \ell_t (w_t) := \langle \tilde g_t^c, w_t\rangle + r_{t} (w_t)$ at each round $t$. However, we cannot simply apply any standard OCO algorithm to these losses. The issue is that regularizer $r_t$ is $\Theta(k G)$ Lipschitz rather than $G$-Lipschitz. So, a naive approach would result in $R_T^{\cA}(u)=\tilde O(\|u\|k G\sqrt{T})$, but we want our final regret to be only $\tilde O(\|u\|G(\sqrt{T} + k))$. The key to avoid the multiplicative $k$-factor is to observe that $r_t$ is known ahead-of-time: it is a \emph{composite} term in the loss, and so ideally the regret should only depend on $\tilde g_t^c$. Unfortunately, composite losses are not as well-studied in the unconstrained online learning literature. Moreover, our composite loss is somewhat non-standard in that the shape of the function $r_t$ depends slightly on $w_t$. Nevertheless, we show that in fact the online mirror descent algorithm developed by \cite{jacobsen2022parameter} actually does guarantee composite regret in our setting. That is $R_T^{\cA}(u) \le \tilde{O} ( \|u\| G \sqrt{T})$ (see Theorem \ref{thm:omd_base}, and an explicit algorithmic update procedure in Algorithm \ref{alg:knownG}). By combining these ingredients, we are ready to state the overall regret bound, whose proof is deferred to Appendix \ref{app:meta_knownG}.
\begin{restatable}{Theorem}{knownGthm}
    \label{thm:knownG}
    Suppose $g_t, \tilde{g}_t$ satisfies assumptions in Equation (\ref{eqn: assumption_big_rounds}) and (\ref{eqn:assumption_general}). Setting $h_t = G, r_t = f_t$ as defined in Equation (\ref{eqn: regularizer}) with $c = kG, p =\ln T, \alpha = \epsilon/k$ for some $\epsilon > 0$, set Algorithm 
    \ref{alg:knownG} algorithm as the base algorithm $\cA$. Then Algorithm \ref{alg:general} 
    guarantees:
    \begin{align*}
        R_T(u)  \le  \tilde{O} \left[ \epsilon G + \|u\|G \left( \sqrt{T} + k \right) \right]
    \end{align*} 
\end{restatable}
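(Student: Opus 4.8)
The plan is to assemble Theorem~\ref{thm:knownG} from the decomposition in Equation~(\ref{eqn: regret_general_composite}) together with the three ingredients already developed in the text: the bound on \textsc{error}, the regularizer properties (Lemma~\ref{lem:regularizer}), and the composite-regret guarantee for the base learner $\cA$ (Theorem~\ref{thm:omd_base} / Algorithm~\ref{alg:knownG}). So I would start by writing
\begin{align*}
R_T(u) = R_T^{\cA}(u) + \textsc{error} - \textsc{correction} + \textsc{bias},
\end{align*}
and then handle each of the four pieces in turn, where crucially $r_t = f_t$ with $c = kG$, $p = \ln T$, $\alpha = \epsilon/k$.

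First I would dispatch \textsc{bias}: we already have $\textsc{bias} \le kG\|u\| + \sum_t f_t(u)$, so I only need the ``comparator'' bound on the regularizer from Lemma~\ref{lem:regularizer}, namely $\sum_{t=1}^T f_t(u) \le \tilde O(kG\|u\| + \epsilon G)$ (the $\epsilon G$ coming from the $\alpha$-offset term); hence $\textsc{bias} \le \tilde O(\epsilon G + kG\|u\|)$. Next, for \textsc{error} $-$ \textsc{correction}: the text already shows $\textsc{error} \le kG \max_t\|w_t\|$ using the clipping and the corruption model, and Lemma~\ref{lem:regularizer} should give the matching ``iterate'' lower bound $\textsc{correction} = \sum_t f_t(w_t) \ge \tilde\Omega(kG\max_t\|w_t\|) - \tilde O(1)$; subtracting gives $\textsc{error} - \textsc{correction} \le \tilde O(1)$ (more precisely $\tilde O(\epsilon G)$ or a polylog constant, which is absorbed). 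Finally, for $R_T^{\cA}(u)$: I invoke Theorem~\ref{thm:omd_base}, which says the mirror-descent learner of \citet{jacobsen2022parameter} run on losses $\langle \tilde g_t^c, w\rangle + r_t(w)$ with the composite term $r_t$ known ahead of time attains $R_T^{\cA}(u) \le \tilde O(\|u\|G\sqrt{T} + \epsilon G)$ — the key point being that the Lipschitz constant entering the $\sqrt{T}$ term is that of $\tilde g_t^c$, i.e.\ $G$, not $\Theta(kG)$, because $r_t$ enters compositely. Summing the three $\tilde O$ estimates yields $R_T(u) \le \tilde O[\epsilon G + \|u\|G(\sqrt{T}+k)]$.

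The only genuinely delicate point — and the one I expect to be the main obstacle — is verifying that Theorem~\ref{thm:omd_base} is actually applicable here, because our composite term $r_t = f_t$ is nonstandard: through $S_t$ and $\sigma_t$ its \emph{shape} depends on the current iterate $w_t$ (and on all past iterates), whereas classical composite-loss analyses assume $r_t$ is fully known before $w_t$ is played. So I would need to check that the stability/regularity conditions required by the Jacobsen--Cutkosky mirror-descent bound (monotonicity of $S_t$, convexity of $\sigma_t$ in $w$, the fact that $f_t(w_t)$ can still be computed using only information available when updating to $w_{t+1}$, and appropriate bounds on the ``jump'' $f_{t+1} - f_t$ along the iterate sequence) all hold for this family of Huber losses; this is presumably the content of Lemma~\ref{lem:regularizer} and the statement of Theorem~\ref{thm:omd_base}, and in the proof I would simply cite them, but it is where all the real work lives. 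Everything else is bookkeeping: collecting the polylogarithmic factors (the $p = \ln T$ choice turns $S_t^{1/p}$-type factors into $O(1)$ up to constants, and powers of $\ln T$ are swallowed by $\tilde O$), and confirming that the additive $\epsilon G$ terms from $\textsc{bias}$ and from $R_T^{\cA}$ combine into the single $\tilde O(\epsilon G)$ shown in the statement.
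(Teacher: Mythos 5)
Your proposal matches the paper's proof essentially step for step: the decomposition from Equation~(\ref{eqn: regret_general_composite}), the bound $\textsc{error} \le kG\max_t\|w_t\|$, the invocation of Lemma~\ref{lem:regularizer} with $c=kG,\,p=\ln T,\,\alpha=\epsilon/k$ to get $\textsc{offset}\le O(\epsilon G)$ and $\textsc{bias}\le \tilde O(kG\|u\|)$, and Theorem~\ref{thm:omd_base} for $R_T^{\cA}(u)\le\tilde O(\epsilon G+\|u\|G\sqrt T)$. One small misattribution: the additive $\epsilon G$ arises from the $c\alpha=\epsilon G$ offset in the \emph{lower} bound $\sum_t f_t(w_t)\ge kG(\max_t\|w_t\|-\epsilon/k)$, not from the upper bound on $\sum_t f_t(u)$ (which is purely $\tilde O(kG\|u\|)$); this does not affect the conclusion.
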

Theorem~\ref{thm:knownG} shows that the penalty for corrupted gradients is at most $\tilde O(\|u\|Gk)$. This result has a few intriguing properties. First, so long as $k\le \sqrt{T}$, the penalty is subasymptotic to the standard uncorrupted regret bound $\tilde O(\|u\|G\sqrt{T})$. That is, we can tolerate $k$ up to $\sqrt{T}$ essentially ``for free''. Next, observe that for $u=0$, the regret is $\epsilon$ no matter what $k$ is. Constant regret at the origin is typical for unconstrained algorithms, but is especially remarkable for our corrupted setting. Imagine a scenario in which we define $0$ to represent some ``default'' action. Our bound then suggests that \emph{no matter how much corruption is present}, we never do significantly worse than this default.


\subsection{Lower Bounds}\label{sec:lipschitz_lb} 

We present a lower bound in Theorem \ref{thm:corruptionLB} with proofs deferred in Appendix \ref{app:lower_bound}. This result shows that the upper bound of Theorem~\ref{thm:knownG} is tight with respect to some comparator $u^{\ast}$ with any pre-specified magnitude. In addition, we provide a second lower bound as Theorem \ref{thm:corruptionLB_addition} in Appendix \ref{app:lower_bound}, which has the matching log factor.
\begin{restatable}{Theorem}{corruptionLB}
    \label{thm:corruptionLB}
    For every $D>0$, there exists a comparator $u^{\ast} \in \R^d$ such that $\|u^{\ast} \| = D$, $\tilde{g}_1, \cdots, \tilde{g}_T$ and $g_1, \cdots, g_T$ such that $\|g_t\|, \|\tilde{g}_t\| \le 1$, $\sum_{t=1}^{T} \one \{ \tilde{g}_t  \neq g_t \} = k$:
    \begin{align*}
        \sum_{t=1}^{T} \langle g_t, w_t - u^{\ast} \rangle  \ge \Omega \left[ \| u^{\ast}\|\left(  \sqrt{T } + k \right)\right]
    \end{align*}
\end{restatable}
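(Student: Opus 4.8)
The plan is to establish the two terms of the lower bound separately and then combine them by a padding argument. The $\Omega(\|u^\ast\|\sqrt{T})$ piece is the classical unconstrained lower bound: over $T-k$ of the rounds we feed uncorrupted gradients $g_t = \tilde g_t$ drawn as $\pm e_1$ by independent fair coin flips (Rademacher signs), so that $\mathbb{E}[\sum_t \langle g_t, w_t\rangle] = 0$ for any algorithm while $\mathbb{E}[-\langle \sum_t g_t, u^\ast\rangle]$ can be made $\Omega(\|u^\ast\|\sqrt{T})$ by choosing $u^\ast$ after the fact to point against $\sum_t g_t$ (whose expected norm is $\Theta(\sqrt{T})$ by Khintchine). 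Formally one fixes the sign pattern achieving at least the expectation and sets $u^\ast = -D \,\mathrm{sign}(\sum_t g_t)\, e_1 / \|\cdot\|$ appropriately scaled to have norm exactly $D$; this is standard (e.g.\ the constructions underlying \citet{mcmahan2014unconstrained, orabona2016coin}).

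For the $\Omega(\|u^\ast\| k)$ piece, the plan is to exhibit a short adversarial phase of length $k$ in which the corruptions force the learner to suffer. Take all true gradients in this phase to be $g_t = e_1$ (a constant loss $\ell_t(w) = \langle e_1, w\rangle$, pushing the optimum toward $-\infty$ along $e_1$), so the comparator $u^\ast = -D e_1$ incurs true loss $-Dk$ in this phase and hence contributes $+Dk$ to the regret unless the learner also moves far in the $-e_1$ direction. But the learner only sees $\tilde g_t$, and we set $\tilde g_t = -e_1$ throughout this phase (valid since $\|\tilde g_t\| \le 1$ and this counts as $k$ corrupted rounds). An algorithm that does well against the observed feedback will be pulled toward $+e_1$; more carefully, since the learner's iterates during this phase are a deterministic function of the identical feedback $-e_1$ seen so far, its behavior is fixed, and one shows $\sum_{t} \langle e_1, w_t\rangle \ge -\text{(something small)}$ cannot simultaneously hold with low regret against the true loss — the learner cannot distinguish this phase from one in which the true gradients really were $-e_1$, in which case moving toward $+e_1$ would be correct. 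The cleanest way to make this rigorous is a two-world / indistinguishability argument: world A has true gradients $-e_1$ (comparator $+De_1$ wants the learner at $+\infty$); world B has true gradients $+e_1$ but corrupted feedback $-e_1$ (comparator $-De_1$). The feedback is identical in both worlds, so the iterates $w_t$ are identical; the regret in world A is $\sum_t \langle -e_1, w_t\rangle + D k \,\|e_1\|$-type and in world B is $\sum_t \langle e_1, w_t\rangle + Dk$, and these sum to $2Dk$, so one of them is at least $Dk = \|u^\ast\| k$.

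Finally, to get both terms at once in a single instance, concatenate: run the Rademacher phase of length $T-k$ (uncorrupted) followed by (or interleaved with) the length-$k$ corrupted phase above; regret is additive across phases, the comparator norms can be aligned (both phases use the $e_1$ axis, and one picks the global $u^\ast$ of norm $D$ balancing the two contributions, losing at most a constant factor), the Lipschitz bound $\|g_t\|,\|\tilde g_t\|\le 1$ holds throughout, and exactly $k$ rounds are corrupted. This yields $\sum_t \langle g_t, w_t - u^\ast\rangle \ge \Omega(D(\sqrt{T}+k)) = \Omega(\|u^\ast\|(\sqrt{T}+k))$.

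The main obstacle I expect is the corrupted-phase argument: one must be careful that the indistinguishability is genuine (the learner's output $w_t$ may depend on the clipping threshold or on $t$ itself, which is fine since those are the same in both worlds, but the argument breaks if the learner is allowed randomness correlated across worlds — so the standard fix is to state the bound for deterministic algorithms, or to condition on the random seed), and that combining the two phases does not let the learner "hedge" by exploiting the comparator being shared. Picking $u^\ast$ to lie along $e_1$ in both phases, and noting regret decomposes as a sum over the two disjoint blocks of rounds with the \emph{same} $u^\ast$, resolves this at the cost of only constant factors.
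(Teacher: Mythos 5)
Your two sub-arguments are each sound in isolation, and the two-world indistinguishability argument for the $\Omega(Dk)$ term is a reasonable alternative to what the paper does. However, the concatenation step --- which is where the real content of this theorem lives --- has a genuine gap, and ``one picks the global $u^*$ of norm $D$ balancing the two contributions, losing at most a constant factor'' does not close it.

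The tension is that the two sub-bounds require the sign of $u^*$ to be chosen post hoc in potentially conflicting ways: the Rademacher bound wants $u^* \propto -\sign\bigl(\sum_t g_t\bigr)$, a quantity the learner can compute by the end of the Rademacher block, while the two-world bound wants the sign of $u^*$ to be whichever world the algorithm fares worse in. Because you put the corrupted block \emph{last}, the learner's iterates there can depend on $\sign\bigl(\sum_t g_t\bigr)$, and a learner can exploit this to defeat your construction. Concretely, let the learner play $w_t = 0$ through the Rademacher block, set $Z = \sum_{t\le T-k}\langle e_1, g_t\rangle$, and then play $w_t = -M\,\sign(Z)\,e_1$ with $M = D|Z|/k$ during the corrupted block. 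Against your instance (comparator $u^*=sDe_1$, corrupted-block true gradient $g_t=-se_1$, feedback $-e_1$), one computes $R(s) = -sDZ + sM\,\sign(Z)\,k + Dk$, so
\[
\max_{s\in\{\pm1\}} R(s) = \bigl|\,D|Z| - Mk\,\bigr| + Dk = Dk
\]
for every realization of the Rademacher draw --- the $\sqrt{T}$ term has vanished entirely, not ``up to a constant factor.''

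The paper's construction avoids this by placing the $k$ corrupted rounds \emph{first} with zero feedback $\tilde g_t = 0$, so that $w_1,\dots,w_k$ are fixed before any Rademacher information is revealed and hence independent of $\sign\bigl(\sum_{i>k} z_i\bigr)$. It also does not use a two-world argument at all: it is a single averaging argument over the Rademacher draw, with the corrupted-round true gradients set to $g_t = \sign\bigl(\sum_{i>k}z_i\bigr)\,q$ and $u^*=-D\,\sign\bigl(\sum_{i>k}z_i\bigr)\,q$, so each corrupted round contributes a deterministic $+D$ to $-\langle g_t,u^*\rangle$ while the cross term $\E\bigl[\sum_{t\le k}\langle g_t,w_t\rangle\bigr]$ vanishes by symmetry of $\sign\bigl(\sum_{i>k}z_i\bigr)$, and the Rademacher rounds contribute $\Theta(D\sqrt{T-k})$ against the same $u^*$ via Lemma~\ref{lem:random_seq}. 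Moving your corrupted block to the front with zero feedback and replacing the two-world framing by this averaging step would make your outline correct.
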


\section{Robust Learning with Unknown Lipschitz Constant}\label{sec:non_lip}
In this section, we consider the scenario where $G \ge \max_t \|g_t\|$ is unknown. We first discuss the additional challenges must be addressed in Section \ref{sec:unknownG_chanllenge}, followed by intuition of selecting $h_t, r_t$. Finally, we discuss the choice of the base algorithm $\cA$ and the regret guarantee.

\subsection{Challenges with Unknown $G$}\label{sec:unknownG_chanllenge}
The combination of unconstrained domain, unknown $G$ and corrupted gradients provides a set of challenges that stymie current techniques in the literature. Let us unpack these challenges carefully, starting from where our analysis in Section~\ref{sec:Lipshcitz} breaks down.

Previously, we employed a clipping threshold $h_t=G$ to automatically filter out $\tilde g_t$ values that were in some sense ``obviously corrupted'', replacing them with values that were only corrupted by at most $G$. Then, we employed a technical regularization scheme by setting $r_t = f_t$ to ``cancel'' out these bounded corruptions. This strategy is no longer available: we do not know $G$.

A natural approach would be to maintain a time-varying threshold $h_t$ that estimates $G$ on-the-fly, and use it in place of the unknown constant $G$. However, this is harder than it seems because inevitably we will sometimes have $h_t< G$, and so we are likely to clip even an uncorrupted gradient: $\tilde g_t^c\ne g_t$ even when $\tilde g_t = g_t$. This means that unlike the known $G$-case,  our clipping operation is not purely benign. Nevertheless, methods in the literature for tackling the unknown-$G$ setting \emph{without corruptions} often use exactly this method with $h_t=\max_{i<t} \|g_i\|\le G$ \cite{cutkosky2019artificial, mhammedi2020lipschitz, cutkosky2024fully}.

Unfortunately, this does not work for our corrupted setting. The new difficulty is that we \emph{never know the value $G$, even in hindsight}. The choice $h_t=\max_{i<t} \|\tilde g_i\|$ is bad: a single corrupted gradient with $\|\tilde g_t\|\gg G$ will be catastrophic. Moreover, even if we were guaranteed that $\|\tilde g_t\| \le G$ for all $t$, it might be that $\| g_t \| \ge h_t$, so that truncation actually results in $\|g_t - \tilde g_t^c \| \ge \|g_t - \tilde g_t \|$. We will call this additional bias a ``truncation error''. As a result, we need to address a new challenge:
\begin{enumerate}
    \item How can we choose $h_t$ in a robust way that serves as a good estimate of $G$ without introducing too much truncation error?
\end{enumerate}
In addition, even if we could address challenge 1 and have access to $h_t \lesssim G$, the regularizer $r_t=f_t$ used for our known-$G$ algorithm \emph{also} required exact knowledge of the value for $G$. We rely on the scaling factor $c\ge O(kG)$ to be big enough to ``cancel'' the $\textsc{error} \le kG \max_t \| w_t \| $ as $\sum_{t} f_t (w_t) \ge \tilde O(c \max_t \| w_t \| )$. Therefore, simply setting $c=kh_t$ is not sufficient since $h_t$ values are likely smaller than $G$. 

In the known-$G$ case, the Huber regularizer $f_t$ almost perfectly cancels the error by imposing a penalty that is also roughly proportional to $\max_t \|w_t\|$. The challenge in the unknown $G$-case is that the proportionality constant of $kG$ is unknown, so we cannot perform this cancellation. Instead, we propose to use a regularizer $r_t(w)=O(\|w\|^2)$ that grows faster than the Huber regularizer $f_t$ as illustrated in Figure \ref{fig:functions}. This way, $kG \max_t \|w_t\|\le r_t(\max_t \|w_t\|)$ for large enough $\|w_t\|$ no matter what $G$ is. By scaling $r_t$ appropriately, we would like to ensure $\sum_t r_t (w_t) \ge \Omega(k \max_t \| w_t \|^2 )$. Then, as long as $\max_t \|w_t \| \ge G$, $\sum_t r_t (w_t)$ will completely cancel $\textsc{error}$. So overall, without prior knowledge of $G$, $\textsc{error} - \textsc{correction} \le O( kG^2 )$ is guaranteed. However, the quadratic growth of $r_t$ introduces another challenge:
\begin{enumerate}
    \item[2.] How can we exploit a more aggressive quadratic regularization without introducing too much regularization bias $\sum_{t} r_t (u)$ for arbitrary $u \in \cW$?
\end{enumerate}
In particular, notice that any \emph{constant} regularizer $r_t(u) = C \|w\|^2$ would add bias of $\sum_t r_t(u) = T C\|u\|^2$ to the regret, so that we would require $C=O(1/T)$. This in turn is too small to ensure $\sum_t r_t (w_t) \ge \Omega(k \max_t \| w_t \|^2 )$. Our solution will require a more nuanced time-varying regularizer.
\begin{figure}[h]
    \centering
    \begin{tikzpicture}
        \begin{axis}[
            axis lines=left,
            xlabel={$w$},
            xmin=-4, xmax=4,
            ymin=-0, ymax=10,
            grid=both,
            legend style={at={(0.645,0.68)}, anchor=south},
            axis line style={->},
            width=8cm, height=5cm,
            domain=-4:4,
        ]
            \pgfmathsetmacro{\a}{1}   
            \pgfmathsetmacro{\p}{3}   

            \addplot[
                thick, 
                solid,
            ] {abs(x) < \a ? abs(x)^\p : (\p * abs(x) - (\p - 1) * \a) * \a^(\p - 1)};
            \addlegendentry{$\sigma_t(w)$}
            
            \addplot[
                thick, 
                dashed,
            ] {x^2};
            \addlegendentry{$ |w|^2$}
        \end{axis}
    \end{tikzpicture}
    \caption{Comparison of Huber loss $f_t$ without scaling factor $\sigma_t$ with $p = 3, |w_t| = 1$, and $|w|^2$. $|w|^2$ always grow faster than $\sigma_t(w)$ far away from the origin.}
    \label{fig:functions}
\end{figure}
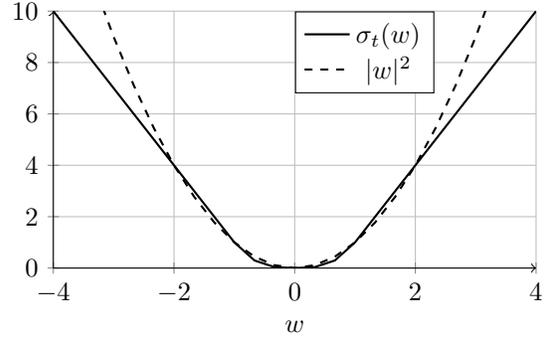

\subsection{Adaptive Thresholding and Error Correction}\label{sec:filter}



\paragraph{Challenge 1:} 

To meet the challenge of choosing $h_t$ properly, we introduce a ``tracking mechanism'' that conservatively increases $h_t$ over time. The key insight is that at most $k$ values of $t$ can have $\|\tilde{g}_t\| > 2G$ (See Lemma \ref{lem:num_big_corruption}). Based on this observation, we propose a simple way to maintain a ``threshold'' $h_t$ which provides a conservative lower bound estimate of $G$. The threshold $h_t$ starts with some small value $h_1= \tau_G > 0$ and stays constant until we observe $k+1$ iterations in which $\|\tilde g_t\| \ge h_t$. This mechanism is named as \textsc{filter} and is displayed as Algorithm \ref{alg:robust_lag} in Appendix \ref{app:filter}.

Notice that $h_t$ only doubles if it is guaranteed that some $g_t$ satisfies $h_t\le \|g_t\|$, so that $h_t \le O(G)$ always. Denote rounds where gradients are clipped as $\bar{\cP} = \{ t\in [T]: \tilde{g}_t \neq \tilde{g}_t^c \}$, that is $\bar{\cP}$ denotes rounds that have $h_t \le \|g_t\|$. The conditional doubling mechanism also ensures $|\bar{\cP}| \le \tilde{O} (k)$ (See Lemma \ref{lem:filter}). This means only a small fraction of $\tilde{g}_t$ are truncated before $h_t$ becomes a very accurate estimate of $G$. Hence, the total rounds which are vulnerable to suffering from truncation error is small. This \textsc{filter} strategy improves upon a method with a similar purpose in \citet{van2021robust}; it uses only constant space rather than $O(k)$ space. 

\paragraph{Challenge 2:}

As a consequence of using $h_t$ from \textsc{filter} as the clipping threshold, we can decompose the $\textsc{error}$ term defined in Equation (\ref{eqn: regret_general_composite}) by using $\tilde g_t  = \tilde g_t^c$ for $t \in \cP$:
\begin{align*}
         \textsc{error} \le  \underbrace{ \sum_{t \in \bar{\cP}} \| g_t - \tilde{g}_t^c \|  \|   w_t \| }_{E_{\bar \cP}: \text{truncation error}} +\underbrace{ \sum_{t \in \cP} \| g_t - \tilde{g}_t \|  \|   w_t \| }_{ E_{\cP}: \text{corruption error}}  \numberthis \label{eqn: regret_begin}
\end{align*}
Therefore, we must choose a regularizer $r_t$ to offset those errors. We discuss how each error component is treated and finally reveal the structure of $r_t$. 

Truncation error $E_{\bar \cP}$ can be reduced by keeping $\|w_t\|$ from being too large. Thus, every time we experience some truncation error, we would like to use some regularization to force the learner to decrease $\|w_t\|$. To this end, we use a quadratic regularization penalty $\alpha_t \|w_t\|^2$ where $\alpha_t\ne 0$ only when $t \in \bar \cP$ so as to only encourage $w_t$ to decrease only when we have ``evidence''  that truncation error has occurred. Then, for any round $t\in \bar \cP$ we bound the truncation error minus the quadratic regularizer $\alpha_t \|w_t\|^2$ as follows:
\begin{align*}
    \| g_t - \tilde g_t^c\|& \|w_t\| -\alpha_t \|w_t\|^2 \le\\
    & \sup_{X \ge 0} \| g_t - \tilde g_t^c\| X - \alpha_t X^2 = \frac{ \| g_t - \tilde g_t^c\|^2 }{4 \alpha_t}
\end{align*}
such a regularization scheme implies overall:
\begin{align*}
    E_{\bar \cP} - \sum_{t} \alpha_t \|w_t\|^2 \le \sum_{t \in \bar \cP} \frac{ \| g_t - \tilde g_t^c\|^2 }{4 \alpha_t} \le G^2 \sum_{t \in \bar \cP} \frac{1}{\alpha_t} 
\end{align*}
and additional regularization bias is $\|u\|^2 \sum_{t} \alpha_t$. This means the truncation error will be controlled as $O(kG^2)$, and the additional bias from this regularization will also be mild as $O(k\|u\|^2)$, if the following conditions hold simultaneously:
\begin{align*}
    \sum_{t \in \bar \cP} 1 / \alpha_t \le {O} (k), \quad \quad \quad \sum_{t \in [T]} \alpha_t \le {O}(k)
\end{align*}
Notice that $|\bar \cP| \le \tilde{O}(k)$ and that $h_{t+1} \neq h_t, \forall t \in \bar \cP$. Thus, the following $\alpha_t$ fulfills both requirements:
\begin{align}
    \alpha_t = \gamma_{\alpha} \cdot \one \{h_{t+1} \neq h_t \}
    \label{eqn: weight_alpha}
\end{align}
by setting $\gamma_{\alpha} = O(1)$. 

Next, for managing the corruption error $E_{\cP}$, one might attempt to take the same approach by imposing an additional quadratic regularizer $\beta_t \|w_t\|^2$. Similar calculation as used in the control of truncation error suggests that we can control the corruption error so long as $\sum_{t \in \cP} 1 / \beta_t \le {O} (k)$ and $\sum_{t} \beta_t \le {O}(k)$ holds simultaneously. Unfortunately, $|\cP|$ could potentially be as large as $\Omega(T)$, and so it is not possible to pick constant $\beta_t, \forall t \in \cP$ that satisfies these desired rates. 

Our remedy is to apply a milder regularization by combining the quadratic regularizer which is active only on a smaller subset $\cP_0 \subseteq \cP$ and the Huber regularization $f_t$ which is active at every round. To see how this will work, suppose we divide the $T$ rounds into $N$ ``epochs'' in which $\max_t \|w_t\|\in [2^N, 2^{N+1}]$. Specifically, we begin with some $z_1 = \tau_D > 0$. Whenever $\| w_t \| \ge z_t$, we set $z_{t+1} = 2 \|w_t\|$, and $z_{t+1} = z_t$ otherwise (formally summarized as \textsc{tracker} as Algorithm \ref{alg:tracker}, Appendix \ref{app:tracking}). We choose $\cP_0 = \{t: z_{t+1} \neq z_t \}$, that is time steps $\cP_0=\{\tau_1,\dots,\tau_N\}$ when threshold doubles. For any two consecutive time steps $\tau_n , \tau_{n+1} \in \cP_0$, we define $[\tau_n : \tau_{n+1} -1 ]$ as a single ``epoch'' for each $n \in N$. Notice that for $t \in [\tau_n,\tau_{n+1}-1]$, we must have $\|w_t\|\le 2\|w_{\tau_n}\|$. Further, the total corruption error incurred in one such ``epoch'' is:
\begin{align*}
    \sum_{t=\tau_n}^{\tau_{n+1}-1} \|g_t-\tilde g_t^c\|\|w_t\|&\le  2\|w_{\tau_n}\|\sum_{t=\tau_n}^{\tau_{n+1}-1} \|g_t-\tilde g_t^c\|\\
    &\le 2kG \|w_{\tau_n}\|
\end{align*}
So, applying a sufficiently large regularization on exactly the indices in $\cP_0$ will ideally be enough to cancel all of the corruption error. Further, notice that $|\cP_0|\le O(\ln(\max_t\|w_t\|))$. Thus, one might hope to apply an approach similar to the one used for truncation error: use a regularizer $\beta_t \|w\|^2$ with $\beta_t=0$ for $t\notin \cP_0$ and $\beta_t$ equal to some constant $\beta$ for $t\in \cP_0$. Unfortunately, this will yield a regularization bias of $\sum_{t \in \cP_0} \beta \|u\|^2 \le O(\beta \ln(\max_t \|w_t\|) \|u\|^2)$. While this seems benign, recall that actually $\max_t \|w_t\|$ may be \emph{exponential} in $T$, so that $\ln(\max_t \|w_t\|)$ is still polynomial in $T$. To resolve this, we gradually attenuate $\beta_t$ by choosing:

\begin{align}
     \beta_t = \gamma_\beta \cdot \frac{ \one\{z_{t+1} \neq z_t \} }{ 1 + \sum_{i=1}^{t}  \one\{z_{i+1} \neq z_i \} }  \label{eqn: weight_beta}
\end{align}
With this choice, the following are guaranteed: 
\begin{align*}
    \sum_{t \in \cP_0} 1 / \beta_t \le \tilde{O} \left( \frac{1}{k} \ln \max_t \|w_t\| \right)
\end{align*}
and 
\begin{align*}
     \sum_{t \in [T]} \beta_t = \tilde{O}(k \ln \ln \max_t \|w_t\|) = \tilde O (k)
\end{align*}
with $\gamma_{\beta} = O(k)$. The first identity guarantees that $E_{\cP} - \sum_t \beta_t \|w_t\|^2$ is at most $\tilde O( kG^2 \ln \max_t \|w_t\| )$, while the second ensures that the additional bias is only $\sum_{t=1}^T \beta_t \|u\|^2\le \tilde O (k \|u\|^2 )$. Now, we still need to handle the potentially-large $\tilde O( k G^2 \ln \max_t \|w_t\| )$ term. This remaining corruption bound can be controlled by adding a small multiple of the Huber regularizer $f_t$, which satisfies $\sum_{t} f_t(w) \ge \tilde O( kG^2 \ln \max_t \|w_t\| )$ even for $c\ll G$.

To summarize, our solution to solve Challenge 2 is to set regularizer as:
$$r_t(w) = f_t(w) + a_t \| w \| ^2$$
where $a_t = \alpha_t + \beta_t$. Overall, the sparsely applied quadratic regulaization and small Huber regularization at every round allow the following bounds as shown in Lemma \ref{lem:error_correction}:
\begin{align*}
    \textsc{error} - & \textsc{correction} \lesssim \\
    & \quad (k G)^2 \left( \sum_{t: \alpha_t>0} \frac{1}{\alpha_t}  +\sum_{t: \beta_t >0} \frac{1}{\beta_t} \right) - \sum_{t=1}^{T} f_t(w_t) 
\end{align*} and 
\begin{align*}
    \textsc{bias} \lesssim \|u\|^2 \left( \sum_{t=1}^{T} \alpha_t +  \sum_{t=1}^{T} \beta_t \right) + \sum_{t=1}^{T}  f_t(u) + kG \|u\| 
\end{align*}
with appropriately chosen constants, $\textsc{error} - \textsc{correction} \le \tilde O (kG^2) $ and $\textsc{bias} \le \tilde O (kG \|u\| + \|u\|^2)$ are achieved.

\subsection{Base Algorithm and the Regret}\label{sec:error_correction_unknownG}
With the regularizer $r_t$ chosen in the previous section, it remains to choose a learner $\cA$ which guarantees regret:
\begin{align*}
    R_T^{\cA} (u) & := \sum_{t=1}^{T} \langle \tilde{g}^{c}_t , w_t - u \rangle + f_{t}(w_t) - f_t(u)\\
    & \quad +  \sum_{t=1}^{T} a_t \left(  \|w_t\|^2 - \|u\|^2 \right)
\end{align*}
 However, the solution of Section \ref{sec:Lipshcitz_ub} of learning with composite loss $\tilde \ell_t(w) = \langle \tilde g_t^c, w_t \rangle + f_{t}(w) + a_t \|w\|^2 $ is no longer appropriate because it requires the composite term to be known in round $t$. Unfortunately, in this case the quadratic component depends on $a_t = \alpha_t+\beta_t$, which is unknown until $\tilde g_t^c$ is revealed. Due to the quadratic component, we employ the ``epigraph-based regularization'' technique recently developed by \citet{cutkosky2024fully} to construct $\cA$. Briefly, this technique projects convex optimization problems in $\cW = \R^d$ to an augmented space $\R^{d+1}$. The first $d$ coordinate of the solution in the augmented space is the decision variable $w_t$, and the extra coordinate is a technical device that is used to modify the loss in a way that makes it easier to control the quadratic penalty. We incorporate this technique in our set up to obtain $R_T^{\cA}(u) \le  \tilde O \left( \|u\| G \sqrt{T} + \|u\|^2 k \right)$ as provided in Theorem \ref{thm:final_step1}. We leave a brief summary of the technique in Appendix \ref{app:base_unknownG}.


Combining with the error correction parts from the previous section, we obtain a general regret guarantee as follows with free parameters $c, \gamma_{\alpha}, \gamma_{\beta}$ that specify the Huber regularizer and quadratic regularizers:
\begin{restatable}{Theorem}{unknownG}
    \label{thm:unknownG}
    Suppose $g_t, \tilde{g}_t$ satisfies assumptions in Equation (\ref{eqn: assumption_big_rounds}) and (\ref{eqn:assumption_general}). Setting $r_t(w) = f_t(w) + \alpha_t \|w\|^2$, where $f_t$ is defined in Equation (\ref{eqn: regularizer})
    with parameters: $\alpha = \epsilon \tau_G / c, p = \ln T, \gamma = \gamma_{\alpha} + \gamma_\beta$, for some $ \epsilon, c, \gamma_{\alpha}, \gamma_{\beta}, \tau_G, \tau_D >0$, set Algorithm 
    \ref{alg:unknownG} algorithm as the base algorithm $\cA$. Then Algorithm \ref{alg:general} 
    guarantees:
    \begin{align*}
        R_T(u) & \le \tilde O \Bigg[\epsilon [G]_{\tau_G} + k\tau_D  [G]_{\tau_G} + \|u\| [G]_{\tau_G}\left( \sqrt{T} + k \right) \\
        & \quad + c  \left( \|u\| + \tau_D \right) + \left( \gamma_{\alpha} (k + 1) + \gamma_{\beta} \right) \|u\|^2  \\
        & \quad +  \frac{(k+1) [G]_{\tau_G}^2 }{\gamma_{\alpha}} + \frac{k^2  [G]_{\tau_G}^2 }{\gamma_{\beta}} \Bigg]
    \end{align*}
\end{restatable}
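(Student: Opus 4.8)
The plan is to assemble Theorem~\ref{thm:unknownG} by plugging the three ingredients developed in Section~\ref{sec:non_lip} into the master decomposition of Equation~(\ref{eqn: regret_general_composite}): the bound on $\textsc{error}-\textsc{correction}$, the bound on $\textsc{bias}$, and the composite-regret bound $R_T^{\cA}(u)$ of Theorem~\ref{thm:final_step1}. First I would invoke the \textsc{filter} analysis (Lemma~\ref{lem:num_big_corruption} and Lemma~\ref{lem:filter}) to conclude $h_t \le O([G]_{\tau_G})$ for all $t$ and $|\bar{\cP}| \le \tilde O(k)$, and invoke the \textsc{tracker} analysis so that $|\cP_0| \le O(\ln \max_t \|w_t\|)$. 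Then I would substitute the specific weights $\alpha_t$ from Equation~(\ref{eqn: weight_alpha}) and $\beta_t$ from Equation~(\ref{eqn: weight_beta}) into the generic bounds of Lemma~\ref{lem:error_correction}, using $\sum_{t:\alpha_t>0} 1/\alpha_t = |\bar{\cP}|/\gamma_\alpha \le \tilde O(k/\gamma_\alpha)$, $\sum_t \alpha_t \le \tilde O(\gamma_\alpha (k+1))$, $\sum_{t:\beta_t>0} 1/\beta_t \le \tilde O(\ln\max_t\|w_t\|/\gamma_\beta)$ wait --- more precisely the harmonic-type sum that gives $\tilde O(k/\gamma_\beta)$ after accounting for the $k^2$ from $(kG)^2$ --- and $\sum_t \beta_t \le \tilde O(\gamma_\beta)$. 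This yields $\textsc{error}-\textsc{correction} \lesssim (k+1)[G]_{\tau_G}^2/\gamma_\alpha + k^2 [G]_{\tau_G}^2/\gamma_\beta$ plus a residual $\tilde O(k[G]_{\tau_G}^2 \ln\max_t\|w_t\|)$ term absorbed by $-\sum_t f_t(w_t)$ via the Huber lower bound $\sum_t f_t(w) \ge \tilde\Omega(c \max_t \|w\|)$ from Lemma~\ref{lem:regularizer} with $c = O(k[G]_{\tau_G})$ --- though here $c$ is a free parameter, so I must be careful about exactly which magnitude of $c$ makes the cancellation go through, and track the leftover $c(\|u\|+\tau_D)$ from $\textsc{bias}$.

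Next I would handle $\textsc{bias}$: from Equation~(\ref{eqn: regret_general_composite}), $\textsc{bias} \le k[G]_{\tau_G}\|u\| + \sum_t f_t(u) + \|u\|^2 \sum_t a_t$. The Huber bias $\sum_t f_t(u)$ is bounded by Lemma~\ref{lem:regularizer} as $\tilde O(c\|u\| + c\alpha)$; with $\alpha = \epsilon\tau_G/c$ this contributes $\tilde O(c\|u\| + \epsilon\tau_G)$, and the $\epsilon\tau_G$ piece should combine with the $\epsilon[G]_{\tau_G}$ term (using $\tau_G \le [G]_{\tau_G}$). The quadratic bias $\|u\|^2\sum_t a_t \le \|u\|^2(\tilde O(\gamma_\alpha(k+1)) + \tilde O(\gamma_\beta))$ gives the $(\gamma_\alpha(k+1)+\gamma_\beta)\|u\|^2$ term. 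Then I would add $R_T^{\cA}(u) \le \tilde O(\|u\|[G]_{\tau_G}\sqrt{T} + \|u\|^2 k)$ from Theorem~\ref{thm:final_step1} --- noting the clipped gradients $\tilde g_t^c$ have norm at most $h_t \le O([G]_{\tau_G})$ so that the base algorithm's Lipschitz input is $O([G]_{\tau_G})$, and that the $\|u\|^2 k$ term there merges with the quadratic-bias terms. Summing $R_T^{\cA}(u) + (\textsc{error}-\textsc{correction}) + \textsc{bias}$ and collecting like terms gives exactly the claimed seven-term bound, with the $k\tau_D[G]_{\tau_G}$ term arising from the epoch-boundary corruption error $2kG\|w_{\tau_1}\|$ at the first epoch where $\|w_{\tau_1}\| \approx \tau_D$ (or equivalently from the $\textsc{tracker}$ initialization), and the $c\tau_D$ term from the analogous Huber contribution at small scales.

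The main obstacle I anticipate is the bookkeeping around $\max_t \|w_t\|$: several intermediate bounds (the $k^2[G]_{\tau_G}^2 \ln\max_t\|w_t\|/\gamma_\beta$ residual, the $\sum_t\beta_t = \tilde O(k\ln\ln\max_t\|w_t\|)$ estimate, the Huber lower bound) all carry a factor polynomial or polylogarithmic in $\max_t\|w_t\|$, which itself can be exponential in $T$. I must verify that every such factor is either (i) a $\ln$ of an exponential, hence $\mathrm{poly}(T)$ but absorbed by a genuinely larger term, or (ii) a $\ln\ln$, hence truly $\tilde O(1)$, or (iii) cancelled outright by $-\sum_t f_t(w_t)$ through the Huber lower bound. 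This requires a self-consistent argument: one typically shows that if $\max_t\|w_t\|$ were too large, the regularizer penalty $\sum_t f_t(w_t) + \sum_t a_t\|w_t\|^2$ would force a contradiction with the composite-regret upper bound, thereby bootstrapping an \emph{a priori} bound on $\max_t\|w_t\|$ of the form $\exp(\tilde O(\mathrm{poly}(T)))$ that is nonetheless good enough. I would structure this as a short lemma (or cite the corresponding step in the proof of Theorem~\ref{thm:knownG}, which faces the same issue on a smaller scale) before assembling the final inequality. The remaining steps --- substituting weight sums, invoking $[G]_{\tau_G} \ge \tau_G$ and $[G]_{\tau_G} \ge \max_t\|g_t\|$, and collecting terms --- are routine.
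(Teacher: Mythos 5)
Your proposal is substantively correct and follows the same route as the paper: decompose $R_T(u)$ via Equation~(\ref{eqn: regret_general_composite}), bound $\textsc{error}-\textsc{correction}$ and $\textsc{bias}$ via Lemma~\ref{lem:error_correction}, bound $R_T^{\cA}(u)$ via Theorem~\ref{thm:final_step1}, and sum. The paper's actual proof is literally a one-liner citing those two results, and your term-by-term accounting matches where each summand in the final bound originates (the $k\tau_D[G]_{\tau_G}$ from the $\cT_0$ epoch, the $c\tau_D$ from the supremum in step~2 of the $\textsc{offset}_2$ analysis, etc.).

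The one place where you over-engineer is the anticipated ``bootstrapping'' obstacle around $\max_t\|w_t\|$. No self-consistency argument is needed: Theorem~\ref{thm:final_step1} already states, as part of its conclusion, that the iterates satisfy $\max_t\|w_t\|\le\frac{\epsilon}{2}2^T$. This is not derived by contradiction from the regret bound; it is an a priori property of any unconstrained learner with constant regret at the origin (Lemma~\ref{lem:unconstrained_iterate_growth}), inherited by the base algorithm. With this bound in hand, the two mechanisms the paper actually uses are: (a) the $\ln\max_t\|w_t\|$ factor in $\textsc{offset}_2$ is eliminated \emph{exactly} by taking a supremum against the Huber term $-c\max_t\|w_t\|$ (leaving a $T$-independent $\ln\frac{k^2[G]_{\tau_G}^2}{c\gamma_\beta\tau_D}$ residual), and (b) the $\sum_t\beta_t=\gamma_\beta\ln(1+N)$ factor becomes $\tilde O(\gamma_\beta)$ because $N\le O(\ln\max_t\|w_t\|)\le O(T)$ and hence $\ln(1+N)=O(\ln T)$. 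Your case~(i), a ``$\ln$ of an exponential absorbed by a genuinely larger term,'' does not actually occur --- a $\mathrm{poly}(T)$ factor cannot hide in a $\tilde O$; it is always either cancelled outright (your case~(iii)) or reduced to $\ln\ln$ (your case~(ii)). With that clarification the assembly is exactly as you describe.
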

Notice that although it appears possible to set $c=0$ in the above bound, there is a $\log(1/c)$ term hidden inside the $\tilde O$ that prevents excessively small $c$ values.

Next, we provide two different way to set $c, \gamma_\beta$. By setting both $c, \gamma_\beta = O(k)$ and $\tau_D = O(1/k)$, we obtain:
\begin{corollary}\label{thm:main_non_lip_case1}
    With $ c = k \tau_G, \gamma_\beta = k, \gamma_\alpha = 1, \tau_D = \epsilon/k $ and rest of parameters same as Theorem \ref{thm:unknownG},  Algorithm \ref{alg:general} guarantees a regret bound:
\begin{align*}
        R_T(u) & \le \tilde O \Bigg[\epsilon [G]_{\tau_G} + \|u\| [G]_{\tau_G}\left( \sqrt{T} + k \right) \\
        & \quad \quad +  (k+1) \left( \|u\|^2 + [G]_{\tau_G}^2 \right) \Bigg]
\end{align*}
where $[G]_{\tau_G} := \max(\tau_G, G)$.
\end{corollary}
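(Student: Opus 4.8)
The plan is to derive Corollary~\ref{thm:main_non_lip_case1} as a direct specialization of Theorem~\ref{thm:unknownG} by substituting the stated parameter choices $c = k\tau_G$, $\gamma_\beta = k$, $\gamma_\alpha = 1$, $\tau_D = \epsilon/k$ and then collecting like terms. First I would write out the seven bracketed summands of the Theorem~\ref{thm:unknownG} bound and track what each becomes under the substitution. The term $\epsilon[G]_{\tau_G}$ is untouched. The term $k\tau_D[G]_{\tau_G}$ becomes $k \cdot (\epsilon/k) \cdot [G]_{\tau_G} = \epsilon[G]_{\tau_G}$, which is absorbed into the first term. The term $\|u\|[G]_{\tau_G}(\sqrt T + k)$ is untouched. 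The term $c(\|u\| + \tau_D)$ becomes $k\tau_G\|u\| + k\tau_G \cdot \epsilon/k = k\tau_G\|u\| + \epsilon\tau_G$; here $k\tau_G\|u\| \le k[G]_{\tau_G}\|u\|$ is absorbed into the $\|u\|[G]_{\tau_G}(\sqrt T + k)$ term and $\epsilon\tau_G \le \epsilon[G]_{\tau_G}$ is absorbed into the first term.

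Next I would handle the remaining three summands. The term $(\gamma_\alpha(k+1) + \gamma_\beta)\|u\|^2$ becomes $((k+1) + k)\|u\|^2 \le 2(k+1)\|u\|^2 = O((k+1)\|u\|^2)$, matching the $(k+1)\|u\|^2$ part of the claimed bound. The term $(k+1)[G]_{\tau_G}^2/\gamma_\alpha$ becomes $(k+1)[G]_{\tau_G}^2$, matching the $(k+1)[G]_{\tau_G}^2$ part. The term $k^2[G]_{\tau_G}^2/\gamma_\beta$ becomes $k^2[G]_{\tau_G}^2/k = k[G]_{\tau_G}^2 \le (k+1)[G]_{\tau_G}^2$, also absorbed into the $(k+1)[G]_{\tau_G}^2$ term. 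Assembling everything, the surviving terms are $\epsilon[G]_{\tau_G}$, $\|u\|[G]_{\tau_G}(\sqrt T + k)$, and $(k+1)(\|u\|^2 + [G]_{\tau_G}^2)$, which is exactly the stated bound (all inside $\tilde O(\cdot)$, which already swallows the polylogarithmic factors carried through from Theorem~\ref{thm:unknownG}).

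I would also double-check that the parameter $\alpha = \epsilon\tau_G/c$ inherited from Theorem~\ref{thm:unknownG} is consistent: with $c = k\tau_G$ it becomes $\alpha = \epsilon/k$, which is finite and positive as required, and I would confirm $\gamma = \gamma_\alpha + \gamma_\beta = 1 + k$ is the quantity fed into the Huber regularizer, consistent with the theorem statement. Since every transformation above is just arithmetic substitution followed by coarsening $k\tau_G \le k[G]_{\tau_G}$, $\tau_G \le [G]_{\tau_G}$, and $k \le k+1$, there is no real mathematical obstacle — the only mild subtlety is making sure no term is dropped that should dominate (in particular verifying that the $c\|u\| = k\tau_G\|u\|$ contribution genuinely fits under $\|u\|[G]_{\tau_G}(\sqrt T + k)$ via $\tau_G \le [G]_{\tau_G}$, and that the hidden $\log(1/c)$ warned about after Theorem~\ref{thm:unknownG} is harmless here since $c = k\tau_G$ is not small). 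I expect the write-up to be a short half-page of bookkeeping rather than anything requiring a new idea.
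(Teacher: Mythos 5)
Your proof is correct and matches the paper's intended derivation: Corollary~\ref{thm:main_non_lip_case1} is obtained from Theorem~\ref{thm:unknownG} by direct substitution of $c = k\tau_G$, $\gamma_\beta = k$, $\gamma_\alpha = 1$, $\tau_D = \epsilon/k$ followed by the coarsenings $\tau_G \le [G]_{\tau_G}$ and $k \le k+1$, exactly as you lay out. Your closing check that $c = k\tau_G$ keeps the hidden $\log(1/c)$ factor under control (it becomes $\log(k[G]_{\tau_G}^2/(\tau_G\epsilon))$ after the substitution, still polylogarithmic) is the only point that merits even a sentence of care, and you handled it.
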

Just as in the known-$G$ case, the parameter settings in Corollary~\ref{thm:main_non_lip_case1} yield $\tilde O(\sqrt{T})$ regret so long as $k\le \sqrt{T}$ so that we can experience a significant amount of corruption without damaging the asymptotics of the regret bound. 

We can also achieve the desirable ``safety'' property of Theorem~\ref{thm:unknownG} in which the regret with respect to the baseline point $u=0$ is constant no matter what $k$ is via a different setting of the regularization parameters as provided in Corollary~\ref{thm:main_non_lip_case2}:
\begin{corollary}\label{thm:main_non_lip_case2}
    With $ c = \tau_G, \gamma_\beta = k^2, \gamma_\alpha = k+1, \tau_D = 1$ and rest of parameters same as Theorem \ref{thm:unknownG},  Algorithm \ref{alg:general} guarantees a regret bound:
    \begin{align*}
        R_T(u) & \le \tilde O \Bigg[\epsilon [G]_{\tau_G} + (k+1)  [G]_{\tau_G} + [G]_{\tau_G}^2 \\
        & \quad \quad + \|u\| [G]_{\tau_G}\left( \sqrt{T} + k + 1\right) + \| u\|^2 (k+1)^2 \Bigg]
    \end{align*}
    where $[G]_{\tau_G} := \max(\tau_G, G)$.
\end{corollary}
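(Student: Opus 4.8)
The plan is to derive Corollary~\ref{thm:main_non_lip_case2} directly from Theorem~\ref{thm:unknownG} by substituting the prescribed parameter values and simplifying. This is purely a book-keeping exercise: no new online-learning machinery is needed, only careful tracking of which terms dominate. First I would write out the seven summands of the bound in Theorem~\ref{thm:unknownG} verbatim, then plug in $c=\tau_G$, $\gamma_\beta=k^2$, $\gamma_\alpha=k+1$, $\tau_D=1$, keeping $\epsilon$, $\tau_G$ and $p=\ln T$ untouched, and $\alpha = \epsilon\tau_G/c = \epsilon$ (so the $\tilde O$-hidden $\log(1/c)$ and $\log(1/\alpha)$ factors are harmless constants once $\tau_G,\epsilon$ are fixed).

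Term-by-term this gives: the first term $\epsilon[G]_{\tau_G}$ is unchanged; the second term $k\tau_D[G]_{\tau_G}$ becomes $k[G]_{\tau_G}\le(k+1)[G]_{\tau_G}$; the third term $\|u\|[G]_{\tau_G}(\sqrt T+k)$ is kept as is (and absorbed into $\|u\|[G]_{\tau_G}(\sqrt T+k+1)$); the fourth term $c(\|u\|+\tau_D)=\tau_G(\|u\|+1)$, of which $\tau_G\|u\|\le \|u\|[G]_{\tau_G}$ folds into the third term and $\tau_G\le[G]_{\tau_G}\le[G]_{\tau_G}^2$ (using $[G]_{\tau_G}\ge 1$ if $\tau_G\ge1$, or more simply it is an additive constant absorbed by the $\epsilon[G]_{\tau_G}$ slack) is negligible; the fifth term $(\gamma_\alpha(k+1)+\gamma_\beta)\|u\|^2 = ((k+1)^2 + k^2)\|u\|^2 = \tilde O(\|u\|^2(k+1)^2)$; the sixth term $(k+1)[G]_{\tau_G}^2/\gamma_\alpha = (k+1)[G]_{\tau_G}^2/(k+1) = [G]_{\tau_G}^2$; the seventh term $k^2[G]_{\tau_G}^2/\gamma_\beta = k^2[G]_{\tau_G}^2/k^2 = [G]_{\tau_G}^2$. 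Collecting everything yields exactly
\[
R_T(u)\le \tilde O\Big[\epsilon[G]_{\tau_G} + (k+1)[G]_{\tau_G} + [G]_{\tau_G}^2 + \|u\|[G]_{\tau_G}(\sqrt T + k + 1) + \|u\|^2(k+1)^2\Big],
\]
which is the claimed bound.

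The only mildly delicate point — and the one I would treat most carefully — is confirming that the $\tilde O$ in Theorem~\ref{thm:unknownG} genuinely swallows the polylogarithmic factors after this substitution, in particular that the $\log(1/c)=\log(1/\tau_G)$ remark at the end of Theorem~\ref{thm:unknownG} does not blow up: since $\tau_G$ is a fixed positive constant (not shrinking with $T$), $\log(1/\tau_G)=O(1)$ and causes no trouble, and likewise any $\ln\ln\max_t\|w_t\| = \tilde O(1)$ and $p=\ln T$ factors are already inside $\tilde O$. A second small check is the inequality $\tau_G\le[G]_{\tau_G}$ (immediate from the definition $[G]_{\tau_G}=\max(\tau_G,G)$), which justifies absorbing the stray $c\tau_D=\tau_G$ term; if one wants to avoid even assuming $[G]_{\tau_G}\ge 1$, one notes $\tau_G = c\le c(\|u\|+\tau_D)$ is literally one of the listed terms, so it is already accounted for. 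No step here is a genuine obstacle; the "hard part," such as it is, is simply the discipline of not dropping a factor of $k$ versus $k+1$ when matching the corollary statement exactly.
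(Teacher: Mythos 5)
Your proof is correct and follows the paper's (implicit) approach: the corollary is derived by direct substitution of $c=\tau_G$, $\gamma_\beta=k^2$, $\gamma_\alpha=k+1$, $\tau_D=1$ into Theorem~\ref{thm:unknownG} and collecting terms, which is exactly what your term-by-term accounting does. One small simplification: you do not need the assumption $[G]_{\tau_G}\ge 1$ to absorb the stray $c\tau_D=\tau_G$ term, since $\tau_G\le[G]_{\tau_G}\le(k+1)[G]_{\tau_G}$ holds unconditionally from the definition $[G]_{\tau_G}=\max(\tau_G,G)$, and $(k+1)[G]_{\tau_G}$ already appears in the target bound.
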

However, in this case we now pay a larger penalty for $u\ne 0$ that scales with $k^2$ rather than $k$. This is a natural tradeoff: we ensured constant regret at the origin by increasing the regularization away from the origin. The overall regularization is stronger, which encourages smaller $\|w_t\|$. Thus, this algorithm configuration is likely to be advantageous when competing with small $\|u\|$.

\section{Conclusion}\label{sec:conclusion}
In this paper, we considered unconstrained online convex optimization that only have access to potentially corrupted gradients $\tilde{g}_t$ instead of the true gradient $g_t$, in which the corruption level is measured by $k$. In the case that $G \ge \max_t \|g_t\|$ is known, we provide an algorithm that achieves the optimal regret guarantee $ \|u\|G (\sqrt{T} + k) $. When $G$ is unknown it incur an extra additive penalty of $(\|u\|^2+G^2) k$. While the $\|u\|^2+G^2$ is optimal without corruption \citep{cutkosky2024fully}, it is unclear whether the multiplicative dependence on $k$ is optimal in the presence of corruption.

\section*{Impact Statement}
This paper advances the theoretical understanding of online convex optimization, contributing to the mathematical foundations of machine learning. As it does not involve deployable systems or datasets, the broader societal and ethical implications are indirect, and no specific concerns need to be highlighted.

\bibliography{references}
\bibliographystyle{icml2025}

\newpage
\appendix
\onecolumn
\section{Unconstrained Online Convex Optimization with Hints}\label{app:hints}
In the unconstrained setting, there are algorithms requires a uniform bound $G \ge \max_t \|g_t\|$ upfront which guarantees $\tilde{O}(\|u\|G\sqrt{T})$ \cite{mcmahan2014unconstrained,orabona2016coin,cutkosky2018black,zhang2022pde}. In the case where $G$ is unknown, algorithms are usually devised through an intermediate step with a slightly ideal scenario, that is the algorithm receives a gradient $g_t$ with a ``hints'' $h_{t+1} = \max_{i\le t+1} \| g_i \|$ at each iteration $t$. It turns out by having access to $h_t$ to guide the algorithm, same regret $\tilde{O}(\|u\|h_T\sqrt{T})$ can be achieved \cite{cutkosky2019artificial, mhammedi2020lipschitz, jacobsen2022parameter,zhang2024improving}.

In this paper, we also follows the same strategy of assuming a good hints $h_t = \max_{i \le t} \| g_t \|$ is supplied to the algorithm, and eventually investigate the scenario of only the current best estimate $h_t \approx \max_{i\le t-1} \| g_t \|$ is available. Hence most of the proofs in the appendix are displayed in the way of relying on a time varying ``hints'': $0 < h_1 \le \cdots h_T \le h_{T+1}$ to accommodate the design of both known $G$ and unknown $G$ case. 

\section{Bounds on Regularizer in Equation (\ref{eqn: regularizer})}\label{app:regularizer}
Our results are based on appropriate algebraic property of $f_t$ as displayed in Equation (\ref{eqn: regularizer}) and was firstly studied by \citet{zhang2022parameter}. We re-stated relevant bounds as Lemma \ref{lem:regularizer} for completeness. 

\begin{lemma}[Lemma 11 and Lemma 13 of \citet{zhang2022parameter}]\label{lem:regularizer}
Let $f_t: \mathcal{W} \rightarrow \R^{+} $ be defined as follows for some $c \ge 0, \alpha > 0$ and $p \ge 1$, 
\begin{align*}
    f_t(w; c, p, \alpha) = \begin{cases}
        c(p \|w \| - (p-1) \|w_t\|   ) \frac{\|w_t \|^{p-1}}{(\sum_{i=1}^{t} \|w_i \|^p + \alpha^p)^{1-1/p} }, & \|w\| > \|w_t\| \\
        c\|w\|^p \frac{1}{(\sum_{i=1}^{t} \|w_i\|^p + \alpha^p)^{1-1/p} }, & \|w\| \le \|w_t\|
    \end{cases}
\end{align*}
Then 
\begin{align*}
    \sum_{t = 1}^{T} f_t(w_t) &\ge c \left(  \left( \sum_{t=1}^{T} \|w_t\|^p + \alpha^p\right)^{1/p} - \alpha \right) \\
    \sum_{t = 1}^{T} f_t(u) &\le c p\|u\| T^{1/p} \left[  \ln \left(  1 + \left( \frac{\|u\|}{\alpha} \right)^p \right)^{(p-1)/ p } + 1 \right]
\end{align*}
In particular, when $ p =\ln T$ for $ T \ge 3$:
\begin{align*}
    \sum_{t = 1}^{T} f_t(w_t) &\ge c \left(  \max_t \|w_t\| - \alpha \right) \\
    \sum_{t = 1}^{T} f_t(u) &\le 3 c \ln T \|u\| \left[  \ln \left(  1 + \left( \frac{\|u\|}{\alpha} \right)^p \right) + 2 \right]
\end{align*}
\end{lemma}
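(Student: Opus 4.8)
The plan is to prove the two displayed inequalities separately: the first (the ``self-bound'' $\sum_t f_t(w_t)\ge\dots$) by a telescoping argument, and the second (the comparator bound $\sum_t f_t(u)\le\dots$) by combining a pointwise estimate of $\sigma_t$ with H\"older's inequality and an AdaGrad-style logarithmic sum. The two specializations to $p=\ln T$ then follow by elementary simplification.

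\textbf{Lower bound on $\sum_t f_t(w_t)$.} First I would note that the two branches of $f_t$ agree at $w=w_t$, so $f_t(w_t)=c\|w_t\|^p/S_t^{1-1/p}$, and that $\|w_t\|^p=S_t-S_{t-1}$ with the convention $S_0:=\alpha^p$. The key per-round inequality is
\begin{align*}
    \frac{S_t-S_{t-1}}{S_t^{1-1/p}}\ \ge\ S_t^{1/p}-S_{t-1}^{1/p},
\end{align*}
which one checks by multiplying through by $S_t^{1-1/p}>0$ and reducing to $S_{t-1}^{1/p}S_t^{1-1/p}\ge S_{t-1}$, i.e.\ to $S_t^{1-1/p}\ge S_{t-1}^{1-1/p}$ --- true since $S_t\ge S_{t-1}$ and $1-1/p\ge 0$. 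Summing over $t$ telescopes to $\sum_{t=1}^T f_t(w_t)\ge c\bigl(S_T^{1/p}-\alpha\bigr)$, which is exactly the claimed bound. For $p=\ln T$ we then use $S_T^{1/p}=\bigl(\sum_t\|w_t\|^p+\alpha^p\bigr)^{1/p}\ge(\max_t\|w_t\|^p)^{1/p}=\max_t\|w_t\|$, since all summands are nonnegative.

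\textbf{Upper bound on $\sum_t f_t(u)$.} I would first record the pointwise bound $\sigma_t(u)\le p\|u\|\,\min(\|u\|,\|w_t\|)^{p-1}$: when $\|u\|\le\|w_t\|$ the left side equals $\|u\|^p=\|u\|\cdot\|u\|^{p-1}\le p\|u\|\min(\|u\|,\|w_t\|)^{p-1}$, and when $\|u\|>\|w_t\|$ it equals $(p\|u\|-(p-1)\|w_t\|)\|w_t\|^{p-1}\le p\|u\|\|w_t\|^{p-1}=p\|u\|\min(\|u\|,\|w_t\|)^{p-1}$. Writing $m_t:=\min(\|u\|,\|w_t\|)$ this gives $f_t(u)\le c p\|u\|\,m_t^{p-1}/S_t^{1-1/p}$, and since $S_t\ge \alpha^p+\sum_{i\le t}m_i^p=:\tilde S_t$ we may lower-bound the denominator by $\tilde S_t$. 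Next I would apply H\"older's inequality with exponents $p/(p-1)$ and $p$ to $\sum_t\bigl(m_t^p/\tilde S_t\bigr)^{(p-1)/p}\cdot 1$, producing a factor $T^{1/p}$ times $\bigl(\sum_t m_t^p/\tilde S_t\bigr)^{(p-1)/p}$. The residual sum is of AdaGrad type: since $m_t^p=\tilde S_t-\tilde S_{t-1}$ and $\tilde S_0=\alpha^p$, an integral/telescoping comparison yields $\sum_t(\tilde S_t-\tilde S_{t-1})/\tilde S_t\le\ln(\tilde S_T/\alpha^p)=\ln\bigl(1+\alpha^{-p}\sum_t m_t^p\bigr)$, and $m_t\le\|u\|$ turns this into a logarithm of the stated form (up to $T$-dependence that is folded through the $T^{1/p}$ factor and into the constants when $p=\ln T$). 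Assembling these pieces gives the general bound; for $p=\ln T$ one then uses $T^{1/p}=e\le 3$, subadditivity of $x\mapsto x^{(p-1)/p}$ on $\mathbb{R}^+$, and $x^{(p-1)/p}\le x+1$ to reach the stated simplified form.

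\textbf{Main obstacle.} The lower bound is routine once the right telescoping inequality is isolated. The delicate part is the constant- and logarithmic-factor bookkeeping in the upper bound --- aligning the H\"older split with the harmonic-type (AdaGrad) sum, and confirming that the $\ln(1+(\|u\|/\alpha)^p)$ factor emerges after the $p=\ln T$ specialization rather than something carrying a stray $\ln T$ or $\ln\max_t\|w_t\|$. These are precisely the computations of Lemmas 11 and 13 of \citet{zhang2022parameter} that we restate here; the argument sketched above is self-contained modulo standard tools (H\"older's inequality, the elementary power inequalities, and the integral bound on a harmonic-type sum).
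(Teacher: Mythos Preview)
Your telescoping argument for the lower bound and both $p=\ln T$ simplifications are correct and match the paper exactly: the paper's own proof simply cites \citet{zhang2022parameter}, Lemma~13, for the general-$p$ inequalities and then applies precisely the steps you list ($S_T^{1/p}\ge\max_t\|w_t\|$, $x^{(p-1)/p}\le x+1$, and $T^{1/\ln T}=e\le 3$).

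The gap is in the upper bound for general $p$, at exactly the point you yourself flag as ``delicate.'' Your H\"older/AdaGrad route gives
\[
\sum_{t=1}^T f_t(u)\ \le\ c\,p\,\|u\|\,T^{1/p}\Bigl[\ln\bigl(1+\alpha^{-p}\textstyle\sum_t m_t^p\bigr)\Bigr]^{(p-1)/p},
\]
and the only available control on $\sum_t m_t^p$ is $\le T\|u\|^p$, which places a factor of $T$ \emph{inside} the logarithm. Splitting $\ln(1+Tx)\le\ln T+\ln(1+x)$ and using subadditivity of $z\mapsto z^{(p-1)/p}$ leaves an additive $(\ln T)^{(p-1)/p}$ in the bracket, not the stated ``$+1$''; for $p=\ln T$ this residual is of order $\ln T$, so after $x^{(p-1)/p}\le x+1$ you arrive at $3c\ln T\,\|u\|\bigl[\ln(1+(\|u\|/\alpha)^p)+O(\ln T)\bigr]$ rather than $+2$. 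A clean test case is $\|u\|=\alpha$: the target bracket equals $(\ln 2)^{(p-1)/p}+1=O(1)$, whereas your bound gives $[\ln(1+T)]^{(p-1)/p}$, which is $\Theta(\ln T)$ when $p=\ln T$. So the $T$-dependence does \emph{not} fold into the $T^{1/p}$ factor or the additive constant as you suggest. This extra $\ln T$ is harmless for the paper's downstream $\tilde O(\cdot)$ statements, but it does not establish the lemma as stated; getting the sharper constant requires the finer per-term argument of \citet{zhang2022parameter}, Lemma~13, rather than the global H\"older step.
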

\begin{proof}
    The first set of bounds are the same as \citet{zhang2022parameter} Lemma 13. For the second set of bounds: the lower bound is due to $  \left( \sum_{t=1}^{T} \|w_t\|^p + \alpha^p\right)^{1/p} \ge  \left( \sum_{t=1}^{T} \|w_t\|^p \right)^{1/p} $ followed by an application of of Lemma 11 in \citet{zhang2022parameter}; the upper bound is due to $ x^q \le x + 1 $ for $x >0 $ and $0 <q < 1$, where we set $x =  \ln \left(  1 + \left( \frac{\|u\|}{\alpha} \right)^p \right)$ and $q = (p-1)/ p$ followed by $T^{1/\ln T} = e \le 3$.
\end{proof}

    
\section{Base Algorithms for known G}\label{app:base_knownG}
In this section, we verify the ``centered mirror descent'' framework (see their Algorithm 1 and we contextualize it as Algorithm \ref{alg:knownG} here) developed by \cite{jacobsen2022parameter} automatically guarantee a composite regret $R_T^{\cA} (u)$ ``for free'' thus is a compatible candidate in achieving robust unconstrained learning. 

Algorithm \ref{alg:knownG} is an explicit update with two regularizers $\psi_t(w)$:
\begin{align}
    \psi_t(w) = 3 \int_0^{\|w\|} \Psi_t'(x) dx, \quad \Psi'_t(x) = \min_{\eta \le 1/ h_t} \left[ \frac{\ln (x /a_t + 1) }{ \eta} + \eta V_t \right] \label{eqn: MD_regularizer}
\end{align}
and some $f_{t}(w)$ (In their notation as $\varphi_t (w)$ ). That is 
\begin{align}
    w_{t+1} = \argmin_{w} \left\langle  g_t, w \right\rangle + D_{\psi_t}(w \mid w_t) +\Delta_t(w) + f_{t} (w) \label{eqn: omd_update}
\end{align}
where $D_{\psi}(a \mid b) = \psi(a) - \psi(b) - \langle \nabla \psi(b), a- b \rangle $ is the Bregman divergence induced by $\psi$, and $\Delta_t(w) = D_{\psi_{t+1}} (w \mid w_1) - D_{\psi_{t}} (w \mid w_1)$.

Notice the first two terms in Equation (\ref{eqn: omd_update}) corresponds to the classical mirror descent with mirror map $\psi_t$, the third term $\Delta_t(w)$ encourages iterate to be close to the initial start $w_1$, thus echoing ``centered'' mirror descent. and $f_t$ being any arbitrary composite regularizer and its structure is completely known in time in order to produce $w_{t+1}$.   shows the above update with $f_{t} = 0$ ($\varphi_{t}$ in their notation) guarantees $R_T(u) \le \tilde{O}(\|u\| h_T \sqrt{T}) $ (Theorem 6), which is optimal. The second composite regularizer $f_{t}$ was specifically tailored as $f_{t} (w) = O( \|g_t\|^2 \|w\|)$ in order to attain optimal dynamic regret (See their Proposition 1).

We verify $R_T^{\cA}(u) \le \tilde{O}(\|u\| h_T \sqrt{T})$ when $f_t$ as defined in Equation (\ref{eqn: regularizer}) for our purpose, which is formalized as Theorem \ref{thm:omd_base}. In fact, the development of \cite{jacobsen2022parameter} attains the same bound for any arbitrary composite regularizer $f_{t}$ for user to exploit. We first present a helper Lemma that is equivalent to the update in Equation (\ref{eqn: omd_update}) before the theorem:
\begin{lemma}\label{lem:omd_update_two_step}
Equation (\ref{eqn: omd_update}) is equivalent to 
    \begin{align}
    \tilde{w}_{t+1} &= \argmin_w \langle  g_t , w \rangle + D_{\psi_t} (w \mid w_t )
    \label{eqn: OMD_1} \\
    w_{t+1} &= \argmin_w  D_{\psi_t} (w \mid \tilde w_{t+1} ) + \Delta_t (w) + f_{t+1} (w) \label{eqn: OMD_2}
\end{align}
\end{lemma}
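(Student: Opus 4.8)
The plan is to show the two-step update is equivalent to the one-step update by writing out the first-order optimality conditions for each and checking they coincide. First I would write the optimality condition for the one-step update in Equation (\ref{eqn: omd_update}): since the objective is convex, $w_{t+1}$ is the minimizer iff there exists a subgradient $\phi_{t+1} \in \partial f_t(w_{t+1})$ with
\begin{align*}
    0 = g_t + \nabla\psi_t(w_{t+1}) - \nabla\psi_t(w_t) + \nabla\psi_{t+1}(w_{t+1}) - \nabla\psi_t(w_{t+1}) + \phi_{t+1},
\end{align*}
where I used $\nabla \Delta_t(w) = \nabla\psi_{t+1}(w) - \nabla\psi_t(w)$ (the terms involving $\nabla\psi_{t+1}(w_1), \nabla\psi_t(w_1)$ are constants and drop out of the gradient). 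This simplifies to $0 = g_t - \nabla\psi_t(w_t) + \nabla\psi_{t+1}(w_{t+1}) + \phi_{t+1}$.

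Next I would handle the two-step update. From Equation (\ref{eqn: OMD_1}), the intermediate point $\tilde w_{t+1}$ satisfies $0 = g_t + \nabla\psi_t(\tilde w_{t+1}) - \nabla\psi_t(w_t)$, i.e. $\nabla\psi_t(\tilde w_{t+1}) = \nabla\psi_t(w_t) - g_t$. Then from Equation (\ref{eqn: OMD_2}), $w_{t+1}$ satisfies (for some $\phi_{t+1}\in\partial f_{t+1}(w_{t+1})$ — here I note the index must read $f_{t+1}$ consistently with the statement, and I would flag that the regularizer active when producing $w_{t+1}$ is the one indexed so its structure is known)
\begin{align*}
    0 = \nabla\psi_t(w_{t+1}) - \nabla\psi_t(\tilde w_{t+1}) + \nabla\psi_{t+1}(w_{t+1}) - \nabla\psi_t(w_{t+1}) + \phi_{t+1}.
\end{align*}
Substituting $\nabla\psi_t(\tilde w_{t+1}) = \nabla\psi_t(w_t) - g_t$ cancels the $\nabla\psi_t(w_{t+1})$ terms and yields $0 = g_t - \nabla\psi_t(w_t) + \nabla\psi_{t+1}(w_{t+1}) + \phi_{t+1}$, which is exactly the optimality condition derived for the one-step update. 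Since both objectives are convex (the $\psi_t$ are convex by construction in Equation (\ref{eqn: MD_regularizer}), Bregman divergences are convex in their first argument, and $f_t$ is convex), these first-order conditions are both necessary and sufficient, so the unique minimizers agree.

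The main obstacle I anticipate is bookkeeping around well-definedness rather than any deep difficulty: one must check that the minimizers exist and are unique (coercivity of $\psi_t$, so that each $\argmin$ is attained at a single point), that $\tilde w_{t+1}$ lies in the domain where $\nabla\psi_t$ is invertible so the substitution step is legitimate, and — the subtlest point — that the subgradient of the composite term $f_t$ (whose shape depends mildly on $w_t$ through $\sigma_t$, cf. Equation (\ref{eqn: reg_huber})) is the \emph{same} object in both formulations, which it is because in both cases $w_t$ (and hence the frozen shape of $f_t$) is fixed before the update. I would also double-check the index convention ($f_t$ versus $f_{t+1}$) matches the convention used elsewhere in the appendix, and state the equivalence cleanly once the optimality conditions are shown to match.
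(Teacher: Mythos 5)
Your proof is correct and takes the same route as the paper's: first-order optimality for (\ref{eqn: OMD_1}) to get $\nabla\psi_t(\tilde w_{t+1}) = \nabla\psi_t(w_t) - g_t$, then substitution into (\ref{eqn: OMD_2}), only spelled out more completely than the paper (which stops at ``substitute and see the equivalence''). Two small notes: the terms $\nabla\psi_{t+1}(w_1)$ and $\nabla\psi_t(w_1)$ in $\nabla\Delta_t$ do not literally ``drop out of the gradient'' --- they are nonzero constant vectors in general, but they appear identically in both optimality conditions and hence cancel in the comparison (and they also happen to vanish here because $w_1=0$ and $\nabla\psi_t(0)=0$); and your observation that (\ref{eqn: omd_update}) should read $f_{t+1}$ rather than $f_t$ to be consistent with (\ref{eqn: OMD_2}) and with the proof of Theorem \ref{thm:omd_base} is a genuine typo in the paper.
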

\begin{proof}
By first order optimality, Equation (\ref{eqn: OMD_1}) implies
    \begin{align*}
        g_t + \nabla \psi_t (\tilde w_{t+1})  - \nabla \psi_t (w_t) = 0  
    \end{align*}
    Thus
    \begin{align*}
        \tilde w_{t + 1} = \nabla \psi_{t}^{-1} \left( \nabla \psi_t (w_t)- g_t \right)
    \end{align*}
    Substitute $\tilde w_{t+1}$ into Equation (\ref{eqn: OMD_2}), we see the equivalence.
\end{proof}

\begin{Theorem}\label{thm:omd_base}
    Suppose $0 < h_1 \le h_2 \le \cdots \le h_{T}$ and  $g_1, \cdots, g_T$ be arbitrary sequence satisfies $\| g_t \| \le h_t$ for all $t \in [T]$. Then Algorithm \ref{alg:knownG} guarantees
    \begin{align*}
        R_T^{\cA}(u):= \sum_{t=1}^{T} \langle g_t, w_t - u \rangle + r_t (w_t) - r_t(u) & \le \tilde{O} \left( \epsilon h_T + \| u\| h_T \sqrt{T} \right) 
    \end{align*}
    where $r_t$ is defined as Equation (\ref{eqn: regularizer}) for arbitrary $c, p, \alpha > 0$.
\end{Theorem}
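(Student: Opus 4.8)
\textbf{Proof plan for Theorem~\ref{thm:omd_base}.}
The plan is to verify that the regret analysis of the centered mirror descent algorithm of \citet{jacobsen2022parameter} goes through essentially unchanged when the composite regularizer is taken to be $f_t$ from Equation~(\ref{eqn: regularizer}) rather than the linear-in-$\|w\|$ regularizer they originally used. First I would recall the standard one-step inequality for mirror descent with a composite term: using the two-step reformulation of Lemma~\ref{lem:omd_update_two_step}, for any $u\in\cW$ one obtains
\begin{align*}
\langle g_t, w_t - u\rangle + f_t(w_t) - f_t(u) \le D_{\psi_t}(u\mid w_t) - D_{\psi_t}(u\mid w_{t+1}) - D_{\psi_t}(w_{t+1}\mid w_t) + \langle g_t, w_t - \tilde w_{t+1}\rangle + \Delta_t(u) - \Delta_t(w_{t+1}) + \text{(lower-order)}.
\end{align*}
The key structural point is that $f_t$ only ever appears evaluated at $w_t$ (where $\sigma_t(w_t;p,\alpha)=\|w_t\|^p$, a clean polynomial) and at $u$ (bounded by Lemma~\ref{lem:regularizer}); the fact that the \emph{shape} of $f_t$ depends on $w_t$ does not interfere, because in the telescoping sum the ``play'' point and the argument of the composite penalty coincide. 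So the composite term never needs to be controlled by Lipschitzness — it is simply subtracted off exactly, and contributes only its value at $u$ to the final bound, namely $\sum_t f_t(u)\le \tilde O(c\|u\|)$ via Lemma~\ref{lem:regularizer}.

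Next I would assemble the three remaining pieces exactly as in \citet{jacobsen2022parameter}: (i) the telescoping of $D_{\psi_t}(u\mid w_t)$ together with the $\Delta_t$ correction terms, which is engineered so that the time-varying mirror map $\psi_t$ cancels correctly and leaves a boundary term of the form $\psi_{T+1}(u)\approx \tilde O(\|u\|h_T\sqrt T)$ after optimizing the implicit $\eta$ in the definition~(\ref{eqn: MD_regularizer}) of $\Psi_t'$; (ii) the ``stability'' term $\sum_t\big(\langle g_t,w_t-\tilde w_{t+1}\rangle - D_{\psi_t}(w_{t+1}\mid w_t)\big)$, which is bounded using strong convexity of $\psi_t$ relative to the local norm and the constraint $\|g_t\|\le h_t$, giving $\tilde O(h_T\sqrt T)$ — this is where the hint assumption $\|g_t\|\le h_t$ is used; and (iii) the initial-radius term, contributing the $\tilde O(\epsilon h_T)$ piece since $\alpha$ scales with $\epsilon$. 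Summing these yields $R_T^\cA(u)\le \tilde O(\epsilon h_T + \|u\|h_T\sqrt T)$.

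The main obstacle I anticipate is a careful check that adding $f_t$ does not disrupt the first-order optimality conditions underlying Lemma~\ref{lem:omd_update_two_step} in a way that invalidates the stability bound (ii): the update $w_{t+1}$ is now the minimizer of a sum that includes $f_{t+1}$, so $\nabla\psi_t(w_{t+1})$ is no longer simply $\nabla\psi_t(\tilde w_{t+1})$. One must verify that the extra subgradient of $f_{t+1}$ only helps — i.e.\ that it pushes $w_{t+1}$ toward the origin and hence can only decrease the relevant divergences — or alternatively invoke the convexity of $f_{t+1}$ to absorb its contribution into the $-f_{t+1}(w_{t+1})$ term that appears with a favorable sign at the next step. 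Making this ``telescoping of the composite term across consecutive rounds'' precise (so that $f_{t+1}(w_{t+1})$ generated at step $t$ is exactly the $f_{t+1}(w_{t+1})$ consumed at step $t+1$) is the one spot requiring genuine care; everything else is a transcription of the cited analysis with $\varphi_t$ replaced by $f_t$.
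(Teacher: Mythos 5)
Your plan matches the paper's proof: invoke the composite-loss version of Lemma~1 in \citet{jacobsen2022parameter} (which is already stated for arbitrary composite $\varphi_t$, so the first-order-optimality worry you raise is already absorbed into that lemma), then observe that the index-shifted composite terms telescope to the boundary $f_1(w_1) - f_{T+1}(w_{T+1}) \le 0$ since $w_1 = 0$ and $f_{T+1}\ge 0$, leaving $\sum_t f_t(u)$ to be bounded by Lemma~\ref{lem:regularizer}; the remaining terms are then handled verbatim by the stability analysis of their Theorem~6. The only difference is that the paper settles the ``genuine care'' step you flag with a one-line rearrangement rather than the alternative you contemplate about $f_{t+1}$'s subgradient pushing $w_{t+1}$ toward the origin.
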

\begin{proof}
    First, we verify Algorithm \ref{alg:knownG} is indeed the update corresponding to Equation (\ref{eqn: omd_update}). By Lemma \ref{lem:omd_update_two_step}, Equation (\ref{eqn: omd_update}) is equivalent to a two step update as shown in Equation (\ref{eqn: OMD_1}) and (\ref{eqn: OMD_2}).

    By first order optimality to Equation (\ref{eqn: OMD_1}):
    \begin{align*}
    \tilde w_{t+1} = \nabla \psi_t^{-1} \left( \nabla \psi_t (w_t) - g_t \right)    
    \end{align*}
    Then substitute to Equation (\ref{eqn: OMD_2})
    \begin{align*}
     \nabla \psi_{t+1} (w_{t+1}) + \nabla f_{t+1} (w_{t+1}) = \nabla \psi_t (w_t) - g_t := \theta_t
    \end{align*}
    Define $\Psi_t( \| w\| ) := \psi_t(w) = \int_{0}^{\|w\|} \Psi'_t(x) dx$ and $R_{t+1} (x) := c p \frac{ | x |^{p-1} }{ \left( S_t +  | x |^{p-1} \right) ^{1 - 1/p}}$ (Notice $\nabla f_{t+1} (x) = \frac{x}{\|x\|} R_{t+1} (\|x\|)$), thus by substitute derivatives 
    \begin{align*}
    \frac{w_{t+1}}{\|w_{t+1} \|} \left( \underbrace{\Psi_{t+1}' ( \| w_{t+1} \| ) + R_{t+1} (\| w_{t+1} \|) }_{:= \cL_{t+1}(\|w_{t+1}\|)} \right) = \theta_t
    \end{align*}
    Thus, the solution $w_{t+1}$ satisfies the following for some $x \ge 0$:
    \begin{align*}
        w_{t+1} = x \frac{\theta_t}{ \| \theta_t \|}, \quad \cL_{t+1} (x) = \| \theta_t \|
    \end{align*}
    In particular, define $F_t = \log (1 + \frac{x}{a_t})$
    \begin{align*}
        \cL_{t} (x) = \begin{cases}
            6 \sqrt{V_t F_t (x)} + R_{t} (x), & h_t \sqrt{ F_t (x) } < \sqrt{V_t} \\
            3 h_t  F_t (x)  + \frac{3 V_t}{h_t} + R_{t} (x) , & \text{otherwise}
        \end{cases}
    \end{align*}
    The boundary case is when $x^*: h_t \sqrt{ F_t (x^*) } = \sqrt{V_t}$. That is $x^* = a_t ( \exp( V_t / h_t^2 ) -1 ) $. Consider by cases, if
    \begin{align*}
         \cL_t(x) = \| \theta_t \| \le \cL_t( x^{\ast}) = \frac{6 V_{t}}{ h_{t}} +  R_{t} (x^{\ast}) 
    \end{align*}
    Then $x \le x^{\ast}, h_t \sqrt{F_t(x)} < \sqrt{V_t} $ thus the first branch should be evoked. Similarly, consider the other case, the complete update should be:
    \begin{align*}
        \cL_{t}(x) = \begin{cases}
            6 \sqrt{V_t F_t (x) } + R_{t} (x), & \|\theta_t \| \le \frac{6 V_{t}}{ h_{t}} +  R_{t} (x^{\ast}) \\
            3 h_t  F_t (x) + \frac{3 V_t}{h_t} + R_{t} (x),& \text{otherwise}
        \end{cases}
    \end{align*}
    It remains to show the composite regret $R_T^{\cA}(u)$ guaranteed by Algorithm \ref{alg:knownG}. Notice the original proof of Theorem 6 of \cite{jacobsen2022parameter} relies on their generalized Lemma 1 by setting $f_t = 0$ (their $\varphi_t$). Thus Lemma 1 of \cite{jacobsen2022parameter} implies for general $f_t$:
    \begin{align*}
        R_T(u) \le \psi_{T+1} (u) + \sum_{t=1}^{T}  f_{t}(u) + \sum_{t=1}^{T} \langle g_t, w_t - w_{t+1} \rangle - D_{\psi_t} (w_{t+1} \mid w_t) - \Delta_t(w_{t+1}) - f_{t+1}(w_{t+1}) 
    \end{align*}
    Move relative terms to left hand side:
    \begin{align*}
    R_T(u) + \sum_{t=1}^{T} f_{t}(w_{t}) - f_{t}(u) & \le \psi_{T+1} (u) + \sum_{t=1}^{T} \langle g_t, w_t - w_{t+1} \rangle - D_{\psi_t} (w_{t+1} \mid w_t) - \Delta_t(w_{t+1}) + f_1(w_1) - f_{T+1} (w_{T+1}) \\
    \intertext{$w_1 = 0$, thus $f_1(w_1) = 0$. And $f_{t}$ is non-negative}
    & \le \psi_{T+1} (u) + \sum_{t=1}^{T} \langle g_t, w_t - w_{t+1} \rangle - D_{\psi_t} (w_{t+1} \mid w_t) - \Delta_t(w_{t+1})
    \end{align*}
    Thus following the exactly same proof of Theorem 6 of \cite{jacobsen2022parameter}:
    \begin{align*}
        R_T^{\cA}(u) & \le 4 \space h_T + 6 \|u\| \max \left[ \sqrt{V_{T+1} \ln \left( 1 + \frac{\| u\| \sqrt{B_{T+1} \ln^2 (B_{T+1})}}{\epsilon}\right)}, h_T \ln \left( 1 + \frac{\|u\| \sqrt{B_{T+1} \ln^2 (B_{T+1})}}{\epsilon}\right) \right]
    \end{align*}
\end{proof}

\begin{algorithm}[H]
    \caption{Robust Online Learning By Exploiting Linear Offset}\label{alg:knownG}
\begin{algorithmic}[1]
 \STATE {\bfseries Input:} Time horizon $T$; Initial Value $\epsilon$; Corruption parameter $k$; Regularization relevant parameters: $c, p, \alpha$;\\
 Hints: $0 < h_1 \le h_2 \le \cdots, \le h_{T+1}$;
 \STATE {\bfseries Initialize:} $\theta_1 = 0, C_1 = 0, N_1 = 4, B_1 = 4 N_1,  w_1 = 0, S_0 = \alpha^p $.
   \FOR{$t=1$ {\bfseries to} $T$}

   \STATE {\bfseries Define:} $F_t (x) = \ln ( 1 + x / a_t), R_{t+1} (x) = c p  \| x \|^{p-1} / (S_t + \|x\|^p)^{1 - 1/p}$ 
   
   \STATE Output $w_t$, receive $g_t$ where $\|g_t\| \le h_t$
   \STATE Compute $\theta_{t+1} = \nabla \psi_t(w_t) - g_t $
   \STATE Update $C_{t+1} = C_t + \|  g_t \|^2, N_{t+1} = N_t + \|  g_t \|^2 / h_t^2, B_{t+1} = B_t + 4 N_t$ 
   \STATE Set $V_{t+1} = h_{t+1}^2 + C_{t+1} $ and $\alpha_{t+1} = \frac{\epsilon}{\sqrt{B_{t+1} } \ln^2 (B_{t+1}) }$
   \STATE Compute $ x^{\ast} = a_{t+1} \left( \exp (V_{t+1} / h_{t+1}^2) - 1\right)$
   \STATE Define
   \begin{align*}
        \cL_{t+1} (x) = \begin{cases}
            & 6 \sqrt{V_{t+1} F_{t+1} (x) } + R_{t+1} (x), \quad \|\theta_t \| \le \frac{6 V_{t+1}}{ h_{t+1}} +  R_{t+1} (x^{\ast}) \\
            & 3 h_{t+1}  F_{t+1} (x)  + \frac{3 V_{t+1}}{h_{t+1}} + R_{t+1} (x) , \quad \text{otherwise}
        \end{cases}
    \end{align*}
   \STATE Solve for $x_{t+1}: \cL_{t+1} (x_{t+1}) = \| \theta_{t} \|$  \hfill \COMMENT{solvable via bisection where $x_{t+1} \in [0, S_t^{1/p}]$}
   \STATE Update $w_{t+1} = x_{t+1} \frac{ \theta_t } {\| \theta_t\|} $, $S_{t+1} = S_t + \| w_{t+1} \|^p$
   \ENDFOR
\end{algorithmic}
\end{algorithm}
We remark that the inverse problem involved in Algorithm \ref{alg:knownG} Line 11 can be efficiently handled by bisection method since $\cL_{t+1} (x)$ is monotonically increasing. Notice that $ 0 = \cL_{t+1} (0)  \le \cL_{t+1}( x_{t+1}) \le 0$, so $0$ can be used as an initial lower bound. On the other hand $R_{t+1} (x_{t+1} ) \le \cL_{t+1} (x_{t+1}) = \| \theta_t \|$, thus $x_{t+1} \le R_{t+1}^{-1} ( \| \theta \|)$, where for $y \ge 0$:
\begin{align*}
    R^{-1}_{t+1} (y) = \left( \frac{S_t \left( y/ cp \right)^{\frac{p}{p-1}} }{1 + \left( y/ cp \right)^{\frac{p}{p-1}} }\right)^{\frac{1}{p}} 
\end{align*}
For simplicity, we can always use $S_t^{1/p}$ as an initial upper bound.

\section{Proof to Theorem \ref{thm:knownG} and Applications}\label{app:meta_knownG}
We provide the regret guarantee of Algorithm \ref{alg:general} when using an instance of Algorithm \ref{alg:knownG} as a base learner $\cA$ as well as two applications when $h_1 = \cdots = h_{T+1} = G$
\knownGthm*
\begin{proof}
    The proof begins with the regret decomposition in Equation (\ref{eqn: regret_general_composite}) and is displayed below for convenience. 
    \begin{align*}
        R_T(u) &  := \underbrace{\sum_{t=1}^{T} \langle \tilde{g}^{c}_t , w_t - u \rangle + r_t(w_t) - r_t(u)}_{:= R_T^{\cA} (u)}   + \underbrace{\sum_{t=1}^{T} \langle g_t - \tilde{g}^{c}_t,   w_t  \rangle }_{:= \textsc{error}} - \underbrace{\sum_{t=1}^{T}  r_t(w_t) }_{:= \textsc{correction}}  + \underbrace{ \left\langle \sum_{t=1}^{T}  g_t - \tilde{g}^{c}_t,   - u  \right \rangle  + \sum_{t=1}^{T} r_t (u) }_{:= \textsc{bias}}
    \end{align*}
    
    Define $\textsc{offset} := \textsc{error} - \textsc{correction}$, and in Section \ref{sec:Lipshcitz_ub} we already shown $\textsc{error} \le kG \max_t \|w_t \|$,
    thus
    \begin{align*}
        \textsc{offset} & \le  kG  \max_t \| w_t\| - \sum_{t=1}^{T} r_t(w_t)
        \intertext{by Lemma \ref{lem:regularizer} by substituting $c, p \alpha$ }
        & \le  kG  \max_t \| w_t\| - kG(\max_t |w_t| - \epsilon / k ) = O \left( \epsilon G \right)
    \end{align*}
    Similarly by following the application of Lemma \ref{lem:regularizer}:
     \begin{align*}
        \textsc{bias} & \le  kG  \|u\| + \sum_{t=1}^{T} r_t(u) \\
        & \le  kG  \|u\| + 3 kG \ln T |u| \left[  \ln \left(  1 + \left( \frac{|u| k  }{\epsilon} \right)^{\ln T} \right)+ 2 \right] \\
        & = \tilde O \left(  k G \|u\| \right)
    \end{align*}
    In addition, Algorithm \ref{alg:knownG} is set as $\cA$ in response to $\tilde g_t^c$ with $h_t = G$. Thus by Theorem \ref{thm:omd_base} guarantees
    \begin{align*}
        R_T^{\cA} (u) \le \tilde O( \epsilon G + \|u \| G \sqrt{T})
    \end{align*}
    Combining three parts and we complete the proof.
\end{proof}

Here, we provide implication of Theorem \ref{thm:knownG} to stochastic convex optimization and an online version of distributionally robust optimization.
 
\paragraph{Stochastic convex optimization with corruptions}
OCO and convex stochastic optimization are connected through the classical Online-to-Batch Conversion \cite{orabona2019modern}.
Below, we present the implications of Theorem \ref{thm:knownG} stochastic convex optimization in a setting where $k$ gradient evaluations are arbitrarily corrupted. 
\begin{corollary}[Stochastic Convex Optimization via Online to Batch]\label{cor:online_batch}
Suppose $\cL: \cW \rightarrow \R$ is convex and $\E[\ell_t(w)] = \cL(w), g_t = \nabla \ell_t(w_t)$ and $\E_t[\|g_t\|] \le G$. Algorithm \ref{alg:general} have access to $\tilde{g}_t$ such that $\sum_{t=1}^{T} \one \{ g_t \neq \tilde{g}_t\} \le k $, then Algorithm \ref{alg:general} guarantees
\begin{align*}
    \E\left[ \cL\left( \frac{\sum_{t=1}^{T} w_t}{T}\right) - \cL(u) \right] & \le \tilde{O} \left[ \frac{ \epsilon + \|u\| G \left( \sqrt{ T } + k \right)}{T} \right]
    \end{align*}  
\end{corollary}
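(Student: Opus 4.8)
The plan is to combine Theorem~\ref{thm:knownG} with a standard online-to-batch conversion argument, being careful about how the corruption enters through the gradients. First I would set up the online-to-batch machinery: run Algorithm~\ref{alg:general} on the sampled loss functions $\ell_t$, feeding it the corrupted gradients $\tilde g_t$, and let $\bar w_T = \frac{1}{T}\sum_{t=1}^T w_t$. By convexity of $\cL$ and Jensen's inequality, $\cL(\bar w_T) - \cL(u) \le \frac{1}{T}\sum_{t=1}^T \big(\cL(w_t) - \cL(u)\big)$.

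Next I would relate the per-round suboptimality gap $\cL(w_t) - \cL(u)$ to the true gradients. Since $\E[\ell_t(w)] = \cL(w)$ and $g_t = \nabla\ell_t(w_t)$, conditioning on the history up to round $t$ (so that $w_t$ is fixed), we have $\E_t[\langle g_t, w_t - u\rangle] = \langle \nabla\cL(w_t), w_t - u\rangle \ge \cL(w_t) - \cL(u)$ by convexity of $\cL$. Summing and taking total expectation gives $\E\left[\sum_{t=1}^T \cL(w_t) - \cL(u)\right] \le \E\left[\sum_{t=1}^T \langle g_t, w_t - u\rangle\right] = \E[R_T(u)]$, where $R_T(u)$ is exactly the regret quantity bounded by Theorem~\ref{thm:knownG}. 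The key point is that Theorem~\ref{thm:knownG} bounds the regret against the \emph{true} gradients $g_t$ even though the algorithm only sees $\tilde g_t$, so no extra error term appears here.

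Then I would invoke Theorem~\ref{thm:knownG} directly. The hypothesis $\sum_t \one\{g_t \ne \tilde g_t\} \le k$ implies the assumptions in Equations~(\ref{eqn: assumption_big_rounds}) and~(\ref{eqn:assumption_general}) hold with this $k$ (as noted in Section~\ref{sec:assumptions}, $|\cB| \le \kcount$ and $\frac{1}{G}\sum_t \min(\|g_t - \tilde g_t\|, G) \le \kcount$). One subtlety is that Theorem~\ref{thm:knownG} is stated with a fixed Lipschitz bound $G \ge \max_t\|g_t\|$, whereas here we only have $\E_t[\|g_t\|] \le G$; I would either note that the relevant bounds in the proof of Theorem~\ref{thm:knownG} go through with $\max_t\|g_t\|$ replaced by its expectation after taking expectations (the regret decomposition and the regularizer bounds in Lemma~\ref{lem:regularizer} are used inside an expectation), or appeal to a clipping/truncation-at-$G$ step so that the effective gradients are bounded. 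This gives $\E[R_T(u)] \le \tilde O\big(\epsilon G + \|u\|G(\sqrt{T} + k)\big)$, and dividing through by $T$ yields the claimed bound.

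The main obstacle I expect is the mismatch between the almost-sure Lipschitz bound assumed in Theorem~\ref{thm:knownG} (used both for the clipping threshold $h_t = G$ and inside the error/bias bookkeeping) and the in-expectation bound $\E_t[\|g_t\|] \le G$ available here; handling this cleanly requires either re-deriving the relevant inequalities in expectation or justifying that clipping the true gradients at level $G$ does not distort the online-to-batch argument. Everything else — Jensen, the convexity inequality for $\cL$, and the substitution into Theorem~\ref{thm:knownG} — is routine.
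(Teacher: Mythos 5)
Your proposal matches the paper's proof exactly: the paper's own argument is a one-line appeal to the standard online-to-batch conversion (Theorem~3.1 of Orabona's notes with uniform weights) combined with Theorem~\ref{thm:knownG}. Your steps---Jensen for $\cL(\bar w_T)$, conditional expectation to replace $\cL(w_t)-\cL(u)$ by $\E_t[\langle g_t, w_t-u\rangle]$, and then plugging in the regret bound of Theorem~\ref{thm:knownG} (whose hypotheses~(\ref{eqn: assumption_big_rounds}) and~(\ref{eqn:assumption_general}) follow from $\kcount\le k$)---are precisely this. The caveat you raise about $\E_t[\|g_t\|]\le G$ versus the almost-sure bound $\max_t\|g_t\|\le G$ required by Theorem~\ref{thm:knownG} is a legitimate observation that the paper's proof glosses over (the clipping-at-$G$ analysis in the error bound, specifically $\|\tilde g_t^c - g_t\| \le \|\tilde g_t - g_t\|$, relies on $g_t$ lying in the ball of radius $G$), but this does not change the fact that you identified the same proof strategy the paper uses.
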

\begin{proof}
The proof leverages the standard online to batch conversion (Theorem 3.1 in \cite{orabona2019modern} by setting $\alpha_t = 1$), then combining with the regret bounds from Theorem \ref{thm:knownG}.
\end{proof}

\paragraph{Distributionally robust optimization} Distributionally robust optimization is a form of robust stochastic optimization on training data sampled from distribution $P$ that is not the same as the population distribution $Q$ \citep{ben2009robust, ben2015oracle}. Typically, $Q$ is considered as uniform, but the actual training data collection process might be biased, meaning $P$ is different to $Q$. In this situation, stochastic optimization which treats each training example with equal weight is no longer appropriate.

\citet{namkoong2016stochastic} formalized this framework as the following model with respect to a set of losses $\ell_1,\dots\ell_T$, and an uncertainty set $ \mathcal{P}_{k} = \{ P \in \Delta^T : D_f( P || Q ) \le C(k, T) \} $, where $D_f(P||Q)$ is the $f$-Divergence, for a convex function $f: \R^{+} \mapsto \R$ with $f(1) = 0$. 
\begin{align*}
    \argmin_{w} \sup_{P \in \mathcal{P}_{k}} \sum_{t=1}^{T} p_t \ell_t(w)
\end{align*}
the decision variable from above formulation takes account into the worst case distributional uncertainty, hence is intuitively associated with improving generalization error given an appropriate uncertainty set $\mathcal{P}_{k}$ \citep{sagawa2019distributionally}.

Distributionally robust optimization is increasingly relevant in the training of large language models, where training data are sourced from different domains \citep{xie2024doremi}. This is due to data from some domain are relatively atypical in comparison to others in representing the overall population distribution  \citep{oren2019distributionally}. Although empirical gain has been observed by incorporating distributionally robust optimization, the scalability has always been a primary concern for model training \citep{levy2020large, qi2021online}. Therefore, we consider a natural ``online'' version of distributionally robust optimization model proposed by \citet{namkoong2016stochastic}, with its online analogous metric formulated as:
\begin{align*}
    \sup_{P \in \mathcal{P}_{k}} \sum_{t=1}^T  p_t(\ell_t(w_t) - \ell_t(u))
\end{align*}
We present the implication of Algorithm \ref{alg:general} to this problem with respect to total variation $D_{TV}$ and Kullback-Leibler divergence $D_{KL}$. In particular, we assume $\ell_t$ is convex and $Q$ is uniform.
\begin{corollary}[Online Distributionally Robust Optimization]\label{cor:distribution_robust}
Suppose $\tilde g_t \in \nabla \ell_t(w_t)$ and $\|\tilde{g}_t\|\le G$. Algorithm \ref{alg:general} runs on $\tilde g_t$ guarantees 
\begin{align*}
    \sup_{P \in \mathcal{P}_{k}} \sum_{t=1}^{T} p_t (\ell_t(w_t) - \ell_t(u)) \le \tilde{O} \left[ \frac{ \epsilon + \|u\| G \left( \sqrt{ T } + k \right)}{T} \right]
\end{align*}
for $D_{TV} \le \frac{k}{T}$. In addition, in the case where $D_{KL} \le \frac{2k^2}{T^2}$ the same guarantee is achieved.
\end{corollary}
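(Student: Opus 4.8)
\textbf{Proof proposal for Corollary~\ref{cor:distribution_robust}.}

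The plan is to reduce the online distributionally robust objective to the standard corrupted-OCO regret already controlled by Theorem~\ref{thm:knownG}, by choosing the corruption sequence so that the weighted gradients $p_t g_t$ look like a lightly corrupted version of the unweighted gradients $\tilde g_t$. First I would note that since $Q$ is uniform, $q_t = 1/T$, and the worst-case weighting $P\in\cP_k$ reweights the per-round instantaneous regret terms. Using convexity of $\ell_t$, $\sum_t p_t(\ell_t(w_t)-\ell_t(u)) \le \sum_t p_t\langle g_t, w_t-u\rangle$ for any $g_t\in\partial\ell_t(w_t)$; in particular for $g_t=\tilde g_t$. So it suffices to bound $\sup_{P\in\cP_k}\sum_t p_t\langle \tilde g_t, w_t-u\rangle$. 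The key algebraic step is the splitting $p_t\langle \tilde g_t,\cdot\rangle = \tfrac1T\langle \tilde g_t,\cdot\rangle + (p_t-\tfrac1T)\langle \tilde g_t,\cdot\rangle$, so that running Algorithm~\ref{alg:general} on the feedback $\tilde g_t$ (i.e., treating $\tilde g_t$ itself as the ``corrupted'' gradient and $g^{\mathrm{true}}_t := \tilde g_t/T$ as the truth, scaled by $1/T$) incurs exactly $\textsc{bias}$/\textsc{error}-type terms governed by $\sum_t \|p_t\tilde g_t - \tfrac1T\tilde g_t\| = \sum_t |p_t - 1/T|\,\|\tilde g_t\|$.

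Next I would verify the corruption budget. With $\|\tilde g_t\|\le G$, the deviation measure in Equation~(\ref{eqn:assumption_general}) (after dividing by the appropriate Lipschitz constant, which is $G/T$ here since the ``true'' gradients are $\tilde g_t/T$) is at most $\tfrac{T}{G}\sum_t |p_t-1/T|\,\|\tilde g_t\| \le T\sum_t|p_t-1/T| = 2T\,D_{TV}(P,Q)$. Hence the condition $D_{TV}\le k/T$ yields a total corruption of at most $2k$, and the ``large rounds'' set $\cB$ of Equation~(\ref{eqn: assumption_big_rounds}) is a subset of these, so $|\cB|\le 2k$ as well. Plugging $k\mapsto 2k$ and Lipschitz constant $G/T$ into Theorem~\ref{thm:knownG} gives $\sup_{P\in\cP_k}\sum_t p_t\langle\tilde g_t,w_t-u\rangle \le \tilde O\big[(\epsilon + \|u\|G(\sqrt T + k))/T\big]$, which is the claimed bound. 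For the KL case I would invoke Pinsker's inequality, $D_{TV}^2 \le \tfrac12 D_{KL}$, so $D_{KL}\le 2k^2/T^2$ implies $D_{TV}\le k/T$, reducing to the already-handled total-variation case.

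The main obstacle I anticipate is being careful about the bookkeeping of scaling factors: the ``Lipschitz constant'' seen by the base algorithm is $G/T$ (since the nominal true gradients are $\tilde g_t/T$), and the corruption measure $k$ in Theorem~\ref{thm:knownG} is dimensionless (normalized by $G$), so one must confirm that the $1/T$ factors land consistently in the final $\tilde O[(\epsilon+\|u\|G(\sqrt T+k))/T]$ expression and that Equations~(\ref{eqn: assumption_big_rounds})–(\ref{eqn:assumption_general}) are checked with the correct normalization. A secondary subtlety is that the supremum over $P\in\cP_k$ must be taken \emph{after} committing to the algorithm's iterates $w_t$ (which depend only on $\tilde g_t$, not on $P$), so the adversary picking the worst $P$ is exactly captured by the worst-case corruption bound — this is legitimate precisely because the regret bound of Theorem~\ref{thm:knownG} holds for \emph{any} corruption sequence meeting the budget, uniformly. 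Everything else is routine substitution and an application of Pinsker's inequality.
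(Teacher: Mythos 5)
Your proposal follows essentially the same route as the paper's proof: pass to the linearized regret by convexity, split $p_t = q_t + (p_t - q_t)$ with $q_t = 1/T$, observe that the resulting perturbation term has total mass $\sum_t |p_t/q_t - 1| = 2T\,D_{TV}(P,Q) \le 2k$ so that the deviation-type corruption budget of Equation~(\ref{eqn: eg2}) (hence Equations~(\ref{eqn: assumption_big_rounds})--(\ref{eqn:assumption_general})) is satisfied with parameter $2k$, invoke Theorem~\ref{thm:knownG}, and close the KL case via Pinsker. This is the paper's argument, just with the $1/T$ scaling carried through the reduction rather than divided out at the end, and it reaches the correct bound.

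One presentational slip worth fixing: you label $\tilde g_t/T$ as the ``true'' gradient and $G/T$ as the Lipschitz constant, but the role assignment is internally inconsistent with your own corruption computation $\sum_t \|p_t \tilde g_t - \tilde g_t/T\|$. To line up with Theorem~\ref{thm:knownG}'s conventions, the ``true'' gradient (the one defining the regret you want to control) must be $p_t\tilde g_t$, the algorithm's observed feedback plays the role of the ``corrupted'' gradient, and the deviation is observed-minus-true. With truth equal to $p_t\tilde g_t$, $\max_t\|p_t\tilde g_t\|$ can exceed $G/T$ (indeed any single $p_t$ can approach a constant), so $G/T$ is not literally an upper bound on the true-gradient norms. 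The paper sidesteps this by keeping everything unscaled: it sets observed $= g_t$ (your $\tilde g_t$), truth $= (p_t/q_t) g_t$, runs the analysis at scale $G$, and multiplies by $1/T$ only at the end. In either framing the conclusion survives because, in this application, the feedback actually handed to the algorithm has norm $\le G$ so the clipping step is a no-op, and the rest of the proof of Theorem~\ref{thm:knownG} depends only on the deviation budget and the norm of the feedback, not on $G$ upper-bounding the (unobserved) target gradients; but your write-up should make the role of each object explicit to avoid the appearance of a circularity.
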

\begin{proof}
We begin with the case of $D_{TV}(P || Q) = \frac{1}{2} \sum_{t=1}^{T} q_t | \frac{p_t}{q_t} - 1 | \le \frac{k}{T}$, where $q_t = \frac{1}{T}$. First, we link the regret incurred by Algorithm \ref{alg:knownG} that runs on $g_t$, and we denote the \textit{unobservable} gradient as $\tilde{g}_t =\frac{p_t}{q_t} g_t $
\begin{align*}
    \tilde R_T(u) &:= \sum_{t=1}^{T} p_t (\ell_t(w_t) - \ell(u) ) \le \sum_{t=1}^{T} p_t\langle g_t,  w_t - u \rangle  \\
    & = \sum_{t=1}^{T} q_t \langle g_t,  w_t - u \rangle + \sum_{t=1}^{T} q_t \left(\frac{p_t}{q_t} - 1 \right)\langle g_t,  w_t - u \rangle \\
    & = \frac{1}{T} \left( \sum_{t=1}^{T} \langle g_t,  w_t - u \rangle + \sum_{t=1}^{T} \langle \tilde g_t - g_t,  w_t - u \rangle \right)
\end{align*}

since $\frac{1}{G} \sum_{t=1}^{T} \| g_t - \tilde g_t \| \le \sum_{t=1}^{T} | 1 - \frac{p_t}{q_t} | \le 2k $, $\tilde g_t , g_t$ satisfies Equation (\ref{eqn: eg2}), hence Theorem \ref{thm:knownG} provides the guarantee:
\begin{align*}
    \sum_{t=1}^{T} p_t (\ell_t(w_t) - \ell(u) ) & \le \tilde{O} \left[ \frac{ \epsilon + \|u\| G \left( \sqrt{ T } + k \right)}{T} \right]
\end{align*}
In terms of $D_{KL}$, we exploit the Pinsker's inequality $D_{TV} \le \sqrt{2 D_{KL} }$, Hence $D_{KL} \le \frac{2k^2}{T^2}$ yields to the same results.
\end{proof}

\section{Lower Bounds}\label{app:lower_bound}
In this section, we present two type of matching lower bounds to Theorem \ref{thm:knownG}: Theorem \ref{thm:corruptionLB} provides a lower bound for any comparator $u^{\ast} \in \R^d$ with arbitrary magnitude $D > 0$. Theorem \ref{thm:corruptionLB_addition} is a lower bound with log factors, which appears in unconstrained OCO upper bounds.

We begin by presenting a helper lemma that aids in the analysis of Theorem \ref{thm:corruptionLB}, followed by Lemmas required to proof to Theorem \ref{thm:corruptionLB}.

\begin{lemma}\label{lem:random_seq}
     Suppose $z_{1}, z_{2}, \cdots, z_T \in \{-1, +1\}$ with equal probability. Then for every $ t \in [T]$ for some $T \ge 1$.
     \begin{align*}
         \E  \left[  \sum_{t=1}^{T} \text{sign} \left( \sum_{i=1}^{T} z_i \right) z_t\right] \ge \sqrt{\frac{T}{16}}
     \end{align*}
\end{lemma}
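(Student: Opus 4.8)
The plan is to reduce the expectation to a standard fact about the expected absolute value of a symmetric random walk. Observe that for any fixed realization of $z_1,\dots,z_T$,
\begin{align*}
    \sum_{t=1}^{T} \sign\left(\sum_{i=1}^{T} z_i\right) z_t = \sign\left(\sum_{i=1}^T z_i\right)\sum_{t=1}^T z_t = \left|\sum_{t=1}^T z_t\right|,
\end{align*}
using the convention that $\sign(0)\cdot 0 = 0$ (or handling the event $\sum z_i = 0$ separately, where the summand is $0$ anyway). Hence the quantity I must lower bound is exactly $\E\left|\sum_{t=1}^T z_t\right|$, the expected distance from the origin of a $\pm 1$ random walk of length $T$.

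The main step is then to show $\E\left|\sum_{t=1}^T z_t\right| \ge \sqrt{T/16} = \tfrac14\sqrt{T}$. The cleanest route is the Khintchine-type inequality via the second-and-fourth moment method: let $X = \sum_{t=1}^T z_t$. Then $\E[X^2] = T$ (independence, $\E z_t = 0$, $\E z_t^2 = 1$), and a direct expansion gives $\E[X^4] = 3T^2 - 2T \le 3T^2$. By the Paley--Zygmund / Cauchy--Schwarz argument,
\begin{align*}
    T = \E[X^2] = \E\left[|X|^{2/3}\,|X|^{4/3}\right] \le \left(\E|X|\right)^{2/3}\left(\E X^4\right)^{1/3},
\end{align*}
where the last step is H\"older's inequality with exponents $3/2$ and $3$. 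Rearranging, $\E|X| \ge (\E[X^2])^{3/2}/(\E[X^4])^{1/2} \ge T^{3/2}/(3T^2)^{1/2} = \sqrt{T}/\sqrt{3} \ge \sqrt{T/16}$, which even gives a somewhat better constant than needed, so there is slack to absorb the $t=1$ edge case or a looser bound on $\E[X^4]$.

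I do not anticipate a serious obstacle here; the only things to be careful about are (i) the $\sign(0)$ convention, which is harmless since the corresponding term vanishes, and (ii) the moment computation $\E[X^4] = 3T^2 - 2T$, which is a routine multinomial expansion (only the terms $z_i^4$ and $z_i^2 z_j^2$ survive). An alternative, if one prefers to avoid fourth moments, is to use the known exact formula $\E|X| = \Theta(\sqrt{T})$ for simple random walk, or to note $\E|X|\ge \E[X^2]/\sqrt{\E[X^2]}$-type bounds combined with a crude tail estimate; but the H\"older approach above is self-contained and gives the stated constant directly. I would present the H\"older computation as the body of the proof.
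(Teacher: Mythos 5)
Your proof is correct, and it takes a genuinely different (and cleaner) route than the paper. Your first observation — that $\sum_{t=1}^T \sign\bigl(\sum_{i=1}^T z_i\bigr) z_t = \bigl|\sum_{t=1}^T z_t\bigr|$ deterministically, since the sign factor does not depend on $t$ and factors out of the sum — immediately reduces the whole claim to the standard fact $\E\bigl|\sum_t z_t\bigr| \ge c\sqrt{T}$, and you then supply this via the Khintchine-style argument: $\E[X^2]=T$, $\E[X^4]=3T^2-2T\le 3T^2$, and H\"older with exponents $3/2$ and $3$ gives $T=\E[X^2]\le (\E|X|)^{2/3}(\E[X^4])^{1/3}$, hence $\E|X|\ge \sqrt{T/3}\ge\sqrt{T/16}$. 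Both moment computations and the H\"older application check out, so this yields a sharper constant than the stated bound. The paper instead never makes the reduction to $\E|X|$; it computes $\E[\sign(\sum_i z_i)\, z_t]$ term by term by conditioning on $z_t\in\{\pm 1\}$, rewriting the result as a telescoping difference of signs evaluated at the random walk $S_t=\sum_{i\ne t} z_i$, and then reduces to estimating the central binomial probability $P(S_t=0)$ (or $P(S_t=\pm 1)$ in the odd case) via the bound $\binom{T}{T/2}\ge 2^{T-1}(T/2)^{-1/2}$, finally invoking symmetry to sum over $t$. Your approach is more streamlined — it avoids the even/odd parity case analysis and the binomial-coefficient estimate entirely — at the mild cost of invoking H\"older's inequality, while the paper's argument is more combinatorial and elementary. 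Either is acceptable; yours would be the shorter presentation.
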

\begin{proof}
Define $S_t = \sum_{i \in [T] : i \neq t} z_i$, by conditioning on $g_T \in \{-1, +1\}$:
    \begin{align*}
        2 \E \left[ \text{sign} \left( \sum_{i=1}^{T} z_T \right) z_t\right]  & = \E \left[ \text{sign} \left( S_T + 1 \right) \right] - \left[ \text{sign} \left( S_T - 1 \right) \right]  \\
        & = \sum_{k \in \{-T, -T +2, \cdots, T\}} \left( \text{sign}(k+1) - \text{sign}(k-1) \right) P(S_T = k) 
    \end{align*}
    We consider $T$ by cases: suppose $T$ is even, $\text{sign}(k+1) - \text{sign}(k-1) = 2$ when $ k = 0$, and $\text{sign}(k+1) - \text{sign}(k-1) = 0$ otherwise. Thus applying ${T\choose T/2} \ge 2^{T-1} (T/2)^{-1/2}$
    \begin{align*}
        \E \left[ \text{sign} \left( \sum_{i=1}^{T} z_i \right) z_T\right]  & = P(S_T = 0)  = {T\choose T/2} 2^{-T} \ge 2^{-1} (T/2)^{-1/2} = \sqrt{\frac{1}{2T}}
    \end{align*}
    Similarly if $T$ is odd, by symmetry to $S_T = \pm 1$:
    \begin{align*}
        \E \left[ \text{sign} \left( \sum_{i=1}^{T} z_i \right) z_T\right]  & = \frac{1}{2} \left( P(S_T = -1) + P(S_T = 1 ) \right)\\
        & = {T\choose  (T+1)/2 } 2^{-T} \\
        \intertext{Define $T' = T -1$ thus $T'$ is even}
        & = \frac{T' !}{\left( \frac{T'}{2} \right) ! \left( \frac{T'}{2} \right) ! } \cdot \frac{ \left( \frac{T'}{2} \right) ! \left( \frac{T'}{2} \right) !  }{\left( \frac{T' + 2}{2} \right) ! \left( \frac{T'}{2} \right) !  } \cdot \frac{ \left( T' + 1 \right) !  }{\left(T' \right) !  } 2^{-(T' +1) } \\   
        & = {{T'} \choose{T'/2}}\frac{ T' + 1 }{ \frac{T'+2 }{2} + \frac{T' }{2} } 2^{-(T' +1) } \\
        & \ge 2^{T'-1} \left( \frac{T'}{2}\right)^{-1/2} \frac{ T' + 1 }{ T'+2 } 2^{-(T' +1) } \\
        & \ge \frac{1}{8T'} =  \frac{1}{8(T-1)}  \ge \frac{1}{16T} 
    \end{align*} 
    Thus combining two cases:
    \begin{align*}
        \E \left[ \text{sign} \left( \sum_{i=1}^{T} z_i \right) z_T\right] \ge \frac{1}{16T} 
    \end{align*}
    Due to symmetry, $S_t$ has the same distribution $\forall t \in [T]$:
    \begin{align*}
        \E \left[ \text{sign} \left( \sum_{i=1}^{T} z_T \right) z_t\right] = \E \left[ \text{sign} \left( \sum_{i=1}^{T} z_i \right) z_T\right] , \quad \forall t \in [T]
    \end{align*}
    Thus
    \begin{align*}
        \E  \left[  \sum_{t=1}^{T} \text{sign} \left( \sum_{i=1}^{T} z_i \right) z_t\right]  = T \E \left[ \text{sign} \left( \sum_{i=1}^{T} z_i \right) z_T\right]  \ge \sqrt{\frac{T}{16}}
    \end{align*}
    
\end{proof}

\corruptionLB*
\begin{proof}
    Consider the following random sequence: $z_{k+1}, z_{k+2}, \cdots, z_T \in \{-1, +1\}$ with equal probability and $z_1 = \cdots, z_k = \text{sign} (\sum_{t=k+1}^{T} z_t)$. And $\tilde{z_1} = \cdots = \tilde{z}_{k} = 0$ and $\tilde{z}_t = z_t, \forall t \ge k+1$. Let $q \in \R^n$ be any unity vector. Suppose $g_t = z_t q, \tilde g_t = \tilde z_t q, \forall t \in T$. Select $u^{\ast} = -D \sign(\sum_{t = k+1}^{T} g_t ) q$. Thus:
    \begin{align*}
        \E[R_T(u^{\ast})] &= \E\left[ \sum_{t=1}^{T} \langle g_t, w_t - u \rangle \right] \\
        &= \sum_{t=1}^{T} \E\left[ \langle g_t, w_t \rangle \right] - \E\left[  \sum_{t=1}^{k} \langle g_t, u \rangle \right] - \sum_{t=k+1}^{T} \E\left[ \langle g_t, u \rangle \right]  \\
        & = \sum_{t=1}^{T} \E\left[ \langle \E_{t}[ z_t ] q, w_t \rangle \right] + D k + D \sum_{t=k+1}^{T} \E\left[ z_t \sign \left(\sum_{t = k+1}^{T} z_t \right)\right] \\
        & = D k + D \sum_{t=k+1}^{T} \E\left[ z_t \sign \left(\sum_{t = k+1}^{T} z_t \right)\right] \\
        \intertext{by Lemma \ref{lem:random_seq}}
        & \ge D \left(k + \sqrt{\frac{T - k}{16}} \right) = \Omega(\|u^{\ast} \| (k + \sqrt{T}))
    \end{align*}
\end{proof}

The second lower bound in Theorem \ref{thm:corruptionLB_addition} has a matching log factors by uses the definition of ``regret at the origin'' of an online learning algorithm, formalized as:
\begin{align}
    R_T(0) = \sum_{t=1}^{T} \langle g_t, w_t - 0 \rangle \le \epsilon \label{eqn: const_regret}
\end{align}
This condition implies that an algorithm maintaining small $\epsilon$ is inherently conservative: it will perform well if the comparator is close to the origin, but this behavior may come at the cost of performing poorly if the comparator is far from the origin. Before presenting the analysis to Theorem \ref{thm:corruptionLB_addition}, we first list previously established result on properties of iterates $w_t$ produced by any algorithm has constant regret guarantee at the origin as defined in Equation (\ref{eqn: const_regret}). Lemma \ref{lem:const_regret_equivlence} was originally appeared in \cite{cutkosky2018algorithms} then being re-interpreted by \cite{orabona2019modern}. Lemma \ref{lem:unconstrained_iterate_growth} from \cite{zhang2022parameter}.

\begin{lemma}[Theorem 5.11 of \cite{orabona2019modern}]\label{lem:const_regret_equivlence}
    For any OLO algorithm suffers constant regret at the origin (Equation (\ref{eqn: const_regret})) and $|g_t| \le 1$, there exist $\beta_t \in R^d$ such that $\| \beta_t \| \le 1$ and 
     \begin{align*}
         w_t = \beta_t \left( \epsilon - \sum_{i=1}^{t-1} \langle g_i, w_i \rangle \right)
     \end{align*}
     for all $ t \in [T]$.
\end{lemma}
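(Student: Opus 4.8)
The plan is to exhibit the vectors $\beta_t$ directly and then verify the norm bound $\|\beta_t\|\le 1$ by probing the learner with a worst-case gradient. The whole argument rests on reading Equation~(\ref{eqn: const_regret}) as a guarantee that is \emph{uniform over every admissible gradient sequence and every horizon}: for any $g_1,\dots,g_s$ with $\|g_i\|\le 1$ one has $\sum_{i=1}^{s}\langle g_i,w_i\rangle\le\epsilon$, where each $w_i$ depends only on $g_1,\dots,g_{i-1}$ (conditioning on any internal randomness of the learner).

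First I would fix an index $t$ and the prefix $g_1,\dots,g_{t-1}$, which determines the iterates $w_1,\dots,w_t$, and set $c_t := \epsilon-\sum_{i=1}^{t-1}\langle g_i,w_i\rangle$. Applying Equation~(\ref{eqn: const_regret}) with horizon $t-1$ (equivalently, padding the sequence with zero gradients) gives $c_t\ge 0$. If $w_t=0$, put $\beta_t=0$ and the claimed identity is immediate. Otherwise, consider the admissible continuation in which the learner additionally receives $g_t := w_t/\|w_t\|$ at round $t$; since $\|g_t\|=1$ this is legal, and Equation~(\ref{eqn: const_regret}) with horizon $t$ yields
\[
\sum_{i=1}^{t-1}\langle g_i,w_i\rangle + \|w_t\| \;=\; \sum_{i=1}^{t}\langle g_i,w_i\rangle \;\le\; \epsilon,
\]
i.e. $\|w_t\|\le c_t$, which in particular forces $c_t>0$. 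Setting $\beta_t := w_t/c_t$ then gives $\|\beta_t\|\le 1$ and $w_t=\beta_t c_t=\beta_t\bigl(\epsilon-\sum_{i=1}^{t-1}\langle g_i,w_i\rangle\bigr)$, as required.

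The proof is short, so there is no serious technical obstacle; the one point that must be handled with care is precisely the quantifier structure — we are allowed to "attack" the learner at round $t$ with the gradient $w_t/\|w_t\|$ only because the constant-regret hypothesis is promised against all sequences, not just a fixed one. The remaining issues are pure bookkeeping: the degenerate case $w_t=0$ (take $\beta_t=0$) and the non-uniqueness of $\beta_t$ in the boundary case $c_t=0$, which forces $w_t=0$ so that any vector of norm at most $1$, e.g. $\beta_t=0$, works.
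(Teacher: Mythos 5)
Your proof is correct and is essentially the standard argument behind Theorem 5.11 of \citet{orabona2019modern}, which the paper cites without reproving: nonnegativity of the ``wealth'' $c_t=\epsilon-\sum_{i<t}\langle g_i,w_i\rangle$ via zero-padding, followed by the adversarial probe $g_t=w_t/\|w_t\|$ to show $\|w_t\|\le c_t$, and finally $\beta_t:=w_t/c_t$ (with $\beta_t=0$ in the degenerate case). You correctly flag the only delicate point, namely that the constant-regret hypothesis must be read as uniform over all admissible gradient sequences and horizons so that the probe at round $t$ is legitimate.
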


\begin{lemma}[Lemma 8 of \cite{zhang2022parameter}: Unconstrained OLO Iterate Growth]\label{lem:unconstrained_iterate_growth}
    Suppose assumptions in Lemma \ref{lem:const_regret_equivlence} is satisfied. Then for every $t \in [T], \|w_t\| \le \epsilon 2^{t-1}$.
\end{lemma}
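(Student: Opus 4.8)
The plan is to prove the bound $\|w_t\| \le \epsilon 2^{t-1}$ by induction on $t$, feeding in the explicit recursive representation of the iterates supplied by Lemma \ref{lem:const_regret_equivlence}. The base case $t=1$ is immediate: the sum $\sum_{i=1}^{0}\langle g_i, w_i\rangle$ is empty, so $w_1 = \epsilon\,\beta_1$, and since $\|\beta_1\|\le 1$ we get $\|w_1\|\le \epsilon = \epsilon 2^{0}$.

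For the inductive step, assume the bound holds for all indices $1,\dots,t-1$. Starting from $w_t = \beta_t\bigl(\epsilon - \sum_{i=1}^{t-1}\langle g_i, w_i\rangle\bigr)$, I would take norms and use $\|\beta_t\|\le 1$ together with the triangle inequality to obtain $\|w_t\| \le \epsilon + \sum_{i=1}^{t-1}|\langle g_i, w_i\rangle|$. Then Cauchy--Schwarz and the Lipschitz bound $\|g_i\|\le 1$ give $|\langle g_i, w_i\rangle|\le \|w_i\|$, hence $\|w_t\| \le \epsilon + \sum_{i=1}^{t-1}\|w_i\|$. Substituting the inductive hypothesis $\|w_i\|\le \epsilon 2^{i-1}$ and summing the geometric series $\sum_{i=1}^{t-1}2^{i-1} = 2^{t-1}-1$ yields $\|w_t\| \le \epsilon + \epsilon(2^{t-1}-1) = \epsilon 2^{t-1}$, which closes the induction.

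There is essentially no obstacle here; the argument is a routine one-line induction once the representation of Lemma \ref{lem:const_regret_equivlence} is in hand. The only points worth flagging are that the representation is exactly what drives the recursion, and that the resulting bound, although extremely crude (exponential in $t$), is precisely what is needed downstream: it formalizes the intuition in Section~\ref{sec:challenge} that an unconstrained learner with constant regret at the origin can have $\max_t\|w_t\|$ growing exponentially under adversarial corruption. One could additionally remark that the bound is nearly tight, since the worst-case sign choices of $g_i$ (making each inner product $\langle g_i, w_i\rangle$ as negative as possible) saturate the recursion up to constants.
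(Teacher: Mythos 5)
Your proof is correct and is essentially the same induction argument as in the original reference (Lemma~8 of \citet{zhang2022parameter}); the paper here just cites that result without reproducing the argument. The base case and the geometric-series step are both right, and using $\|\beta_t\|\le 1$ together with $\|g_i\|\le 1$ via Cauchy--Schwarz is exactly the intended route.
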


 We first derive an lower bound for algorithms satisfies assumption in Lemma \ref{lem:const_regret_equivlence}. The construction was originally appeared in Theorem 5.12 from \cite{orabona2019modern}. Finally, the lower bound in the context of adversarial corruptions is presented in Theorem \ref{thm:corruptionLB}.
\begin{lemma}[Unconstrained OLO Lower Bound]\label{lem:unconstrained_lower_bound}
    Suppose assumptions in Lemma \ref{lem:const_regret_equivlence} is satisfied, then set $g_t = [g_{t,1}, 0, \cdots, 0]$, $g_{t,1} = g = 1$ for all $t \in [T]$. Then there exists an $u^{\ast} \in \R^d$ such that $\|u^{\ast}\| = 2 \epsilon e^{T}$, and
    \begin{align*}
        \sum_{t=1}^{T} \langle g_t, w_t - u^{\ast} \rangle & \ge \epsilon + \|u^{\ast}\| \sqrt{ \frac{T}{30} \ln \left( 1 + \frac{\|u^{\ast}\|^2 T}{2 \epsilon^2} \right)} 
    \end{align*}
\end{lemma}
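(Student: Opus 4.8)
\textbf{Proof plan for Lemma~\ref{lem:unconstrained_lower_bound}.}
The plan is to lower bound $\sum_t \langle g_t, w_t - u^\ast\rangle = \sum_t \langle g_t, w_t\rangle - \langle \sum_t g_t, u^\ast\rangle$ by choosing $u^\ast$ to point in the direction $-e_1$ (so that $-\langle \sum_t g_t, u^\ast\rangle = \|u^\ast\| T$ since $g_{t,1}=1$), and then arguing that either the "reward" $-\sum_t \langle g_t, w_t\rangle$ is already large (forcing a large regret on its own via the $\|u^\ast\| T$ term being undercut, or rather: if the algorithm earns a lot of reward it contradicts the constant-regret assumption against some competitor), or the iterates $w_t$ stay small. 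The standard route, following Theorem~5.12 of \citet{orabona2019modern}, is: by Lemma~\ref{lem:const_regret_equivlence} we may write $w_t = \beta_t(\epsilon - \sum_{i<t}\langle g_i, w_i\rangle)$ with $\|\beta_t\|\le 1$. Let $\text{Reward}_T := \epsilon - \sum_{t=1}^T \langle g_t, w_t\rangle$, so that $\sum_t\langle g_t, w_t\rangle = \epsilon - \text{Reward}_T$ and the regret against $u^\ast$ equals $\epsilon - \text{Reward}_T + \|u^\ast\| T$. The key is a recursive control of $\text{Reward}_T$: since $g_t = e_1$, $\langle g_t, w_t\rangle = \beta_{t,1}(\epsilon - \sum_{i<t}\langle g_i,w_i\rangle)$, and hence $\text{Reward}_t = \text{Reward}_{t-1}(1 - \beta_{t,1}) \le \text{Reward}_{t-1}(1+|\beta_{t,1}|) \le 2\,\text{Reward}_{t-1}$ in the worst direction, but more usefully one shows $\text{Reward}_T \le \epsilon\prod_{t}(1+1) $ is too crude; instead one uses the refined argument that for \emph{any} target magnitude, the reward cannot exceed what a potential/betting-type upper bound allows, which is where the $\ln(1 + \|u^\ast\|^2 T/\epsilon^2)$ factor enters.

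Concretely, I would invoke the known lower bound machinery for unconstrained OLO: for the linear loss sequence $g_t = g\,e_1$ with $g=1$, any algorithm with $R_T(0)\le\epsilon$ satisfies, for every $u$, a regret lower bound of the form $R_T(u) \gtrsim \epsilon + \|u\|\sqrt{T\ln(1 + \|u\|^2 T/\epsilon^2)}$ (Theorem~5.12 in \citet{orabona2019modern}); this is essentially a consequence of the fact that a Fenchel-conjugate/duality argument forces the reward $\text{Reward}_T$ to satisfy $\text{Reward}_T \le \epsilon\exp(O(T))$ but with the precise constant such that matching it against the linear gain $\|u\| T$ from the competitor leaves the stated surplus. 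The specific choice $\|u^\ast\| = 2\epsilon e^T$ is dictated by wanting the two terms to balance: one checks that with this magnitude, $\sqrt{\tfrac{T}{30}\ln(1+\|u^\ast\|^2 T/(2\epsilon^2))}$ is of order $\sqrt{T\cdot T} = T$, matching the exponential-scale growth permitted by Lemma~\ref{lem:unconstrained_iterate_growth} ($\|w_t\|\le \epsilon 2^{t-1}$), so the bound is not vacuous — the competitor is chosen right at the edge of where the algorithm could plausibly have reached.

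The steps in order: (1) Fix $g_t = e_1$ and write the regret against a competitor $u = -s\,e_1$ (for $s>0$ to be chosen) as $\sum_t\langle g_t,w_t\rangle + sT = \epsilon - \text{Reward}_T + sT$. (2) Use Lemma~\ref{lem:const_regret_equivlence} to get the recursion $\text{Reward}_t = \text{Reward}_{t-1}(1-\beta_{t,1})$ with $\text{Reward}_0 = \epsilon$ and $|\beta_{t,1}|\le 1$. (3) Establish the sharp upper bound on $\text{Reward}_T$ in terms of $s$ and $T$ — this is the crux: one shows that if the algorithm's reward were too large relative to $\exp(s^2 T / (\text{const}\cdot\epsilon^2))$-type quantity it would violate a concavity/Jensen bound, giving $\text{Reward}_T \le \epsilon \exp\!\big(\tfrac{1}{2}\sum_t \beta_{t,1}^2 + \ldots\big)$ and then optimizing; equivalently cite Theorem~5.12 of \citet{orabona2019modern} directly. (4) Substitute $s = 2\epsilon e^T$ and simplify, using $\ln(1+x)$ monotonicity and crude constant bookkeeping to arrive at the factor $\tfrac{1}{30}$ and the additive $\epsilon$. (5) Conclude $\|u^\ast\| = s = 2\epsilon e^T$ and the displayed inequality. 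The main obstacle is step~(3): getting the \emph{logarithmic} factor with a clean explicit constant like $1/30$ rather than just an $\Omega(\cdot)$, which requires carefully tracking the duality/potential argument (or invoking the cited theorem and then absorbing the mismatch between its constants and the target constant $1/30$ into the generous slack — note $1/30$ is deliberately loose, suggesting the intended proof just plugs into \citet{orabona2019modern}'s Theorem~5.12 and bounds constants crudely).
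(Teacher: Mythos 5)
Your high-level plan matches the paper's: both proofs follow the structure of \citet{orabona2019modern}'s Theorem~5.12, using Lemma~\ref{lem:const_regret_equivlence} to write $w_t = \beta_t\bigl(\epsilon - \sum_{i<t}\langle g_i, w_i\rangle\bigr)$, unrolling into the product bound on the ``reward,'' and choosing a competitor $u^\ast$ in the $\pm e_1$ direction with magnitude $2\epsilon e^T$.

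However, your description of step~(3) --- which you correctly flag as the crux --- is where you go astray, and it is a genuine gap. You suggest that the logarithmic factor comes from \emph{sharpening} the reward bound, e.g.\ via ``a concavity/Jensen bound giving $\text{Reward}_T \le \epsilon\exp\bigl(\tfrac12\sum_t\beta_{t,1}^2 + \ldots\bigr)$,'' and you dismiss $\text{Reward}_T \le \epsilon\prod_t(1+1)$ as ``too crude.'' In fact the paper uses \emph{exactly} the crude bound: it takes the worst-case $\beta_t$, giving $\epsilon - \sum_t\langle g_t,w_t\rangle \le \epsilon\prod_t(1+|g|) \le \epsilon\exp(|g|^2 T)$, and with $|g|=1$ this is $\epsilon e^T$, which is even looser than your $\epsilon 2^T$. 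The logarithm does not come from tightening this inequality. It comes from a different observation: read the right-hand side as $f\bigl(-\sum_t g_{t,1}\bigr)$ where $f(x) = \epsilon\exp(x^2/T)$, apply the biconjugate identity $f = f^{\ast\ast}$, and choose $u^\ast_1 = \nabla f\bigl(-\sum_t g_{t,1}\bigr)$ (which is precisely where the value $2\epsilon e^T$ comes from --- it is not chosen by ``balancing'' but computed). This yields $\sum_t\langle g_t, w_t - u^\ast\rangle \ge \epsilon + f^\ast(u^\ast_1)$, and the $\sqrt{T\ln(\cdot)}$ appears because the conjugate of an exponential involves the Lambert $W$ function (Lemma~\ref{lem:conjugate_exp}), which is then lower-bounded by $W(x) > 0.6\ln(1+x)$ (Lemma~\ref{lem:lambert}); the $1/30$ falls out of this estimate combined with a short algebraic lemma (Lemma~\ref{lem:algebra}).

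Your fallback, ``cite Theorem~5.12 of \citet{orabona2019modern} directly,'' does not close the gap either: the lemma here requires a specific $u^\ast$ with $\|u^\ast\| = 2\epsilon e^T$ and an explicit constant $1/30$, neither of which is delivered by a black-box $\tilde\Omega(\cdot)$ citation. You would have to reproduce the Fenchel conjugate computation anyway --- which is what the paper does --- and that computation is exactly the step your plan leaves blank.
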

\begin{proof}
    Let $r_t = - \sum_{i=1}^{t} \langle g_i, w_i \rangle $. Then 
    \begin{align*}
         \epsilon - \sum_{t=1}^{T} \langle g_t, w_t \rangle &= \epsilon + r_{T-1} - \langle g_T, w_T \rangle \\
         \intertext{by Lemma \ref{lem:const_regret_equivlence}, there exists some $\beta_T: \|\beta_T\| \le 1$}
         & =  \epsilon + r_{T-1} - \langle g_T, \beta_T \rangle ( \epsilon + r_{T-1}  ) \\
         & =  \left( 1 - \langle g_t, \beta_t \rangle \right) ( \epsilon + r_{T-1}  ) 
    \end{align*}
    Then recursively expand $r_{T-1}, r_{T-2}, \cdots, r_1 $ with Lemma \ref{lem:const_regret_equivlence}, then for some $\beta_t: \|\beta_t \| \le 1$ 
    \begin{align*}
        \epsilon - \sum_{t=1}^{T} \langle g_t, w_t \rangle &= \epsilon \prod_{t=1}^{T} \left(1 -  \langle g_t, \beta_t \rangle \right)  
    \end{align*}
    Hence
    \begin{align*}
        \epsilon - \sum_{t=1}^{T} \langle g_t, w_t \rangle & \le \epsilon \prod_{t=1}^{T} \max_{\|\beta_t \| \le 1
         }\left(1 -  \langle g_t, \beta_t \rangle \right)  = \epsilon \prod_{t=1}^{T} \left(1 + |g| \right) = \epsilon  \left(1 + \frac{|g |^2 T}{T} \right)^T  \le \epsilon \exp\left( |g|^2 T\right)
    \end{align*}
    where we used inequality $(1 + \frac{x}{n})^n \le e^x$ by setting $n = T, x = |g|^2 T$ for the last step. Rearrange above equation, we have
    \begin{align*}
        \sum_{t=1}^{T}  \langle g_t, w_t \rangle  - \epsilon \ge - \epsilon \exp \left( |g|^2 T\right) = - \epsilon \exp\left( \frac{ | \sum_{t=1}^{T} g_{t,1} |^2}{ T} \right) = - f( -\sum_{t=1}^{T} g_{t,1} )
    \end{align*}
    where $f(x) = \epsilon \exp(\frac{ x^2}{T})$, by Theorem \ref{thm:conjugate} part 1, we have $f(x) = f^{\ast \ast} (x)$. Then by the definition of double conjugate $f^{\ast \ast}$,
    \begin{align}
        \sum_{t=1}^{T}  \langle g_t, w_t \rangle  - \epsilon \ge  - f^{\ast \ast} ( -\sum_{t=1}^{T} g_{t,1} ) = - \left( \sup_{u_1 \in \R} \langle -\sum_{t=1}^{T} g_{t,1}, u_1 \rangle - f^{\ast}(u_1) \right) \label{eqn: lb_double_conjugate}
    \end{align}
    By Theorem \ref{thm:conjugate} part 2, the supreme is achieve at 
    \begin{align*}
        u_1^{\ast} = \nabla f (-\sum_{t=1}^{T} g_{t,1}) = \frac{2 \epsilon }{T} \left(\sum_{t=1}^{T} g_{t,1}\right) \exp \left( \frac{ \left(\sum_{t=1}^{T} g_{t,1}\right)^2 }{T } \right) = 2\epsilon e^T
    \end{align*}
    Substitute $u_1^{\ast}$ and set $u^{\ast} = [u_1^{\ast} , 0, \cdots, 0]$, then Equation (\ref{eqn: lb_double_conjugate}) becomes:
    \begin{align*}
        \sum_{t=1}^{T}  \langle g_t, w_t \rangle  - \epsilon \ge   \sum_{t=1}^{T} \langle g_{t,1}, u_1^{\ast} \rangle + f^{\ast}(u_1^{\ast}) =  \sum_{t=1}^{T} \langle g_t, u^{\ast} \rangle + f^{\ast}(u_1^{\ast}) 
    \end{align*}
    Rearrange we have
    \begin{align}
        \sum_{t=1}^{T}  \langle g_t, w_t - u^{\ast} \rangle  \ge  \epsilon + f^{\ast}(u_1^{\ast})  \label{eqn: lb_conjugate}
    \end{align}
    It remains to obtain a lower bound to $f^{\ast}(u_1^{\ast})$. By Lemma \ref{lem:conjugate_exp} and Lemma \ref{lem:lambert}, we have  
    \begin{align*}
        f^{\ast}(u_1^{\ast}) &= \sqrt{\frac{T}{2}} |u_1^{\ast}| \left( \sqrt{W \left( \frac{T |u_1^{\ast}|^2 }{2 \epsilon^2 } \right)} - \frac{1}{\sqrt{W \left( \frac{T |u_1^{\ast}|^2 }{2 \epsilon^2 } \right)}}  \right) \\
        & \ge \sqrt{\frac{T}{2}} |u_1^{\ast}| \left( \sqrt{0.6 \ln \left( 1 +  \frac{T |u_1^{\ast}|^2 }{2 \epsilon^2 } \right)} - \frac{1}{\sqrt{0.6 \ln \left( 1 + \frac{T |u_1^{\ast}|^2 }{2 \epsilon^2 } \right)}} \right)
        \intertext{Notice that $0.6 \ln \left( 1 + \frac{T |u_1^{\ast}|^2 }{2 \epsilon^2 } \right) = 0.6 \ln (1 + 2 \exp(T)^2 T) > 1.5$, hence by Lemma \ref{lem:algebra}}
        & \ge \sqrt{\frac{T}{2}} |u_1^{\ast}|  \sqrt{ \frac{0.2}{3} \ln \left( 1 +  \frac{T |u_1^{\ast}|^2 }{2 \epsilon^2 } \right)} \\
        & = |u_1^{\ast}| \sqrt{ \frac{T}{30} \ln \left( 1 +  \frac{T |u_1^{\ast}|^2 }{2 \epsilon^2 } \right)} 
    \end{align*}
    Substitute the lower bound to $f^{\ast}(u_1^{\ast})$ to Equation (\ref{eqn: lb_conjugate})   
    \begin{align*}
        \sum_{t=1}^{T}  \langle g_t, w_t - u^{\ast}\rangle & \ge  \epsilon + |u_1^{\ast}|  \sqrt{ \frac{T}{30} \ln \left( 1 + \frac{|u_1^{\ast}|^2 T}{2 \epsilon^2} \right)}  = \epsilon + \|u^{\ast}\| \sqrt{ \frac{ T}{30} \ln \left( 1 + \frac{\|u^{\ast}\|^2 T}{2 \epsilon^2} \right)} 
    \end{align*}
\end{proof} 
\begin{restatable}{Theorem}{corruptionLB_addition}
    \label{thm:corruptionLB_addition}
    For any algorithm that maintains Equation (\ref{eqn: const_regret}) for some $\epsilon>0$, there exists a sequence of $\tilde{g}_1, \cdots, \tilde{g}_T$ and $g_1, \cdots, g_T$ such that $\|g_t\|, \|\tilde{g}_t\| \le 1$, $\sum_{t=1}^{T} \one \{ \tilde{g}_t  \neq g_t \} = k$, and a $u^{\ast} \in \R^d$ such that 
\begin{align*}
    \sum_{t=1}^{T} \langle g_t, w_t - u^{\ast} \rangle 
    \ge  \tilde{\Omega} \left[\epsilon  + \|u^{\ast}\| \left( \sqrt{T} + k  \right) \right]
\end{align*}
\end{restatable}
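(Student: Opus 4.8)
The plan is to combine the two lower-bound constructions that the preceding lemmas already supply. The target bound $\tilde\Omega[\epsilon + \|u^\ast\|(\sqrt T + k)]$ is really a max of three regimes — the $\epsilon$ term, the $\|u^\ast\|k$ corruption term, and the $\|u^\ast\|\sqrt T$ (with log) baseline term — so it suffices to exhibit, for a prescribed comparator magnitude $D$, an instance achieving each and then take the worst. The $\epsilon$ term is immediate from the definition of regret at the origin in Equation~(\ref{eqn: const_regret}): choosing $g_t$ identically $(1,0,\dots,0)$ forces $\sum_t\langle g_t,w_t\rangle \ge$ (something), but more to the point Lemma~\ref{lem:unconstrained_lower_bound} already packages the $\epsilon + \|u^\ast\|\sqrt{T\ln(\cdots)}$ part for the uncorrupted sequence, so I would invoke that lemma verbatim to handle two of the three regimes simultaneously.

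The remaining work is the $\|u^\ast\|k$ term, and here I would reuse the randomized construction from the proof of Theorem~\ref{thm:corruptionLB}: take $z_{k+1},\dots,z_T$ i.i.d.\ uniform in $\{-1,+1\}$, set $z_1=\dots=z_k=\sign(\sum_{t>k}z_t)$ and $\tilde z_1=\dots=\tilde z_k=0$, $\tilde z_t=z_t$ otherwise, with $g_t=z_t q$, $\tilde g_t=\tilde z_t q$ for a unit vector $q$, and $u^\ast=-D\,\sign(\sum_{t>k}z_t)\,q$. Exactly as in that proof, the first $k$ rounds contribute $Dk$ to $\E[R_T(u^\ast)]$ because the learner sees $\tilde g_t=0$ there and so cannot have anticipated the sign, while Lemma~\ref{lem:random_seq} gives the $\sqrt{(T-k)/16}$ contribution from the random tail; hence $\E[R_T(u^\ast)]\ge D(k+\sqrt{(T-k)/16})$. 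Since this holds in expectation over the random sequence, some realization achieves it, giving a deterministic instance with $\|u^\ast\|=D$, at most $k$ corrupted rounds, and regret $\Omega(\|u^\ast\|(k+\sqrt T))$.

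To assemble the final statement I would, given any target magnitude, run whichever of the two constructions yields the larger lower bound (formally: note $a+b+c \le 3\max(a,b,c)$, so a bound of the form $\tilde\Omega[\epsilon+\|u^\ast\|(\sqrt T+k)]$ follows from separately matching each summand up to constants/logs). One subtlety to check is consistency of the comparator: Lemma~\ref{lem:unconstrained_lower_bound} pins $\|u^\ast\|=2\epsilon e^T$, which is a specific huge magnitude, whereas the corruption construction allows any $D$; since the theorem only asks for the existence of \emph{some} $u^\ast$ and \emph{some} gradient sequence, this is not an obstacle — I simply state the two instances and observe that at least one of them realizes the claimed bound for the relevant regime.

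**Main obstacle.** The delicate point is the $\|u^\ast\|k$ lower bound for \emph{deterministic} algorithms with only the constant-regret-at-origin assumption: the argument is inherently probabilistic (the adversary hides the first $k$ gradients and then reveals a random tail whose aggregate sign the learner cannot predict), so I must be careful that the $Dk$ contribution survives — this relies on $\E_t[z_t\mid \text{history up to }t]=0$ for $t\le k$ \emph{from the learner's viewpoint}, i.e.\ that $w_t$ for $t\le k$ is independent of $\sign(\sum_{t>k}z_t)$, which holds because $\tilde g_1,\dots,\tilde g_{k}$ carry no information. Making this independence rigorous, and then passing from the expectation bound to a fixed worst-case sequence, is the step that needs the most care; everything else is bookkeeping with the cited lemmas.
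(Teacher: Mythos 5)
Your proposal diverges from the paper's proof in a way that leaves a genuine gap. The paper does not combine two separate lower-bound instances by a max argument. Instead it builds a \emph{single} deterministic instance: it takes the gradient sequence and comparator from Lemma~\ref{lem:unconstrained_lower_bound} (observed gradients $\tilde g_t = (1,0,\dots,0)$, comparator $u^*$ with $\|u^*\| = 2\epsilon e^T$) and then \emph{adds} the $k$ corruptions on top, setting the true gradient $g_t = \tilde g_t - u^*/\|u^*\|$ on exactly $k$ indices $S$. Writing
\begin{align*}
\sum_{t=1}^T \langle g_t, w_t - u^*\rangle = \sum_{t=1}^T \langle \tilde g_t, w_t - u^*\rangle + \sum_{t\in S}\langle g_t - \tilde g_t, w_t - u^*\rangle,
\end{align*}
the first sum is handled by Lemma~\ref{lem:unconstrained_lower_bound} (supplying the $\epsilon + \|u^*\|\sqrt{T\ln(\cdot)}$ part \emph{with} the log factor), while the second equals $\sum_{t\in S}\bigl(\|u^*\| - \langle u^*/\|u^*\|, w_t\rangle\bigr) \ge k\|u^*\| - \sum_{t\in S}\|w_t\|$. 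The decisive step is Lemma~\ref{lem:unconstrained_iterate_growth}: any algorithm with constant regret at the origin has $\|w_t\| \le \epsilon 2^{t-1} \le \|u^*\|/2$, so the corruption rounds contribute at least $\|u^*\|k/2$. One instance, exactly $k$ corruptions, all three terms.

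Your plan --- prove $\epsilon + \|u^*\|\sqrt{T\ln(\cdot)}$ via the uncorrupted Lemma~\ref{lem:unconstrained_lower_bound} instance, prove $\|u^*\|k$ via the randomized Theorem~\ref{thm:corruptionLB} instance, and ``take the worst'' --- does not produce a valid witness for the existential claim. The Lemma~\ref{lem:unconstrained_lower_bound} instance has zero corruptions, not $k$, so it cannot serve as the witness when it is the larger of the two; and the randomized instance, while it does have $k$ corruptions, carries no $\ln$ factor. The inequality $a+b+c \le 3\max(a,b,c)$ tells you which of two numbers is bigger; it does not manufacture a single gradient sequence plus comparator that realizes the sum, and the theorem requires one sequence satisfying $\sum_t \one\{\tilde g_t \ne g_t\} = k$ for which the bound holds. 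If one instead just uses the Theorem~\ref{thm:corruptionLB} construction alone with $D$ chosen to dominate $\epsilon$, the $\epsilon$ term can be absorbed, but the $\ln$ factor is lost entirely --- which defeats the stated purpose of this second lower bound, namely matching the log factor of the upper bound.

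Finally, your flagged ``main obstacle'' (rigorizing the probabilistic-to-deterministic passage) is a red herring for this theorem: the paper's proof here is entirely deterministic, inheriting a fixed sequence from Lemma~\ref{lem:unconstrained_lower_bound} and choosing the corruption directions explicitly. The probabilistic argument lives only in Theorem~\ref{thm:corruptionLB}. The ingredient you are missing is Lemma~\ref{lem:unconstrained_iterate_growth}, which exploits the constant-regret-at-origin hypothesis to bound iterate growth, and that is precisely what makes the corruption rounds' contribution order $\|u^*\|k$ rather than something that could be wiped out by large $\|w_t\|$.
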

\begin{proof}
the proof strategy is that algorithm with regret guarantee as shown in Equation (\ref{eqn: const_regret}) attains a matching lower bound $\tilde{\Omega}(\epsilon + \|u\| \sqrt{T} )$ in responding to $\bg_t$ as shown in Lemma \ref{lem:unconstrained_lower_bound}. The by reversing the direction of exactly $k$ gradients by taking account into the growth behavior of $w_t$ (Lemma \ref{lem:unconstrained_iterate_growth}) and a particular hard comparator $u^{\ast}$ constructed in Lemma \ref{lem:unconstrained_lower_bound}, we can show regrets during those rounds builds up linearly.
Let $\tilde{g}_1, \cdots, \tilde{g}_T$, where $\|\tilde{g_t}\| \le 1$ as defined in Lemma \ref{lem:unconstrained_lower_bound} and suppose algorithm operates on those gradients. Let $S$ be the index set $S = \{t \in [T] : g_t \neq \tilde{g}_t \}$. Then by the lower bound presented in Lemma \ref{lem:unconstrained_lower_bound}
\begin{align*}
    \sum_{t=1}^{T} \langle g_t, w_t - u^{\ast} \rangle & =  \sum_{t=1}^{T} \langle \tilde{g}_t, w_t - u^{\ast} \rangle +  \sum_{t=1}^{T} \langle g_t - \tilde{g}_t, w_t -  u^{\ast} \rangle \\
    & \ge \tilde{\Omega} ( \epsilon + \|u^{\ast}\|\sqrt{T}) +  \sum_{t \in S} \langle g_t - \tilde{g}_t, w_t - u^{\ast} \rangle 
\end{align*}
for some $u^{\ast} \in \R^d$ and $\|u^{\ast}\| = 2\epsilon e^T$. For $t \in S$, define $g_t$ as follows
\begin{align*}
    g_t  =  \tilde{g}_t - \frac{u^{\ast}}{ \|u^{\ast}\|}
\end{align*}
Then 
\begin{align*}
    \sum_{t=1}^{T} \langle g_t, w_t - u^{\ast} \rangle 
    & \ge \tilde{\Omega} ( \epsilon + \|u^{\ast}\|\sqrt{T})  +  \sum_{t \in S} \langle - \frac{u^{\ast}}{ \|u^{\ast}\|} , w_t \rangle + \sum_{t \in S} \langle \frac{u^{\ast}}{ \|u^{\ast}\|} ,u^{\ast} \rangle  \\
    & \ge  \tilde{\Omega} ( \epsilon + \| u^{\ast}\|\sqrt{T})  -  \sum_{t \in S} \| w_t\|  + k \| u^{\ast} \|
\end{align*}
Finally, By Lemma \ref{lem:unconstrained_iterate_growth} $\|w_t\| \le \epsilon 2^{t-1}$. Hence $ \| w_t\| \le \frac{1}{2} \| u^{\ast}\| $
\begin{align*}
    \sum_{t=1}^{T} \langle g_t, w_t - u^{\ast} \rangle 
    & \ge  \tilde{\Omega}(\epsilon + \| u^{\ast}\|\sqrt{T}) -  \frac{k}{2} \|  u^{\ast}\|  + k \| u^{\ast} \| = \tilde{\Omega}\left( \epsilon + \| u^{\ast}\| \left( \sqrt{T} + k \right) \right)
\end{align*}
\end{proof}

\section{Adaptive Thresholding}\label{app:filter}
In this section, we formalize the adaptive thresholding and clipping mechanism, namely $\textsc{filter}$, summarized in Section \ref{sec:filter}. This mechanism relies on prior knowledge of big corrupted gradients numbers which is naturally restricted by corruption model in Equation (\ref{eqn: assumption_big_rounds}). We present this result as Lemma \ref{lem:num_big_corruption}, followed \textsc{filter} as Algorithm \ref{alg:robust_lag} and its property in Lemma \ref{lem:filter}.
\begin{lemma}\label{lem:num_big_corruption}
    For $g_1, \cdots, g_T$ and $\tilde{g}_1, \cdots, \tilde{g}_T$  that satisfies Equation (\ref{eqn: assumption_big_rounds}), then there are at most $k$ number of $\tilde{g}_t$ such that $\|\tilde{g}_t\|\ge 2G$.
\end{lemma}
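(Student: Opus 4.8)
The plan is to show that every round in which $\|\tilde g_t\|\ge 2G$ is necessarily one of the ``large corruption'' rounds counted by $\cB$ in Equation~(\ref{eqn: assumption_big_rounds}), and then simply invoke the hypothesis $|\cB|\le k$. Concretely, I would define the set $\cS = \{t\in[T]:\|\tilde g_t\|\ge 2G\}$ and argue $\cS\subseteq\cB$.

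The key step is a one-line reverse triangle inequality. Fix $t\in\cS$. Since $G\ge\max_t\|g_t\|\ge\|g_t\|$, we have
\begin{align*}
    \|g_t-\tilde g_t\| \;\ge\; \|\tilde g_t\| - \|g_t\| \;\ge\; 2G - G \;=\; G,
\end{align*}
so $t$ belongs to $\cB = \{t\in[T]:\|g_t-\tilde g_t\|\ge G\}$. This establishes $\cS\subseteq\cB$.

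Taking cardinalities and applying the corruption model~(\ref{eqn: assumption_big_rounds}) gives $|\cS|\le|\cB|\le k$, which is exactly the claim. There is no real obstacle here: the only thing being used beyond the triangle inequality is that $G$ is a valid Lipschitz bound so that $\|g_t\|\le G$, and the assumed bound $|\cB|\le k$. I would state it in roughly the three sentences above.
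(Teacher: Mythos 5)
Your proof is correct and rests on the same underlying idea as the paper's: since $\|g_t\|\le G$, any $\tilde g_t$ with $\|\tilde g_t\|\ge 2G$ is forced to deviate by at least $G$ from $g_t$, landing $t$ in the set $\cB$ of Equation~(\ref{eqn: assumption_big_rounds}). Your one-line reverse-triangle-inequality argument is actually cleaner than the paper's version, which attempts the same worst-case reasoning (taking $g_t$ aligned with $\tilde g_t$ at magnitude $G$) via a chain of set equalities/inclusions that is not quite airtight as written, and which also silently swaps the $\ge$ in~(\ref{eqn: assumption_big_rounds}) for a strict $>$; your phrasing avoids both issues.
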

\begin{proof}
    By definition of $\cB = \{ t \in [T]: \|g_t - \tilde{g}_t \| > G \}$:
    \begin{align*}
        \cB  & := \{t \in [T]: \|g_t -\tilde{g}_t \| > G \}  \\
        & = \{t \in [T]: \|g_t -\tilde{g}_t \| > G, \|g_t\| < G \} \cup  \{t \in [T]: \|g_t -\tilde{g}_t \| > G , \|g_t\| = G\} \\
        & \supseteq \{t \in [T]: \|g_t -\tilde{g}_t \| > G , g_t = G \cdot \text{sign} ( \tilde{g}_t )\} \\
        & = \{t \in [T]: \|G - \|\tilde{g}_t\| \| > G \} \\
        & = \{t \in [T]: \|\tilde{g}_t\|  > 2G \} 
    \end{align*}
    Finally, due to Equation (\ref{eqn: assumption_big_rounds}), $k:= |\cB| \ge | \{t \in [T]: \|\tilde{g}_t\|  > 2G \}  |$.
\end{proof}

\begin{algorithm}[H]
    \caption{\textsc{filter}: $k$-lag Thresholding and Gradient Clipping}\label{alg:robust_lag}
\begin{algorithmic}[1]
\STATE {\bfseries Input:} Corruption parameter $k$, Initial Lipschitz guess: $\tau = \tau_G > 0$.
\STATE {\bfseries Initialize:} Filter threshold $h_1 = \tau$, $\cP=\{ \} $.
\FOR{$t = 1$ {\bfseries to} $T$}
    \STATE Receive $\tilde{g}_t$.
    \IF{$\|\tilde{g}_t\| > h_t$}
        \STATE Set $\tilde{g}_t^c = \frac{\tilde{g}_t}{\|\tilde{g}_t\|} h_t$, update counter: $n = n + 1$.
        \IF{$n = k$}
            \STATE Update Threshold $h_{t+1} = 2 h_t$, reset counter: $n = 0$.
        \ENDIF
    \ELSE
        \STATE Set $\tilde{g}^c_t = \tilde{g}_t$, register rounds $\cP = \cP \cup \{t\}$.
        \STATE Maintain threshold $h_{t+1} = h_t$.
    \ENDIF
    \STATE Output $\tilde{g}_t^c, h_{t+1}$.
\ENDFOR
\end{algorithmic}
\end{algorithm}

We display some convenience property of Algorithm \textsc{filter}, notice all quantities apart from $h_t$ are for assisting analysis only
\begin{lemma}(Algorithm \ref{alg:robust_lag} property)\label{lem:filter}
    Suppose $g_t, \tilde{g}_t$ satisfies Equation (\ref{eqn: assumption_big_rounds}), and Algorithm $\ref{alg:robust_lag}$ receives $\tilde{g}_t$, then its per iteration outputs $\tilde{g}_t^c, h_{t+1}$ satisfies:
    \begin{enumerate}
        \item[(1)] $h_{t+1} = h_{t}, \forall t \in \cP = \{ t \in [T]: \tilde{g}_t^c = \tilde{g}_t \}$
        \item[(2)] $ \|\tilde{g}_t^c\| \le h_t, \forall t \in [T]$
        \item[(3)] $\tau = h_1 \le h_2 \le \cdots \le h_{T+1} \le \max(\tau, 4G )$ 
        \item[(4)] $|\cP| \ge T -  (k+1) \max\left( \lceil \log_2 \frac{8G}{\tau} \rceil, 1 \right)$
    \end{enumerate}
\end{lemma}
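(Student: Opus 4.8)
The plan is to dispatch the four claims in the order listed: parts (1) and (2) are immediate from the branch structure of Algorithm~\ref{alg:robust_lag}, part (3) reduces to a counting argument on how often the threshold can double (fed by Lemma~\ref{lem:num_big_corruption}), and part (4) then drops out by counting the rounds on which clipping is active. For (1) and (2): at every round exactly one of the two branches runs. If $\|\tilde g_t\|\le h_t$ then $t$ is added to $\cP$, the algorithm sets $\tilde g_t^c=\tilde g_t$ and $h_{t+1}=h_t$, so (1) holds and $\|\tilde g_t^c\|=\|\tilde g_t\|\le h_t$. If $\|\tilde g_t\|> h_t$ then $\tilde g_t^c=\frac{\tilde g_t}{\|\tilde g_t\|}h_t$, so $\|\tilde g_t^c\|=h_t<\|\tilde g_t\|$ and in particular $\tilde g_t^c\ne\tilde g_t$; hence $\cP$ is \emph{exactly} the set of rounds taking the first branch, which is all (1) and (2) need.

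For part (3), monotonicity and $h_1=\tau$ are immediate since $h_{t+1}\in\{h_t,2h_t\}$ with $h_t>0$. For the upper bound, call $t$ a \emph{trigger round} if $\|\tilde g_t\|>h_t$; by the previous paragraph $\bar{\cP}$ is exactly the set of trigger rounds. By construction $h_t$ doubles after each block of $k{+}1$ consecutive trigger rounds, with the counter reset after each doubling so these blocks are disjoint (this is the mechanism described for \textsc{filter} in Section~\ref{sec:filter}); thus if $D$ is the total number of doublings then $h_{T+1}=\tau\,2^{D}$ and $|\bar{\cP}|\le (D+1)(k+1)$, the final possibly-incomplete block contributing at most $k$. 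It remains to bound $D$. Split the doublings by whether the threshold $h_t=\tau\,2^{m}$ at the moment of doubling (with $m$ the number of prior doublings) satisfies $h_t<2G$ or $h_t\ge 2G$. A doubling with $h_t<2G$ forces $m<\log_2(2G/\tau)$, so at most $\max(\lceil\log_2(2G/\tau)\rceil,0)$ doublings can occur while $h_t<2G$. A doubling occurring while $h_t\ge 2G$ would require $k{+}1$ trigger rounds all with $\|\tilde g_t\|>h_t\ge 2G$, but Lemma~\ref{lem:num_big_corruption} allows at most $k$ rounds with $\|\tilde g_t\|\ge 2G$ in all of $[T]$ — a contradiction, so no such doubling ever completes. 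Hence $D\le\max(\lceil\log_2(2G/\tau)\rceil,0)$, and a short case split ($\tau\ge 2G$ versus $\tau<2G$, using $2^{\lceil\log_2 x\rceil}<2x$) gives $h_{T+1}=\tau\,2^{D}\le\max(\tau,4G)$.

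For part (4), combine $|\bar{\cP}|\le (D+1)(k+1)$ with $D\le\max(\lceil\log_2(2G/\tau)\rceil,0)$ and the estimate $\lceil\log_2(2G/\tau)\rceil+1=\lceil\log_2(2G/\tau)+1\rceil\le\lceil\log_2(8G/\tau)\rceil$ (the right side being in any case at least $1$, which absorbs the case $\tau\ge 2G$) to get $|\bar{\cP}|\le (k+1)\max(\lceil\log_2(8G/\tau)\rceil,1)$; subtracting from $T$ yields the claimed bound on $|\cP|$. The only genuinely nontrivial step is the counting argument for $D$ in part (3): formalizing that any doubling attempted once the threshold has reached $2G$ would have to ``consume'' more than the $k$ large-corruption rounds permitted by Lemma~\ref{lem:num_big_corruption}, while correctly handling the counter resets and the ceiling/off-by-one in how many doublings fit below $2G$. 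Everything else is direct inspection of the pseudocode.
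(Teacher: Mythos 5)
Your proof is correct and follows essentially the same approach as the paper's: parts (1) and (2) by direct branch inspection, part (3) by bounding the number of doublings via Lemma~\ref{lem:num_big_corruption} (any doubling once the threshold reaches $2G$ would require $k+1$ rounds with $\|\tilde g_t\|>2G$, a contradiction), and part (4) by noting each doubling consumes exactly $k+1$ trigger rounds so $|\bar\cP|\le (D+1)(k+1)$. Your bookkeeping is in fact slightly tighter than the paper's (you get $D\le\max(\lceil\log_2(2G/\tau)\rceil,0)$, i.e.\ $\lceil\log_2(4G/\tau)\rceil$ after the $+1$, versus the paper's $\lceil\log_2(8G/\tau)\rceil$), and you then correctly loosen to match the stated constant.
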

\begin{proof}
We show each property in turns. 
    \begin{enumerate}
        \item[(1)] guaranteed by algorithm line 11-12.
        \item[(2)] either line 4 or line 11 is evoked to compute $\tilde{g}_t^c$.
        \item[(3)] $h_t$ being non-decreasing sequence and $h_t = \tau$ is by construction. Hence it remains to show an upperbound to $h_t, \forall t \in [T+1]$. The key to this proof is there are at most $k$ number of $\tilde g_t$ such that $\|\tilde g_t \| \ge 2G$ gaurateed by Equation (\ref{eqn: assumption_big_rounds}) (See Lemma \ref{lem:num_big_corruption}). 
        
        In the case where initial value of $\tau \ge 2G$, then the check point $h$ never doubled since each time of doubling requires $k+1$ number of $\| \tilde{g}_t \|$ exceeds current one. (by line 4-9)

        Now, we consider $\tau < 2G$, where threshold doubling $h_{t+1} = 2h_t$ was evoked at least once (line 8) with initial value $\tau$. Then $h_{T+1} = 2^N \tau$ for some $N \in [T]$, where $N$ is the number of time line 8 was evoked. 
        
        On the other hand, at least $k+1$ number of $\tilde{g}_t$ such that $\|\tilde{g}_t\| \ge 2^{N-1} \tau $ were observed thus have triggered line 8 so eventually $h_{T+1} = 2^N \tau$. Thus by Lemma \ref{lem:num_big_corruption}, $2^{N-1} \tau \le 2G $, $N \le \log_2 \frac{4G}{\tau}$.

        Thus $h_{T+1} = 2^{N} \tau \le 4G$. Moreover, $h_t$ is non-decreasing, and we complete the proof.
        
        \item[(4)] $|\cP|$ is associated with the number of time in which check point $h$ doubled. By the proof to property $(3)$ that $2^{N-1} \tau \le 2G $, thus $N \le \max \left( \lceil \log_2 \frac{4G}{\tau} \rceil, 0 \right) $ as an upper bound that the number of line 8 being executed.
        
        Each execution of line 8 requires exactly $k+1$ number of $\tilde{g}_t$ being clipped. Thus there were $(k+1) \max\left( \lceil \log_2 \frac{4G}{\tau} \rceil, 0 \right)$ number of rounds not being register to $\cP$ by the time when last time step $t^{\ast} \in [T]$, when the execution of line 8 happens.
        
        For $T \ge t > t^{\ast}$, there were less than $(k+1)$ number of $\tilde{g}_t$ not being registered into $\cP$, otherwise threshold would have been doubled. Thus
        \begin{align*}
            |\cP| \le (k+1) \max\left( \left\lceil \log_2 \frac{4G}{\tau} \right\rceil, 0 \right) + (k+1) = (k+1) \max\left( \left\lceil \log_2 \frac{8G}{\tau} \right\rceil, 1 \right)
        \end{align*}
    \end{enumerate}
\end{proof}

\section{Adaptive Tracking}\label{app:tracking}
We introduce \textsc{tracker}, a simple doubling mechanism for estimating $\max_t \|w_t\|$. as shown in Algorithm \ref{alg:tracker}. The properties of \textsc{tracker} is displayed in Lemma \ref{lem:tracker}. 
\begin{algorithm}[H]
    \caption{\textsc{Tracker}: Track the Magnitude of $w_t$}\label{alg:tracker}
\begin{algorithmic}[1]
\STATE {\bfseries Input:} Initial magnitude guess: $\tau = \tau_D > 0$.
\STATE {\bfseries Initialize:} Filter threshold $z_1 = \tau$, (Counter, Set): $(n = 0, \cT_n = \{ \})$, Checkpoint $t_0 = 1$.
\FOR{$t = 1$ {\bfseries to} $T$}
    \STATE Receive $w_t$.
    \IF{$\|w_t\| > z_t$}
        \STATE Double: $z_{t+1} = 2 \|w_t\|$.
        \STATE Update counter: $n = n + 1$.
        \STATE Add a new checkpoint: $t_n = t$, initialize a new set: $\cT_n = \{ \}$.
    \ELSE
        \STATE Maintain: $z_{t+1} = z_t$.
    \ENDIF
    \STATE Register round: $\cT_n \leftarrow \cT_n \cup \{t\}$.
\ENDFOR
\end{algorithmic}
\end{algorithm}

\begin{lemma}(Algorithm \ref{alg:tracker} property)\label{lem:tracker}
     Algorithm $\ref{alg:tracker}$ guarantees
    \begin{enumerate}
        \item[(1)] $[T]$ is partitioned by $\cT_0, \cT_1, \cT_2, \cdots, \cT_N$, for some $N$ where $N \le \max(0, \log_2 2 \max_t \|w_t \| / \tau) $.
        \item[(2)] $\tau = z_{t} = z_{t+1}, \|w_t\| \le \tau, \forall t \in \cT_0$
        \item[(3)] $\|w_t\| \le 2\|w_{t_n}\|, \forall t \in \cT_n , n \in [N]$
        \item[(4)] $\tau = z_1\le z_2 \le \cdots \le z_{T+1} \le \max ( \tau, 2 \max_t \|w_t\|)$
    \end{enumerate}
\end{lemma}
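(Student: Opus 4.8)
The plan is to establish the four properties in the order (4), (2), (3), (1), since the monotonicity and reset behaviour of the thresholds $z_t$ recorded in (4) is exactly what drives the counting argument for (1), while (2) and (3) are purely local statements about what happens inside a single $\cT_n$.

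First I would prove (4). Monotonicity is immediate from the update rule: on the branch $\|w_t\|>z_t$ we set $z_{t+1}=2\|w_t\|>2z_t\ge z_t$ (using $z_t>0$), and on the other branch $z_{t+1}=z_t$; together with $z_1=\tau$ this gives $\tau=z_1\le\cdots\le z_{T+1}$. For the upper bound, observe that each time $z$ changes it is reset to $2\|w_t\|\le 2\max_t\|w_t\|$, so either $z$ never changes, in which case $z_{T+1}=\tau$, or $z_{T+1}=2\|w_{t^\ast}\|$ where $t^\ast$ is the last round at which a doubling occurred; in both cases $z_{T+1}\le\max(\tau,2\max_t\|w_t\|)$. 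Next, for (2): if $t\in\cT_0$ then the counter $n$ equals $0$ at the registration step of round $t$, which forces the ``maintain'' branch at round $t$ (otherwise $n$ would have been incremented), so $z_{t+1}=z_t$ and $\|w_t\|\le z_t$; moreover $n=0$ at that point means no doubling occurred in rounds $1,\dots,t$, hence $z_1=\cdots=z_{t+1}=\tau$, which yields $z_t=z_{t+1}=\tau$ and $\|w_t\|\le\tau$.

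For (3), fix $n\in[N]$ and $t\in\cT_n$. The checkpoint $t_n$ is the round of the $n$-th doubling, where $z_{t_n+1}=2\|w_{t_n}\|$ and (after resetting $\cT_n$) $t_n$ is registered into $\cT_n$; if $t=t_n$ the claim $\|w_t\|\le 2\|w_{t_n}\|$ is trivial. If $t>t_n$ with $t\in\cT_n$, then no doubling occurs strictly between rounds $t_n$ and $t$ (a doubling would have incremented the counter and excluded $t$ from $\cT_n$), so $z$ is constant on that range and $z_t=z_{t_n+1}=2\|w_{t_n}\|$; since round $t$ took the ``maintain'' branch we have $\|w_t\|\le z_t=2\|w_{t_n}\|$.

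Finally (1): every round $t$ is registered into exactly one set, namely the $\cT_n$ for the current value of the counter, and the counter only increases through the loop, so $\cT_0,\dots,\cT_N$ are disjoint and cover $[T]$. To bound $N$, I would iterate the doubling inequality together with monotonicity from (4): at round $t_n$ we have $z_{t_n+1}=2\|w_{t_n}\|>2z_{t_n}\ge 2z_{t_{n-1}+1}$, so by induction $z_{t_N+1}>2^{N}z_1=2^{N}\tau$; on the other hand $z_{t_N+1}=2\|w_{t_N}\|\le 2\max_t\|w_t\|$, hence $2^{N}\tau<2\max_t\|w_t\|$ and $N<\log_2\!\big(2\max_t\|w_t\|/\tau\big)$, which combined with the trivial bound $N\ge 0$ gives the stated estimate. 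I do not expect a genuine obstacle: this is elementary bookkeeping parallel to Lemma~\ref{lem:filter}. The only place demanding care is keeping the index conventions straight — distinguishing $z_t$ from $z_{t+1}$, remembering that the checkpoint round $t_n$ itself lies in $\cT_n$, and handling the edge cases consistently (an empty $\cT_0$ when $t_1=1$, the case $N=0$ when $\tau\ge 2\max_t\|w_t\|$ so no doubling ever fires, and the base $n=1$ of the induction where $z_{t_1}\ge z_1=\tau$).
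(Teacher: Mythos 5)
Your proposal is correct and follows essentially the same route as the paper: verify monotonicity and the upper bound on $z_t$ from the update rule, use the counter mechanics for the partition, and bound $N$ by iterating the doubling inequality. One small way your write-up is actually tighter than the paper's: you correctly work with the update $z_{t_n+1}=2\|w_{t_n}\|>2z_{t_n}$ as a \emph{strict inequality} yielding $z_{t_N+1}>2^N\tau$ by induction, whereas the paper's proof sloppily writes the equality $z_{t_n+1}=2z_{t_n}$ (and $z_{T+1}=2^N\tau$), which is not literally what the algorithm does — though the final conclusions agree. Your handling of the edge cases ($N=0$, $t=t_n$) and the ordering (4) $\to$ (2) $\to$ (3) $\to$ (1) are both sound.
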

\begin{proof}
We show each property in turns. 
    \begin{enumerate}
        \item [(1)] partition property can be seen by in the initialization of $n=0$ with increment of 1 (line 6) and whenever counter $n$ updates a new set $\cT_n$ is created (line 7). And $\forall t \in [T]$ is assigned to $\cT_n$ for some $n \ge 0$ (line 11). 
        
        As $z_{T+1} = 2^N \tau \le 2 \max_t \| w_t \|$. Thus $N \le \max(0, \log_2 2 \max_t \|w_t \| / \tau)$. 

        \item [(2)] For the time period of $n = 0$, line 4 was never executed. 

        
        \item [(3)] By construction $ \cT_n = \{t_n, t_n +1, \cdots, t_{n+1} -1 \} , \forall n \in [N-1], \cT_N = \{ t_N, \cdots, T\}$.  When $t = t_n$, the inequality holds. Thus we consider $\forall t \in \cT_n \setminus \{ t_n \}$, line 9 was triggered, hence $z_{t+1} = z_t = z_{t_n + 1}$ and $\| w_t \| \le z_t$. On the other hand, by property (2) $z_{t_n+1} = 2z_{t_n}$ and $\| w_{t_n}\| > z_{t_n}$. Thus
        \begin{align*}
            2 \|w_{t_n} \| > 2 z_{t_n} = z_{t_n+1} = z_t \ge \|w_t\|, \quad \forall t \in \cT_n \setminus \{ t_n \}
        \end{align*}
        
        \item [(4)] since $z_1 = \tau$ and $z_{t+1}$ is either through line 5 (double) or line 9 (maintain). Thus non-decreasing property holds.

        Suppose line 5 was never executed, then $z_{T+1} = z_1 = \tau$. Now we consider line 5 was executed at least once. Let $t^{\ast} \in [T]$ be the last time step in which line 5 was executed. Thus
        \begin{align*}
            z_{T+1} = z_{T} = \cdots = z_{t^{\ast} + 1} =  2 z_{t^{\ast} } < 2 \| w_{t^{\ast}} \| 
        \end{align*}
        a further upper bound is $z_t \le 2 \max_t \|w_t\|$ for $ t \in [t^{\ast} + 1, T+1] $, combing with $z_t$ being non-decreasing, we complete the proof.
    \end{enumerate}
\end{proof}
\section{Error Correction}\label{app:error_correction}
We provides the error correction effort as a result of trigger signals $\alpha_t, \beta_t$ from $\textsc{filter}$ and $\text{tracker}$, respectively and the chosen regularizer $r_t(w) = f_t(w) + a_t \| w \| ^2$ as discussed in Section \ref{sec:error_correction_unknownG}. We aim to bound $\textsc{offset} := \textsc{error} - \textsc{correction}$ by spliting it into two components:
\begin{align*}
     \textsc{offset} & = \underbrace{\sum_{t \notin \cP} \| g_t - \tilde{g}_t^c \| \| w_t \| - \sum_{t=1}^{T}\alpha_t \|w_t\|^2 }_{\textsc{offset}_1: \text{ due to adaptive clipping}} + \underbrace{ \sum_{t \in \cP} \| g_t - \tilde{g}_t\| \| w_t \| - \sum_{t=1}^{T}\beta_t \|w_t\|^2 - \sum_{t=1}^{T} f_t(w_t)}_{\textsc{offset}_2: \text{ due to corruption}} 
\end{align*}
and $\textsc{bias}$:
\begin{align*}
    \textsc{bias} & = \|u\|^2 \sum_{t=1}^{T} a_t + \sum_t f_t(u)  + \|u\| \sum_{t \in \cP } \|g_t - \tilde g_t^c\| + \|u\| \sum_{t \notin \cP } \|g_t - \tilde g_t^c\| 
\end{align*}
We begin with a helper Lemma followed by the upper bound.
\begin{lemma}[Error of Truncated Gradients]\label{lem:truncated_grad_error}
Suppose $g_t, \tilde{g}_t$ satisfies assumptions in Equation (\ref{eqn: assumption_big_rounds}) and (\ref{eqn:assumption_general}). Define $\tilde g_t^c$ as in Equation (\ref{eqn: clip_general}) with $h_t \le \tau$ for some $\tau > 0$. Then
\begin{align*}
    \sum_{t=1}^{T} \| g_t - \tilde g_t^c \| \le 2k \max ( \tau, G)
\end{align*}
\end{lemma}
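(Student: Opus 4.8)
The plan is to reduce the estimate to two pointwise inequalities for $\|g_t - \tilde{g}_t^c\|$ and then sum them against the corruption budget. Write $M := \max(\tau, G)$, and recall from Equation (\ref{eqn: assumption_big_rounds}) that the set $\cB = \{t : \|g_t - \tilde{g}_t\| \ge G\}$ of ``large-corruption'' rounds satisfies $|\cB| \le k$. The first pointwise bound is crude: for every $t$,
\[
\|g_t - \tilde{g}_t^c\| \;\le\; \|g_t\| + \|\tilde{g}_t^c\| \;\le\; G + h_t \;\le\; G + \tau \;\le\; 2M,
\]
using $\|g_t\| \le G$, $\|\tilde{g}_t^c\| \le h_t$, and $h_t \le \tau$. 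The second bound exploits that $\tilde{g}_t^c$ is exactly the Euclidean projection of $\tilde{g}_t$ onto the ball $B(0,h_t)$: since projection onto a convex set is a non-expansion and $g_t$ is its own projection onto that ball, we get $\|g_t - \tilde{g}_t^c\| \le \|g_t - \tilde{g}_t\|$, just as already used in Section \ref{sec:Lipshcitz_ub}. Combining the two gives $\|g_t - \tilde{g}_t^c\| \le \min\bigl(\|g_t - \tilde{g}_t\|,\, 2M\bigr)$ for all $t$.

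I would then sum over $t \in [T]$, splitting according to whether $t \in \cB$. The rounds $t \in \cB$ contribute at most $2M\,|\cB|$ via the crude bound. For $t \notin \cB$ we have $\|g_t - \tilde{g}_t\| < G \le 2M$, so the minimum equals $\|g_t - \tilde{g}_t\| = \min(\|g_t - \tilde{g}_t\|, G)$; moreover $\min(\|g_t - \tilde{g}_t\|, G) = G$ for every $t \in \cB$, hence
\[
\sum_{t \notin \cB} \min(\|g_t - \tilde{g}_t\|, G) \;=\; \sum_{t=1}^{T} \min(\|g_t - \tilde{g}_t\|, G) \;-\; |\cB|\,G \;\le\; kG - |\cB|\,G,
\]
the last step being Equation (\ref{eqn:assumption_general}). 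Adding the two contributions,
\[
\sum_{t=1}^{T} \|g_t - \tilde{g}_t^c\| \;\le\; 2M\,|\cB| + kG - |\cB|\,G \;=\; |\cB|\,(2M - G) + kG \;\le\; k(2M - G) + kG \;=\; 2kM,
\]
which is the claimed bound $2k\max(\tau, G)$. The small twist that yields the constant exactly $2k$ (rather than $3k$) is to subtract the $\cB$-rounds' share from the $kG$ budget before adding back their $2M\,|\cB|$ contribution, instead of bounding the two parts independently; the example $\tilde{g}_t = -g_t$ on $k$ rounds with $\|g_t\| = G$ and $\tau = G$ shows the constant is tight.

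The step that requires the most care is the non-expansion inequality $\|g_t - \tilde{g}_t^c\| \le \|g_t - \tilde{g}_t\|$: it needs $g_t \in B(0,h_t)$, i.e.\ the round-$t$ clipping threshold to be at least $\|g_t\|$, which holds in every setting where this lemma is invoked (the clipping never truncates the true gradient). If that failed one would acquire an additive ``truncation'' term $(\|g_t\| - h_t)_+ \le G$ which is not summable on its own, and one would have to absorb it using the conditional-doubling guarantee of \textsc{filter} (Lemma \ref{lem:filter}), which bounds the number of truncated rounds, at the cost of an extra logarithmic factor. Everything else is routine triangle-inequality bookkeeping together with the two defining properties of $k$ in Equations (\ref{eqn: assumption_big_rounds}) and (\ref{eqn:assumption_general}).
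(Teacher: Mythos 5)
Your proof reaches the same bound $2k\max(\tau,G)$ as the paper, but by a different route. The paper's proof is a short algebraic chain: it uses only the crude bound $\|g_t - \tilde g_t^c\| \le G+\tau \le 2\max(\tau,G)$, the elementary inequality $\min(x,2M) \le 2\max(\tau/G,1)\min(x,G)$, and then applies Equation~(\ref{eqn:assumption_general}). You instead combine the crude bound with the non-expansion $\|g_t-\tilde g_t^c\|\le\|g_t-\tilde g_t\|$ to get a pointwise $\min(\|g_t-\tilde g_t\|,2M)$, and close the argument by splitting over $\cB$ and $\bar\cB$, using both Equations~(\ref{eqn: assumption_big_rounds}) and~(\ref{eqn:assumption_general}) and the budget-accounting trick to recover the factor $2$ rather than $3$. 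Your route is more verbose but also more transparent.

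More importantly, the two proofs share the same hidden assumption, and you are the one who brought it to the surface. The paper's last step silently replaces $\min(\|g_t-\tilde g_t^c\|,G)$ (appearing in its middle expression) by $\min(\|g_t-\tilde g_t\|,G)$ in order to invoke Equation~(\ref{eqn:assumption_general}); this is exactly the non-expansion you invoke explicitly, and it indeed requires $\|g_t\|\le h_t$. You should, however, be more careful with the sentence asserting that this condition ``holds in every setting where this lemma is invoked.'' It does not: in the proof of Lemma~\ref{lem:error_correction} the lemma is applied with thresholds equal to $\tau_G$ (for the auxiliary sequence $z_t$) and to the \textsc{filter} threshold $h_t$ starting from $\tau_G$, and both of these can be strictly below $\|g_t\|$ for many rounds. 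In fact, as you yourself hint at, when $h_t < \|g_t\|$ the truncation term $(\|g_t\|-h_t)_+$ does not vanish and a clean counterexample exists: take $\tilde g_t = g_t$ with $\|g_t\| = G$ for all $t$ and $h_t \equiv G/2 < \tau := G$; then $k=0$ is admissible under Equations~(\ref{eqn: assumption_big_rounds})--(\ref{eqn:assumption_general}) but $\sum_t\|g_t-\tilde g_t^c\| = TG/2 > 0$, violating the stated bound. So the statement as written needs the extra hypothesis $h_t\ge\|g_t\|$ (or an additive truncation term on the right, absorbed via Lemma~\ref{lem:filter} as you outline). The gap is in the lemma and the paper's proof, not in your reasoning; your proposal correctly identifies the exact missing hypothesis, and modulo that hypothesis your argument is correct.
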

\begin{proof}
    By definition: $\| g_t - \tilde g_t^c \| \le G + \tau \le 2 \max (\tau, G)$. Thus
    \begin{align*}
         \sum_{t=1}^{T} \| g_t - \tilde g_t^c \| &=  \sum_{t=1}^{T} \min \left( \| g_t - \tilde g_t^c \|, 2 \max (\tau, G) \right) \\
         & \le 2 \max \left( \frac{\tau}{G}, 1\right) \sum_{t=1}^{T} \min \left( \| g_t - \tilde g_t^c \|, G \right) \\
         & \le 2 k \max \left( \tau, G \right)
    \end{align*}
    where the last step is due to Equation (\ref{eqn:assumption_general}).
\end{proof}

\begin{lemma}\label{lem:error_correction}
    Suppose $g_t, \tilde{g}_t$ satisfies assumptions in Equation (\ref{eqn: assumption_big_rounds}) and (\ref{eqn:assumption_general}). Algorithm \ref{alg:robust_lag} and \ref{alg:tracker} are initialized with some $\tau_G, \tau_D >0$. Algorithm \ref{alg:robust_lag} in response to $\tilde g_t$ and output $h_{t+1}$, Algorithm \ref{alg:tracker} in response to arbitrary sequence $w_t$ in which $\max_t \|w_t\| \le \frac{\epsilon}{2} 2^T$, and outputs $z_{t+1}$. Define
    $$r_t(w) = f_t(w) + a_t \| w \| ^2$$
    where $f_t$ is defined as shown in Equation (\ref{eqn: regularizer}) for some $c > 0, p = \ln T, \alpha = \epsilon \tau_G/ c$, $a_t = \alpha_t + \beta_t$, $\alpha_t, \beta_t$ are defined as in Equation (\ref{eqn: weight_alpha}) and (\ref{eqn: weight_beta}). For some $\gamma_{\alpha}, \gamma_\beta > 0$:
    \begin{align*}
        \textsc{offset} & \le \frac{64 \max(\tau_G, G)^2}{\gamma_{\alpha}} (k+1) \max \left(\left\lceil \log_2 \frac{8G}{\tau_G} \right\rceil, 1 \right) \\
        & \quad + \epsilon \tau_G + \frac{16 k^2 \max\left(\tau_G, G \right) ^2 }{\gamma_\beta} \ln \frac{64 k^2 \max\left(\tau_G, G \right) ^2 }{c \gamma_{\beta} \tau_D} + \frac{c}{2} \tau_D  + 2k\tau_D  \max(\tau_G, G)
    \end{align*}
    and
    \begin{align*}
    \textsc{bias}& \le \tilde{O} \left(\left( \gamma_{\alpha} (k + 1) + \gamma_{\beta} \right) \|u\|^2 + c \|u \| + 
     \|u \| k  \max( \tau_G, G) \right)
\end{align*}
\end{lemma}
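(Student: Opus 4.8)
The plan is to control $\textsc{offset}=\textsc{offset}_1+\textsc{offset}_2$ and $\textsc{bias}$ term by term, feeding in the four ingredients already in place: the \textsc{filter} guarantees (Lemma~\ref{lem:filter}), the \textsc{tracker} guarantees (Lemma~\ref{lem:tracker}), the total truncated-gradient error bound (Lemma~\ref{lem:truncated_grad_error}), and the Huber-regularizer inequalities (Lemma~\ref{lem:regularizer}). For $\textsc{offset}_1$ the argument is short: on every clipped round $t\in\bar\cP$ the quadratic penalty fires, $\alpha_t=\gamma_\alpha>0$, so completing the square gives $\|g_t-\tilde g_t^c\|\,\|w_t\|-\gamma_\alpha\|w_t\|^2\le \|g_t-\tilde g_t^c\|^2/(4\gamma_\alpha)$; by Lemma~\ref{lem:filter}(3) the threshold never exceeds $\max(\tau_G,4G)$, so $\|g_t-\tilde g_t^c\|\le G+h_t\le 5\max(\tau_G,G)$ and each clipped round costs at most $O(\max(\tau_G,G)^2/\gamma_\alpha)$; Lemma~\ref{lem:filter}(4) bounds $|\bar\cP|\le(k+1)\max(\lceil\log_2\tfrac{8G}{\tau_G}\rceil,1)$, and the product is the first term.

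The heart of the proof is $\textsc{offset}_2$. I would partition $[T]$ into the \textsc{tracker} epochs $\cT_0,\dots,\cT_N$ (Lemma~\ref{lem:tracker}(1)). On $\cT_0$, where $\|w_t\|\le\tau_D$ (Lemma~\ref{lem:tracker}(2)), the corruption error is at most $\tau_D\sum_t\|g_t-\tilde g_t^c\|=O(k\tau_D\max(\tau_G,G))$ by Lemma~\ref{lem:truncated_grad_error}. For an epoch $\cT_n$ with $n\ge1$, Lemma~\ref{lem:tracker}(3) gives $\|w_t\|\le 2\|w_{t_n}\|$, so its corruption error is at most $2\|w_{t_n}\|\delta_n$ with $\delta_n:=\sum_{t\in\cT_n}\|g_t-\tilde g_t^c\|$ and $\sum_n\delta_n=O(k\max(\tau_G,G))$, while the only round of $\cT_n$ with a quadratic penalty is the doubling round $t_n$, carrying the attenuated weight $\beta_{t_n}=\gamma_\beta/(n+1)$. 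Completing the square, $2\|w_{t_n}\|\delta_n-\tfrac{\gamma_\beta}{n+1}\|w_{t_n}\|^2$ is nonpositive once $\|w_{t_n}\|$ exceeds a crossover of order $(n+1)\delta_n/\gamma_\beta$; since the epoch magnitudes grow at least geometrically ($\|w_{t_n}\|\ge 2^{n-1}\tau_D$), only $n^\star=O\!\big(\log\tfrac{k^2\max(\tau_G,G)^2}{c\gamma_\beta\tau_D}\big)$ epochs can fail this test. Summing the geometrically controlled residuals over those $n^\star$ epochs produces the $\frac{16k^2\max(\tau_G,G)^2}{\gamma_\beta}\ln(\cdot)$ term, and whatever slack remains proportional to $\max_t\|w_t\|$ is absorbed by $-\sum_t f_t(w_t)\le \epsilon\tau_G-c\max_t\|w_t\|$ from Lemma~\ref{lem:regularizer} (using $\alpha=\epsilon\tau_G/c$), leaving the residual $\epsilon\tau_G$ together with a $\tfrac{c}{2}\tau_D$ term coming from the regime where $\max_t\|w_t\|$ is itself below $\tau_D$. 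Assembling the four pieces gives the claimed bound on $\textsc{offset}_2$, hence on $\textsc{offset}$.

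For $\textsc{bias}$ the plan is to invoke the auxiliary lemmas directly. Writing $a_t=\alpha_t+\beta_t$: $\sum_t\alpha_t=\gamma_\alpha|\bar\cP|=\tilde O(\gamma_\alpha k)$ by Lemma~\ref{lem:filter}(4), and $\sum_t\beta_t=\gamma_\beta\sum_{n=1}^N\frac1{n+1}=\tilde O(\gamma_\beta)$ by Lemma~\ref{lem:tracker}(1), so $\|u\|^2\sum_t a_t=\tilde O\!\big((\gamma_\alpha(k+1)+\gamma_\beta)\|u\|^2\big)$; Lemma~\ref{lem:regularizer} with $p=\ln T$, $\alpha=\epsilon\tau_G/c$ gives $\sum_t f_t(u)=\tilde O(c\|u\|)$; and $\|u\|\sum_t\|g_t-\tilde g_t^c\|=\tilde O(\|u\|k\max(\tau_G,G))$ by Lemma~\ref{lem:truncated_grad_error}. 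Summing the three contributions gives the stated $\textsc{bias}$ bound.

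The parts I expect to be routine are $\textsc{offset}_1$ and $\textsc{bias}$, which come down to one completion of the square and three lemma invocations. The main obstacle will be $\textsc{offset}_2$: one must choose the crossover magnitude so that the attenuated quadratic $\gamma_\beta/(n+1)$ still dominates the per-epoch corruption $2\|w_{t_n}\|\delta_n$ on all but logarithmically many epochs, and then make the geometric-series bookkeeping over the surviving epochs match exactly the slack that the cheap Huber term (valid even for $c\ll G$) is able to absorb. Getting these two calibrations to agree, and isolating the small $\epsilon\tau_G$ and $\tfrac{c}{2}\tau_D$ leftovers, is where the care is needed.
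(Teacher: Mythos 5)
Your treatment of $\textsc{offset}_1$ and $\textsc{bias}$ matches the paper's proof essentially line for line: complete the square on the clipped rounds using $\alpha_t=\gamma_\alpha$ and $|\bar\cP|$ from Lemma~\ref{lem:filter}(4), then invoke Lemma~\ref{lem:filter}(4), the harmonic sum $\sum_t\beta_t=\gamma_\beta\ln(1+N)$, Lemma~\ref{lem:regularizer}, and Lemma~\ref{lem:truncated_grad_error} for the three $\textsc{bias}$ pieces. The interesting divergence is $\textsc{offset}_2$.

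The paper takes a ``uniform then absorb'' route: it completes the square on \emph{every} epoch $n\ge 1$, lower-bounds $\beta_{t_n}\ge\gamma_\beta/(1+N)$ uniformly, uses $\sum_n\delta_n^2\le(\sum_n\delta_n)^2$ to get a Step~1 bound of order $\frac{k^2\max(\tau_G,G)^2(1+N)}{\gamma_\beta}$ with $N=O(\ln\max_t\|w_t\|)$, and then (Step~2) removes the $\ln\max_t\|w_t\|$ dependence by subtracting the Huber lower bound $\sum f_t(w_t)\ge c\max_t\|w_t\|-\epsilon\tau_G$ and taking $\sup_{X>-2}A\ln(2+X)-BX=A\ln(A/B)-A+2B$ with $A=16k^2\max(\tau_G,G)^2/\gamma_\beta$, $B=c\tau_D/4$. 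This is exactly how the $\ln\tfrac{64k^2\max(\tau_G,G)^2}{c\gamma_\beta\tau_D}$, the $\tfrac{c}{2}\tau_D$, and the $\epsilon\tau_G$ all appear; in particular the $c$ inside the logarithm and the $\tfrac{c}{2}\tau_D$ term are artifacts of the supremum trade-off, not of any per-epoch crossover.

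You instead propose a ``crossover'' argument: use $\|w_{t_n}\|\ge 2^{n-1}\tau_D$ (which does follow from the \textsc{tracker} update) to argue the per-epoch quadratic penalty strictly dominates the per-epoch corruption once $n$ exceeds a threshold $n^\star$, so only $O(n^\star)$ epochs contribute. This is a valid alternative and, carried out correctly, would even be \emph{tighter} than the paper's bound: since $\delta_n\le 2k\max(h_T,G)\le 8k\max(\tau_G,G)$ uniformly, the crossover condition $2^{n-1}\tau_D\gamma_\beta/(n+1)\ge 2\delta_n$ holds once $n\gtrsim\log\!\big(\tfrac{k\max(\tau_G,G)}{\gamma_\beta\tau_D}\big)$, so the correct $n^\star$ is $O\big(\log\tfrac{k\max(\tau_G,G)}{\gamma_\beta\tau_D}\big)$ — it does \emph{not} involve $c$ or $k^2$, unlike what you wrote. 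Consequently your route would not reproduce the stated $\ln\tfrac{64k^2\max(\tau_G,G)^2}{c\gamma_\beta\tau_D}$ nor the $\tfrac{c}{2}\tau_D$ term, and you would not actually need the Huber term to absorb any $\max_t\|w_t\|$-proportional slack, since the crossover argument already kills that dependence. The story you tell about the $\tfrac{c}{2}\tau_D$ term arising ``from the regime where $\max_t\|w_t\|$ is itself below $\tau_D$'' does not correspond to any step in either proof; in the paper it is simply the $2B$ residue of the supremum. If you want to match the lemma statement exactly, you should follow the paper's uniform-plus-supremum route; if you are content with a bound of the same order (and in fact slightly better as $c\to 0$), your crossover route works but the stated logarithm must be corrected.
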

\begin{proof}
We show three components in turn:

\underline{$\textsc{offset}_1$: due to adaptive clipping}:
    \begin{align}
    \textsc{offset}_1 := \sum_{t \notin \cP} \| g_t - \tilde{g}_t^c\| \| w_t \| - \alpha_t \|w_t\|^2 & \le  \sum_{t \notin \cP} ( G + h_t) \| w_t\| - \alpha_t \|w_t\|^2 \label{eqn:2}
\end{align}
For each fixed $t \in \bar{\cP}$, we have $A_t \|w_t\| - \alpha_t \|w_t\|^2 \le \sup_{X\ge 0} A_t X - \alpha_t X^2 \le \frac{A_t^2}{4\alpha_t}$, where $A_t = G + h_t > 0$. Hence an upper bound to Equation (\ref{eqn:2}) can be derived by substitute $\alpha_t= \gamma_{\alpha}, \forall t \in \bar \cP$:
\begin{align*}
    \textsc{offset}_1 & \le \sum_{t \notin \cP} \frac{ (G + h_t)^2}{ 4 \alpha_t } 
    = \frac{1}{4\gamma_{\alpha}}  \sum_{t \notin \cP} (G + h_t)^2 
    \le \frac{(G + h_T)^2 }{4\gamma_{\alpha}} |\bar \cP |
    \le \frac{64 \max(\tau_G, G)^2 }{\gamma_{\alpha}} (k+1) \max \left(\left\lceil \log_2 \frac{8G}{\tau_G} \right\rceil, 1 \right)
\end{align*}
where the last inequality is due to upperbound to $|\bar{\cP}|$ by Lemma \ref{lem:filter} (4).

\underline{$\textsc{offset}_2$: due to corruption}:

The upper bound is obtained through two steps. In each step we aim to show:
\begin{align*}
    \textsc{offset}_2:= \underbrace{\sum_{t \in \cP} \| g_t - \tilde{g}_t\| \| w_t \| - \sum_{t=1}^{T}\beta_t \|w_t\|^2}_{ \text{step 1: } 
     \le O(G^2 k \log (\max_t \|w_t\|))} - \sum_{t=1}^{T} r_t(w_t) \le \underbrace{ O \left( G^2 k \ln (\max_t \|w_t\|) \right)  - \sum_{t=1}^{T} f_t(w_t)}_{ \text{step 2: } 
     \le O(G^2 k )}
\end{align*}

By construction, we have 
\begin{align*}
   \beta_t = 
   \begin{cases}
   &  \beta_t = \gamma_\beta \cdot \frac{ \one\{z_{t+1} \neq z_t \} }{ 1 + \sum_{i=1}^{t}  \one\{z_{i+1} \neq z_i \} } , \quad t = t_n, n \in [N]\\
    & 0, \quad \text{otherwise}    
    \end{cases}
\end{align*}
Proceed with analysis to step 1, where second line is by Lemma \ref{lem:tracker} property (1) and value of $\beta_t$ displayed above:
\begin{align*}
     \text{step 1}: &= \sum_{t \in \cP} \| g_t - \tilde{g}_t\| \| w_t \| - \sum_{t=1}^{T} \beta_t \|w_t\|^2 \\
     &=  \sum_{n=0}^{N} \sum_{t \in \cP \cap \cT_n} \| g_t - \tilde{g}_t\| \| w_t \| - \sum_{n=1}^{N} \beta_{t_n} \|w_{t_n}\|^2 \\
     &\le \sum_{t \in \cP \cap \cT_0} \| g_t - \tilde{g}_t\| \| w_t \| + \sum_{n=1}^{N} 2 \|w_{t_n}\| \sum_{t \in \cP \cap \cT_n} \| g_t - \tilde{g}_t\| - \sum_{n=1}^{N} \beta_{t_n} \|w_{t_n}\|^2 \\ 
     & \le \tau_D \sum_{t \in \cP \cap \cT_0} \| g_t - \tilde{g}_t\| + \sum_{n=1}^{N} 2 \|w_{t_n}\| \sum_{t \in \cP \cap \cT_n} \| g_t - \tilde{g}_t \| - \sum_{n=1}^{N} \beta_{t_n} \|w_{t_n}\|^2 
\end{align*}
where the third line is due to Lemma \ref{lem:tracker} property $(3)$. For the first summand, we can define some ``imaginary'' truncated gradients
\begin{align*}
    z_t = \frac{\tilde g_t}{\|\tilde g_t\|} \min \left( \|\tilde g_t\|, \tau_G \right)
\end{align*}
Notice $ \sum_{t \in \cP \cap \cT_0} \| g_t - \tilde{g}_t\| \le  \sum_{t=1}^{T} \| g_t - z_t\| \le 2k \max(\tau_G, G)$ by evoking Lemma \ref{lem:truncated_grad_error}. Thus,
\begin{align*}
     \text{step 1} & \le 2k\tau_D  \max(\tau_G, G) + \sum_{n=1}^{N} 2 \|w_{t_n} \| \sum_{t \in \cP \cap \cT_n} \| g_t - \tilde{g}_t \| - \sum_{n=1}^{N} \beta_{t_n} \|w_{t_n}\|^2 \numberthis \label{eqn:track_w}
\end{align*}
Now we analyze each summands over $n$ in Equation (\ref{eqn:track_w}). Considering a fixed $n \in [N]$:
\begin{align*}
     2 \|w_{t_n} \| \sum_{t \in \cP \cap \cT_n} \| g_t - \tilde{g}_t \| - \beta_{t_n} \|w_{t_n} \|^2 & \le \sup_{X \ge 0} X   \sum_{t \in \cP \cap \cT_n} 2 \| g_t - \tilde{g}_t \| - \beta_{t_n} X^2 \\
     & = \frac{ \left( \sum_{t \in \cP \cap \cT_n} \| g_t - \tilde{g}_t \|\right)^2}{ \beta_{t_n}}\\
     & = \frac{2}{ \gamma_{\beta}} \left( \sum_{t \in \cP \cap \cT_n} \| g_t - \tilde{g}_t \|\right)^2 \left( 1 + \sum_{i=1}^{t}  \one\{z_{i+1} \neq z_i \} \right) \\
     & \le \frac{2}{ \gamma_{\beta}} \left( \sum_{t \in \cP \cap \cT_n} \| g_t - \tilde{g}_t \|\right)^2 \left( 1 + \sum_{i=1}^{T}  \one\{z_{i+1} \neq z_i \} \right)  \\
     & = \frac{2}{ \gamma_{\beta}} \left( \sum_{t \in \cP \cap \cT_n} \| g_t - \tilde{g}_t \|\right)^2 \left ( 1 + N \right) 
\end{align*}
where the second to last line is due to number of $z_{t+1}$ doubled $N = \sum_{t=1}^{T} \one\{z_{i+1} \neq z_i \}$. Now, we substitute it back to equation (\ref{eqn:track_w})
\begin{align*}
    \text{step 1} & \le 2k \tau_D  \max(\tau_G, G) + \frac{2 \left ( 1 + N \right) }{ \gamma_{\beta}} \sum_{n=1}^{N} \left( \sum_{t \in \cP \cap \cT_n} \| g_t - \tilde{g}_t \|\right)^2 \\
    & \le   2k\tau_D  \max(\tau_G, G) + \frac{2 \left ( 1 + N \right) }{ \gamma_{\beta}} \left( \sum_{n=1}^{N} \sum_{t \in \cP \cap \cT_n} \| g_t - \tilde{g}_t \|\right)^2 \\
    & =  2k\tau_D  \max(\tau_G, G) + \frac{2 \left ( 1 + N \right) }{ \gamma_{\beta}}  \left( \sum_{t \in \cP } \| g_t - \tilde{g}_t \|\right)^2 \\
    & \le 2k\tau_D  \max(\tau_G, G) + \frac{2 \left ( 1 + N \right) }{ \gamma_{\beta}}   \left( \sum_{t =1 }^{T} \| g_t -  \tilde{g}_t^c \| \right)^2 \\
    & \le 2k\tau_D  \max(\tau_G, G) + \frac{2 \left ( 1 + N \right) }{ \gamma_{\beta}} \left( 2k  \max(h_T, G)  \right)^2
\end{align*}
where the last step is due to Lemma \ref{lem:truncated_grad_error} by noticing $\|\tilde g_t^c\| \le h_T , \forall t \in [T]$. This mean we obtained an upper bound to step 1:
\begin{align*}
    \text{step 1} := \sum_{t \in \cP} \| g_t - \tilde{g}_t \|  \| w_t \| - \sum_{t=1}^{T} \beta_t \|w_t\|^2 & \le  2k\tau_D  \max(\tau_G, G)  + \frac{8 k^2 \max\left(\tau_G, G \right) ^2 (1 + N)}{\gamma_\beta} \\
    & \le  2k\tau_D  \max(\tau_G, G)  + \frac{8 k^2 \max\left(\tau_G, G \right) ^2 (1 + \max \left(0, \log_2 \frac{2 \max_t \|w_t \|}{  \tau_D}) \right)}{\gamma_\beta} \\
    & \le  2k\tau_D  \max(\tau_G, G)  + \frac{8 k^2 \max\left(\tau_G, G \right) ^2  \log_2 \left(2 + \frac{4 \max_t \|w_t \|}{  \tau_D}
    \right)}{\gamma_\beta}\\
    & \le  2k\tau_D  \max(\tau_G, G)  + \frac{16 k^2 \max\left(\tau_G, G \right) ^2  \ln \left(2 + \frac{4 \max_t \|w_t \|}{  \tau_D}
    \right)}{\gamma_\beta}
\end{align*}
where the second step is due to Lemma \ref{lem:tracker} (1). Thus, it is sufficient to bound step 2 defined as follows to obtain a bound for $\textsc{offset}_2$ that is independent of $\max_t \|w_t\|$:
\begin{align*}
    \text{step 2}&:= \frac{16 k^2 \max\left(\tau_G, G \right) ^2  \ln \left(2 +  \frac{4 \max_t \|w_t \| }{\tau_D} \right) }{\gamma_\beta} - \sum_{t=1}^{T} f_t(w_t) \\
    \intertext{evoke Lemma \ref{lem:regularizer} with $ \alpha =\epsilon \tau_G / c  $ }
    & \le  \frac{16 k^2 \max\left(\tau_G, G \right) ^2  \ln \left(2 +  \frac{4 \max_t \|w_t \| }{\tau_D} \right) }{\gamma_\beta}  - c \max_t \|w_t\| + \epsilon \tau_G \\
    & \le \sup_{X > -2} \frac{16 k^2 \max\left(\tau_G, G \right) ^2 }{\gamma_\beta} \ln(2 + X) - \frac{c \tau_D}{4}X + \epsilon \tau_G
    \intertext{for $A, B >0, A\ln(2 + X) - BX$ obtains its supremum at $X = A/B -2 > -2$. Hence $\sup_{X > -2} A\ln(2 + X) - BX = A \ln (A / B) - A + 2B $. By substituting $A = \frac{16 k^2 \max\left(\tau_G, G \right) ^2 }{\gamma_\beta}, B = \frac{c \tau_D}{4}$ we have}
    & = \frac{16 k^2 \max\left(\tau_G, G \right) ^2 }{\gamma_\beta} \left( \ln \frac{64 k^2 \max\left(\tau_G, G \right) ^2}{c \gamma_\beta \tau_D} - 1 \right) + \frac{c}{2} \tau_D + \epsilon \tau_G
\end{align*}
Thus step 1 and step 2 implies
\begin{align*}
    \textsc{offset}_2 \le \epsilon \tau_G + \frac{16 k^2 \max\left(\tau_G, G \right) ^2 }{\gamma_{\beta}} \ln \frac{64 k^2 \max\left(\tau_G, G \right) ^2 }{c \gamma_\beta \tau_D} + \frac{c}{2} \tau_D  + 2k\tau_D  \max(\tau_G, G)
\end{align*}

\underline{\textsc{bias}: comparator related term}
\begin{align*}
    \textsc{bias}& := \|u\|^2 \sum_{t=1}^{T} a_t + \sum_t f_t(u)  + \|u \| \sum_{t \in \cP } \|g_t - \tilde g_t^c\| + \|u\| \sum_{t \notin \cP } \|g_t - \tilde g_t^c\| \\
    & = \|u\|^2 \sum_{t=1}^{T} a_t + \sum_t f_t(u)  + \|u \| \sum_{t = 1 }^{T} \|g_t - \tilde g_t^c\| \\
    & \le \|u\|^2 \sum_{t=1}^{T } a_t + \sum_t f_t(u)  + \|u\| 2k \max \left( h_T, G\right) \\
    & \le \|u\|^2 \sum_{t=1}^{T } a_t + \sum_t f_t(u)  + \|u\| 16 k \max \left( \tau_G, G\right) 
    \numberthis \label{eqn:maintain}
\end{align*}
where the last
It remains to show the first two terms in Equation (\ref{eqn:maintain}) can be bounded by desired orders. For the first summand, $\sum_t a_t = \sum_t \alpha_t + \sum_t \beta_t$. Thus by definition and Lemma \ref{lem:filter} (4):
\begin{align*}
    \sum_{t} \alpha_t \le \gamma_{\alpha} | \cP | \le 
 \gamma_{\alpha} (k+1) \max\left( \left\lceil \log_2 \frac{8G}{\tau_G} \right\rceil \right)  \le \tilde O (\gamma_{\alpha} (k+1))
\end{align*}
and Lemma \ref{lem:tracker} (1) implies $N \le O(\ln \max_t \|w_t\|)$ and the condition $\max_t \|w_t\| \le \frac{\epsilon}{2} 2^T $ implies:
\begin{align*}
    \sum_t \beta_t = \gamma_\beta \sum_{t=1}^{T} \frac{ \one\{z_{t+1} \neq z_t \} }{ 1 + \sum_{i=1}^{t}  \one\{z_{i+1} \neq z_i \} } = \gamma_\beta \ln \left( 1 + N \right) \le \tilde O (\gamma_{\beta})
\end{align*}

The second term in Equation (\ref{eqn:maintain}) can be upper bounded by Lemma \ref{lem:regularizer} by substituting $ \alpha = \epsilon \tau_G /c$:
\begin{align*}
    \sum_{t = 1}^{T} f_t(u) & \le 3c \ln T \|u\| \left[  \ln \left(  1 + \left( \frac{\|u\|}{\alpha } \right)^{\ln T} \right)+ 2\right] = \tilde{O} \left( c \|u \| \right)
\end{align*}
Thus,
\begin{align*}
    \textsc{bias}& \le \tilde{O} \left(\left( \gamma_{\alpha} (k + 1) + \gamma_{\beta} \right) \|u\|^2 + c \|u \| + 
     \|u \| k  \max( \tau_G, G) \right)
\end{align*}

\end{proof}

\section{Base Algorithms for unknown G}\label{app:base_unknownG}
In this section, we show the \emph{Epigraph-based regularization} scheme developed by \cite{cutkosky2024fully} guarantees appropriate composite regret defined as:
\begin{align*}
    R_T^{\cA}(u) :=  \sum_{t=1}^{T} \langle g_t , w_t - u \rangle + r_{t}(w_t) - r_{t}(u) 
\end{align*}
when $r_t(w) = f_t(w) + a_t \|w\|^2$ and the sequence $a_t$ satisfies assumption in Lemma \ref{lem:error_correction}. That is 
\begin{align*}
    R_T^{\cA}(u) :=  \sum_{t=1}^{T} \langle g_t , w_t - u \rangle + f_{t}(w_t) - f_{t}(u) +  \sum_{t=1}^{T} a_t \left(\|w_t\|^2 - \|u \|^2 \right)
\end{align*}
Thus, can be used as a base algorithm $\cA$ that can be supplied to Algorithm \ref{alg:general} when the $G \ge \max_t \| g_t \|$ is unknown. The proof is identical to \cite{cutkosky2024fully} except $a_t$ behaves differently here. Nevertheless, we summarize as Algorithm \ref{alg:unknownG} and shows the regret guarantee attained stilled suffices in achieving our aim.

\emph{Epigraph-based Regularization} is a geometric reparameterization: it suffices to design two learners: $\cA_w$ to $w_t$ and $\cA_y$ to produce $y_t$, where $(w_t, y_t) \in W = \{(w, y): y \ge \|w\|^2\} \subseteq \R^{d+1}$. Then it suffices to design algorithm to bound:
\begin{align*}
    R_T^{\cA}(u) & \le  \underbrace{\sum_{t=1}^{T} \langle g_t , w_t - u \rangle + f_{t}(w_t) - f_{t}(u) }_{ R_T^{\cA_w} (u)}  + \underbrace{\sum_{t=1}^{T} a_t (y_t - \| u\|^2)}_{ R_T^{\cA_y} (u)} \numberthis \label{eqn: reg_linearize_2}
\end{align*}
This is a sum of two regrets for the pair $w_t$ and $y_t$ subject to $y_t \ge \| w_t \|^2$. This problem can be solved by using a pair of unconstrained learners $(\cA_w, \cA_y)$ that produce $(\hat w_t, \hat{y}_t) \in \R^{d+1}$ and guarantee regret bounds:
\begin{align*}
 \tilde R_T^{\cA_w}(u) \ge \sum_{t=1}^{T} \langle g_t , \hat w_t - u \rangle + f_{t}(\hat w_t) - f_{t}(u) 
\end{align*}
and 
\begin{align*}
    \tilde R_T^{\cA_y}(u) \ge \sum_{t=1}^{T} a_t (\hat y_t - \| u\|^2)
\end{align*}
By a black-box conversion from unconstrained-to-constrained learning due to \cite{cutkosky2018black} (Theorem 3) to enforce the constraint: this involves a projection $\Pi_W: \R^{d+1} \rightarrow W:= \{(w,y): y\ge \|w\|^2 \}$ (line 7) and a certain technical correction to the gradient feedback (11-15). 
These procedures output variables $(w_t, y_t) = \Pi_W ( (\hat w_t, \hat y_t))$ that satisfies the constraint and guarantees $ R_T^{\cA_w}(u) \le  \tilde R_T^{\cA_w}(u)$ and $ R_T^{\cA_y}(u) \le \tilde R_T^{\cA_y}(u)$. In particular, we choose Algorithm \ref{alg:knownG} as $\cA^{w}$ and \cite{jacobsen2022parameter} Algorithm 4 as $\cA^y$. We summarize the entire procedure as Algorithm \ref{alg:unknownG} and its gaurantee on $R_T^{\cA}(u)$ is displayed as Theorem \ref{thm:final_step1}. We first present a useful definition and a helper Lemma.
\begin{Definition}\label{def:norms}
For the set $W = \{(w, y): y \ge \|w\|^2\} \subseteq \R^{d+1}$, and arbitrary $(w, y) \in W$ and $(\hat{w}, \hat{y}) \in \R^{d+1}$ and some $h_t, \gamma > 0$:
\begin{enumerate}
    \item [(1)] norm: $\| (w, y)\|_t = h_t^2 \|w\|^2 + \gamma^2 y^2$
    \item [(2)] dual norm: $\| (w, y)\|_{\ast, t} = \frac{\|w\|^2}{h_t^2} + \frac{y^2}{\gamma^2}$
    \item [(3)] distance function of $(\hat{w}, \hat{y})$ to $W$: $S_t((\hat{w}, \hat{y})) = \inf_{y \ge \|w\|^2} \| (w, y) - (\hat{w}, \hat{y}) \|_t$
    \item [(4)] subgradient at $(\hat{w}, \hat{y})$: $\nabla S_t( (\hat{w}, \hat{y})) = \left(  \frac{h_t^2 (\hat{w} - w) }{h_t^2 \|\hat{w} - w\|^2 + \gamma^2 (\hat{y} - y)^2 },  \frac{\gamma^2 (\hat{y} - y) }{h_t^2 \|\hat{w} - w\|^2 + \gamma^2 (\hat{y} - y)^2 } \right)$
    \item [(5)] projection map $\Pi_W^t  ( (\hat{w}, \hat{y}) )= \argmin_{(w, y) \in W}  \| (w,y) -(\hat{w}, \hat{y}) \|_t$
\end{enumerate}
\end{Definition}
\begin{lemma}\label{lem:penalty_norm}
In the same notation as Definition \ref{def:norms}, if $\|g_t\| \le h_t$ and $\alpha_t \in [0, \gamma]$, and
$(\delta_t^w, \delta_t^y) = \| (g_t, a_t ) \|_{\ast, t} \nabla S_t ( (\hat{w}_t, \hat{y}_t ))$
then 
\begin{align*}
    \|\delta_t^w\| \le \sqrt{2} h_t, \quad \quad \quad  |\delta_t^y| \le \sqrt{2} \gamma
\end{align*}
\end{lemma}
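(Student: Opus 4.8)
The plan is to reduce both inequalities to a single bound on the dual norm $\|(\delta_t^w,\delta_t^y)\|_{*,t}$ and then extract the two coordinate-wise statements directly from Definition~\ref{def:norms}(2). Writing $(w_t,y_t)=\Pi_W^t\big((\hat w_t,\hat y_t)\big)$, the vector $(\delta_t^w,\delta_t^y)$ is by construction the nonnegative scalar $\|(g_t,a_t)\|_{*,t}$ times the subgradient $\nabla S_t\big((\hat w_t,\hat y_t)\big)$ of the distance function $S_t=\mathrm{dist}_{\|\cdot\|_t}(\cdot,W)$. So it suffices to bound these two factors separately: the scalar by $\sqrt2$ and the subgradient's dual norm by $1$. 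Positive homogeneity of $\|\cdot\|_{*,t}$ then yields $\|(\delta_t^w,\delta_t^y)\|_{*,t}\le\sqrt2$, and since $\|\delta_t^w\|^2/h_t^2$ and $(\delta_t^y)^2/\gamma^2$ are each at most $\|(\delta_t^w,\delta_t^y)\|_{*,t}^2\le2$, we obtain $\|\delta_t^w\|\le\sqrt2\,h_t$ and $|\delta_t^y|\le\sqrt2\,\gamma$.

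For the first factor, Definition~\ref{def:norms}(2) gives $\|(g_t,a_t)\|_{*,t}^2=\|g_t\|^2/h_t^2+a_t^2/\gamma^2$; the hypothesis $\|g_t\|\le h_t$ makes the first term at most $1$, and $a_t\in[0,\gamma]$ makes the second term at most $1$, hence $\|(g_t,a_t)\|_{*,t}\le\sqrt2$. For the second factor, the key observation is that $W$ is convex and $\|\cdot\|_t$ is a weighted Euclidean norm, so $S_t$ is convex and $1$-Lipschitz with respect to $\|\cdot\|_t$; therefore every subgradient of $S_t$ has $\|\cdot\|_{*,t}$-dual norm at most $1$. (Equivalently, one can check this directly from the closed form in Definition~\ref{def:norms}(4): substituting it into Definition~\ref{def:norms}(2) and using that its denominator is $\|(\hat w_t,\hat y_t)-(w_t,y_t)\|_t$, numerator and denominator cancel to give dual norm exactly $1$.) Combining the two factors finishes the argument.

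The only genuinely delicate point — the one I would double-check most carefully — is the bookkeeping of constants, since the target is $\sqrt2$ rather than $2$. This hinges on the two unit bounds $\|g_t\|^2/h_t^2\le1$ and $a_t^2/\gamma^2\le1$ being combined \emph{inside} the square root defining $\|(g_t,a_t)\|_{*,t}$, and on $\nabla S_t$ contributing a factor of exactly $1$ (rather than, say, its dual norm squared). Concretely this is a matter of being consistent about whether the quantities in Definition~\ref{def:norms}(1)--(2) denote the weighted norms or their squares, and about the matching $\sqrt{\,\cdot\,}$ in the denominator of the subgradient formula in Definition~\ref{def:norms}(4). Once that is pinned down, every remaining step — convexity and $1$-Lipschitzness of a distance function, positive homogeneity of a norm, and the coordinate-wise read-off from $\|\cdot\|_{*,t}$ — is entirely routine.
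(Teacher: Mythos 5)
Your proof is correct and follows essentially the same route as the paper's: bound $\|(g_t,a_t)\|_{\ast,t}$ using the two hypotheses, observe $\|\nabla S_t\|_{\ast,t}=1$ (the $1$-Lipschitzness of a distance function to a convex set in the $\|\cdot\|_t$ norm), combine multiplicatively, and read off the two coordinate-wise bounds from the formula for $\|\cdot\|_{\ast,t}$. Your caveat about whether Definition~\ref{def:norms}(1)--(2) denote the weighted Euclidean norms or their squares is a genuine issue in the source: the paper's proof states $\|(g_t,a_t)\|_{\ast,t}\le 2$ (the squared reading) and $\|\nabla S_t\|_{\ast,t}=1$, yet concludes $\|\delta_t^w\|^2/h_t^2+|\delta_t^y|^2/\gamma^2\le 2$, which only tallies if the scalar factor $\|(g_t,a_t)\|_{\ast,t}$ multiplying $\nabla S_t$ is read as the unsquared $\sqrt2$; your insistence on making that bookkeeping explicit is therefore a mild improvement on the paper's exposition rather than a weakness of your argument.
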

\begin{proof}
    Since $\|g_t\| \le h_t$ and $\alpha_t \in [0, \gamma]$, $\| (g_t, a_t ) \|_{\ast, t} \le 2 $. On the other hand $\|\nabla S_t( (\hat{w}, \hat{y})) \|_{\ast, t} = 1$, and
    \begin{align*}
        \| (\delta_t^w, \delta_t^y) \|_{\ast, t} = \frac{\|\delta_t^w\|^2}{h_t^2} + \frac{|\delta_t^y|^2}{\gamma^2}
    \end{align*}
    Thus 
    \begin{align*}
        \frac{\|\delta_t^w\|^2}{h_t^2} + \frac{|\delta_t^y|^2}{\gamma^2} & \le 2
    \end{align*}
    This implies both $\frac{\|\delta_t^w\|^2}{h_t^2} \le 2$ and $\frac{|\delta_t^y|^2}{\gamma^2} \le 2$. 
\end{proof}

\begin{Theorem}\label{thm:final_step1}
    Suppose $g_t, \tilde{g}_t$ satisfies assumptions in Equation (\ref{eqn: assumption_big_rounds}) and (\ref{eqn:assumption_general}), and having access to $\tilde g_t^c$ as defined in Equation (\ref{eqn: clip_general}) with $h_t$ provided by \textsc{filter} (Algorithm \ref{alg:robust_lag}). 
    with $\alpha = \epsilon / c, \gamma = \gamma_{\alpha} + \gamma_{\beta}$, for some $\epsilon, c, \gamma_{\alpha}, \gamma_{\beta}, \tau_G, \tau_D > 0 $, Algorithm \ref{alg:unknownG} guarantees:
    \begin{align*}
       R_T^{\cA}(u) \le  \tilde{O} \left(  \epsilon h_T + \|u\| h_T \sqrt{T} + \|u \|^2 \left( \gamma_{\alpha} (k + 1) + \gamma_\beta\right)  \right) 
    \end{align*} 
    In addition, the produced iterate satisfies $\max_t \|w_t\| \le \frac{\epsilon}{2} 2^T$
\end{Theorem}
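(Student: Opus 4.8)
The plan is to follow the epigraph-based reduction of \cite{cutkosky2024fully} (this is Algorithm \ref{alg:unknownG}), redoing its bookkeeping for our specific penalty sequence $a_t=\alpha_t+\beta_t$. Since $y_t\ge\|w_t\|^2$, $a_t\ge 0$, and $(u,\|u\|^2)\in W$, we first obtain the split
\begin{align*}
R_T^{\cA}(u)\ \le\ \underbrace{\textstyle\sum_{t=1}^T\langle\tilde{g}_t^c,w_t-u\rangle+f_t(w_t)-f_t(u)}_{=:R_T^{\cA_w}(u)}\ +\ \underbrace{\textstyle\sum_{t=1}^T a_t\bigl(y_t-\|u\|^2\bigr)}_{=:R_T^{\cA_y}(u)} .
\end{align*}
Here $(w_t,y_t)=\Pi_W^t\bigl((\hat w_t,\hat y_t)\bigr)$, where $(\hat w_t,\hat y_t)\in\R^{d+1}$ are produced by two \emph{unconstrained} learners: $\cA_w$, which is Algorithm \ref{alg:knownG} fed the composite losses $w\mapsto\langle\tilde{g}_t^c+\delta_t^w,w\rangle+f_t(w)$, and $\cA_y$, the one-dimensional parameter-free learner of \cite{jacobsen2022parameter} fed the linear losses $y\mapsto(a_t+\delta_t^y)y$, with $(\delta_t^w,\delta_t^y)$ the correction of Definition \ref{def:norms}. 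The black-box unconstrained-to-constrained conversion of \cite{cutkosky2018black} then yields $R_T^{\cA_w}(u)\le\tilde R_T^{\cA_w}(u)$ and $R_T^{\cA_y}(u)\le\tilde R_T^{\cA_y}(u)$, the right-hand sides being the ordinary unconstrained regrets of $\cA_w$ and $\cA_y$ on those losses.

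I would then control the corrected feedback. The hypotheses of Lemma \ref{lem:penalty_norm} hold: $\|\tilde{g}_t^c\|\le h_t$ by Lemma \ref{lem:filter}(2), and $a_t\in[0,\gamma]$ with $\gamma=\gamma_\alpha+\gamma_\beta$, since $\alpha_t\in\{0,\gamma_\alpha\}$ and $\beta_t\le\gamma_\beta$ (its denominator $1+\sum_{i\le t}\one\{z_{i+1}\ne z_i\}$ is at least $1$). Hence $\|\delta_t^w\|\le\sqrt2\,h_t$ and $|\delta_t^y|\le\sqrt2\,\gamma$, so the feedback to $\cA_w$ has norm at most $(1+\sqrt2)h_t$ — a non-decreasing sequence by Lemma \ref{lem:filter}(3) — and the feedback to $\cA_y$ has magnitude at most $(1+\sqrt2)\gamma$.

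Now I would invoke the base guarantees. For $\cA_w$: Theorem \ref{thm:omd_base}, applied with hints $(1+\sqrt2)h_t$ and the (ahead-of-time) composite term $f_t$, gives $\tilde R_T^{\cA_w}(u)\le\tilde O\bigl(\epsilon h_T+\|u\|h_T\sqrt T\bigr)$; taking $u=0$ there shows $\cA_w$ has constant regret at the origin, so Lemma \ref{lem:unconstrained_iterate_growth} gives $\|\hat w_t\|\le\epsilon 2^{t-1}$, and since $\Pi_W^t$ projects a point strictly below the paraboloid onto $W=\{(w,y):y\ge\|w\|^2\}$ by shrinking its $w$-coordinate toward the origin, $\|w_t\|\le\|\hat w_t\|\le\tfrac\epsilon2 2^T$, which is the theorem's second claim. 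Fed into Lemma \ref{lem:tracker}(1) and Lemma \ref{lem:filter}(4), this bound also yields $\sum_{t=1}^T a_t=\sum_t\alpha_t+\sum_t\beta_t\le\tilde O\bigl(\gamma_\alpha(k+1)+\gamma_\beta\bigr)$, exactly as in the bias computation of Lemma \ref{lem:error_correction}. For $\cA_y$: its parameter-free regret against the comparator $\|u\|^2$, with gradients $a_t+\delta_t^y$ of magnitude $\le(1+\sqrt2)\gamma$ whose cumulative size is governed by $\sum_t a_t$, is $\tilde R_T^{\cA_y}(u)\le\tilde O\bigl(\|u\|^2(\gamma_\alpha(k+1)+\gamma_\beta)\bigr)$ up to a lower-order additive constant absorbed into $\epsilon h_T$. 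Summing the two pieces gives the claimed regret bound.

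The main obstacle is the $\cA_y$ estimate. The correction $\delta_t^y$ is nonzero whenever $(\hat w_t,\hat y_t)\notin W$, and because $\cA_y$ faces a purely increasing linear loss it drives $\hat y_t$ very negative while $\cA_w$ may push $\|\hat w_t\|$ large, so those rounds could number $\Omega(T)$; a crude bound $\sum_t(\delta_t^y)^2\le 2\gamma^2 T$ would be fatal. The resolution, following \cite{cutkosky2024fully}, is that the \cite{cutkosky2018black} correction is self-limiting — it points back toward $W$ — so it does not inflate the quadratic variation seen by $\cA_y$ beyond $\tilde O(\gamma\sum_t a_t)$, which is where the sparsity of $\alpha_t$ ($\tilde O(k)$ active rounds, Lemma \ref{lem:filter}(4)) and the attenuation of $\beta_t$ enter. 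A secondary point requiring care is that Lemma \ref{lem:unconstrained_iterate_growth} is stated for unit-bounded gradients, so transferring the $\epsilon 2^{t-1}$ growth bound to the clipped, corrected feedback of $\cA_w$ uses the scale-free (Lipschitz-adaptive) structure of Algorithm \ref{alg:knownG}.
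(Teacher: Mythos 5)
Your proposal is correct and follows essentially the same route as the paper: the epigraph split into $R_T^{\cA_w}+R_T^{\cA_y}$, the \citet{cutkosky2018black} unconstrained-to-constrained conversion, Lemma~\ref{lem:penalty_norm} to bound the correction feedback, Theorem~\ref{thm:omd_base} for $\cA_w$, and Theorem~10 of \citet{cutkosky2024fully} for $\cA_y$, followed by the $\sum_t a_t \le \tilde O(\gamma_\alpha(k+1)+\gamma_\beta)$ bookkeeping. You go slightly further than the paper's written proof by actually arguing the $\max_t\|w_t\|\le\frac{\epsilon}{2}2^T$ claim (via constant regret at the origin, Lemma~\ref{lem:unconstrained_iterate_growth}, and the fact that $\Pi_W^t$ shrinks the $w$-coordinate, i.e.\ $\|w_t\|\le\|\hat w_t\|$ — note the paper's intertext states this inequality with the wrong orientation, but its conclusion $f_t(w_t)\le f_t(\hat w_t)$ and the rest of the argument require the direction you give); the paper omits this piece. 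Your ``main obstacle'' paragraph correctly identifies that the $\cA_y$ bound must not degrade to $\gamma\sqrt T$ and that the resolution lives inside \citet{cutkosky2024fully}'s Theorem~10, which the paper cites as a black box.
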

\begin{proof}
Notice that we used $\hat{w}_t, \hat{y}_t$ as outputs from some unconstrained learner and ${w}_t, {y}_t$ being their projection on $W_t$ in Algorithm \ref{alg:unknownG} denote. We begin our analysis from Equation (\ref{eqn: reg_linearize_2}):
    \begin{align*}
    R_T^{\cA}(u) & \le \sum_{t=1}^{T} \langle g_t , w_t - u \rangle + f_{t}(w_t) - f_{t}(u)  + \sum_{t=1}^{T} a_t (y_t - \| u\|^2)
    \intertext{By \citet{cutkosky2018black} Theorem 3}
    & \le \sum_{t=1}^{T} \langle \tilde{g}_t^c + \delta_t^w , \hat{w}_t - u \rangle + f_{t}(w_t) - f_{t}(u) +  \sum_{t=1}^{T} (a_t+\delta_t^y) (y_t - \| u\|^2)\\
    \intertext{Notice $\|\hat w_t \| \le \| w_t \|$, thus $f_t (w_t) \le f_t (\hat w_t)$}
    & \le \underbrace{\sum_{t=1}^{T} \langle \tilde{g}_t^c + \delta_t^w , \hat{w}_t - u \rangle + f_{t}(\hat w_t) - f_{t}(u) }_{R_T^{\cA_w}(u) } +  \underbrace{\sum_{t=1}^{T} (a_t+\delta_t^y) (y_t - \| u\|^2)}_{R_T^{\cA_y}(u)}
    \numberthis \label{eqn: reg_final}
\end{align*}
Since $\gamma_{\beta} = \frac{\gamma}{2}$, $a_t = \alpha_t + \beta_t \le \gamma_{\alpha} + \gamma_\beta = \gamma$. Thus, by Lemma \ref{lem:penalty_norm}, $\|\tilde{g}_t^c + \delta_t^w\| \le h_t + \sqrt{2} h_t \le 3 h_t $ and $| a_t + \delta_t^y | \le \gamma + \sqrt{2} \gamma \le 3 \gamma$. By Theorem \ref{thm:omd_base}:
\begin{align*}
        R_T^{\cA^w}(u) & \le \tilde{O} \left( \epsilon h_T + \| u\| h_T \sqrt{T} \right) 
\end{align*}
and Theorem 10 of \citet{cutkosky2024fully} shown
\begin{align*}
    \quad R_T^{\cA_y}(u)\le \tilde{O} \left( \epsilon h_T + \|u\|^2 \sqrt{ \gamma^2 + \gamma \sum_{t=1}^{T} a_t } \right)
\end{align*}
Thus, we can bound Equation (\ref{eqn: reg_final}) by combine both bounds:
\begin{align*}
    R_T^{\cA}(u) &\le \tilde{O} \left(  \epsilon h_T + \|u\| h_T \sqrt{T} + \|u\|^2 \sqrt{ \gamma^2 + \gamma \sum_{t=1}^{T} a_t } \right)
    \intertext{since $\sum_t a_t = \sum_t \alpha_t + \sum_t \beta_t$, by the same computation as that of Lemma \ref{lem:error_correction}, $\sum_{t} \alpha_t \le \tilde O (\gamma_{\alpha} (k+1))$ and $ \sum_t \beta_t \le  O (\gamma_{\beta})$ }
    & \le \tilde{O} \left(  \epsilon h_T + \|u\| h_T \sqrt{T} + \|u \|^2 \sqrt{ \gamma^2 + \gamma \left( (\gamma_{\alpha} (k+1) + \gamma_\beta  \right) }  \right) 
    \intertext{$\gamma \le \gamma_\alpha$ and $\gamma \le \gamma_\beta$}
    & \le \tilde{O} \left(  \epsilon h_T + \|u\| h_T \sqrt{T} + \|u \|^2 \left( \gamma_{\alpha} (k + 1) + \gamma_\beta\right)  \right) 
\end{align*}
\end{proof}

\begin{algorithm}[H]
    \caption{Robust Online Learning By Exploiting In-Time Offset}\label{alg:unknownG}
\begin{algorithmic}[1]
\STATE {\bfseries Input:} Time horizon $T$, \textsc{filter} (Algorithm \ref{alg:robust_lag}), \textsc{tracker} (Algorithm \ref{alg:tracker}), \\
an algorithm $\cA_y$ with optimal rate in parameter-free literature (e.g., \citet{jacobsen2022parameter} Algorithm 4), \\
corruption parameter $k$, base algorithm parameters $\epsilon$, regularization parameters $c, \alpha, \gamma_\alpha, \gamma_\beta, \gamma$.
\STATE {\bfseries Initialize:} Initialize Algorithm \ref{alg:knownG} as $\cA_w$ with $\epsilon$. Initialize $\cA_y$ with $\epsilon$.  \\
Initialize \textsc{filter} with $\tau_G$ (outputs $h_t$ as a conservative lower-bound guess for $G$).  \\
Initialize \textsc{tracker} with $\tau_D$ (outputs $z_t$ as a conservative lower-bound guess for $\max_t |w_t|$).
\FOR{$t = 1$ {\bfseries to} $T$}
    \STATE Receive $\hat{w}_t$ from $\mathcal{A}_w$; Receive $\hat{y}_t$ from $\mathcal{A}_y$.
    \STATE Compute operators in Definition \ref{def:norms} with $h_t, \gamma$.
    \STATE \textit{\# Explicit projection of $(\hat{w}_t, \hat{y}_t)$ through projection map $\Pi_W^t$ as in Definition \ref{def:norms}.}
    \STATE Compute projection: $(w_t, y_t) = \Pi_W^t((\hat{w}_t, \hat{y}_t))$.
    \STATE Play $w_t$, receive $\tilde{g}^c_t, h_{t+1}$ from \textsc{filter}.  
    Send $w_t$ to \textsc{tracker} and receive $z_{t+1}$.  
    \STATE Compute $\alpha_t, \beta_t$ as defined in Equations (\ref{eqn: weight_alpha}), (\ref{eqn: weight_beta}).
    \STATE Compute quadratic regularizer weights: $a_t = \alpha_t + \beta_t$.
    \STATE \textit{\# Compute gradient correction direction $(\delta_t^w, \delta_t^y)$ with $\| \cdot \|_{\ast,t}$ and $\nabla S_t$ as in Definition \ref{def:norms}.}
    \STATE Compute: $(\delta_t^w, \delta_t^y) = \| (\tilde{g}_t^c, a_t) \|_{\ast,t} \nabla S_t((\hat{w}_t, \hat{y}_t))$.
    \STATE \textit{\# Send corrected gradients:}
    \STATE Send $(\frac{1}{2} \left( \tilde{g}_t^c + \delta_t^w \right), 2 h_{t+1})$ to $\mathcal{A}_w$.
    \STATE Send $\frac{1}{2} (a_t + \delta_t^y)$ and $\frac{3}{2} \gamma$ to $\mathcal{A}_y$.
\ENDFOR
\end{algorithmic}
\end{algorithm}

\section{Regret Analysis with Unknown $G$}\label{app:meta_unknownG}

We provide the regret guarantee of Algorithm \ref{alg:general} when using an instance of Algorithm \ref{alg:unknownG} as a base learner $\cA$.

\begin{Theorem}[Restated Theorem \ref{thm:unknownG}]
Suppose $g_t, \tilde{g}_t$ satisfies assumptions in Equation (\ref{eqn: assumption_big_rounds}) and (\ref{eqn:assumption_general}). Setting $r_t(w) = f_t(w) + \alpha_t \|w\|^2$, where $f_t$ is defined in Equation (\ref{eqn: regularizer}) with parameters: $\alpha = \epsilon \tau_G / c, p = \ln T, \gamma = \gamma_{\alpha} + \gamma_\beta$, for some $ \epsilon, c, \gamma_{\alpha}, \gamma_{\beta}, \tau_G, \tau_D >0$, there exists an algorithm $\cA$ as a base algorithm such that Algorithm \ref{alg:general} 
    guarantees:
    \begin{align*}
        R_T(u) & \le \tilde O \left(  8\epsilon \max(\tau_G, G) + \|u\| \max(\tau_G, G) \left( \sqrt{T} + k \right) + c \|u\| + \left( \gamma_{\alpha} (k + 1) + \gamma_{\beta} \right)  \|u\|^2  \right) \\
        & \quad + \frac{16 k^2 \max\left(\tau_G, G \right) ^2 }{\gamma_{\beta}} \ln \frac{64 k^2 \max\left(\tau_G, G \right) ^2 }{c \gamma_{\beta} \tau_D} + \frac{c}{2} \tau_D  + 2k\tau_D  \max(\tau_G, G) \\
        & \quad + \frac{64 (k+1) \max(\tau_G, G)^2}{ \gamma_{\alpha}} \max \bigg( \left\lceil \log_2 \frac{8G}{\tau_G} \right\rceil, 1 \bigg)
    \end{align*}

\end{Theorem}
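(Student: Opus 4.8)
The plan is to assemble the bound from the three-way decomposition of Equation~(\ref{eqn: regret_general_composite}),
\begin{align*}
R_T(u) = R_T^{\cA}(u) + \underbrace{\left(\textsc{error} - \textsc{correction}\right)}_{=:\,\textsc{offset}} + \textsc{bias},
\end{align*}
instantiating the base learner $\cA$ as Algorithm~\ref{alg:unknownG} run with the stated parameters, so that the clipping thresholds $h_t$ are those produced by \textsc{filter} (Algorithm~\ref{alg:robust_lag}) and the magnitude thresholds $z_t$ those produced by \textsc{tracker} (Algorithm~\ref{alg:tracker}). Each of the three summands is then controlled by an already-established result: $R_T^{\cA}(u)$ by Theorem~\ref{thm:final_step1}, and $\textsc{offset}$ together with $\textsc{bias}$ by Lemma~\ref{lem:error_correction}. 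Adding the three pieces and simplifying should reproduce the claimed expression.

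First I would control $R_T^{\cA}(u)$. Since the hints fed to the base learner are exactly the $h_{t+1}$ emitted by \textsc{filter}, Theorem~\ref{thm:final_step1} applies and yields both the composite-regret bound $R_T^{\cA}(u)\le\tilde O\!\left(\epsilon h_T + \|u\|h_T\sqrt{T} + \|u\|^2(\gamma_\alpha(k+1)+\gamma_\beta)\right)$ and, crucially, the iterate bound $\max_t\|w_t\|\le\tfrac{\epsilon}{2}2^T$. I would then invoke Lemma~\ref{lem:filter}(3), which gives $h_t\le\max(\tau_G,4G)\le 4\max(\tau_G,G)$ for all $t$, to replace every occurrence of $h_T$ by $\max(\tau_G,G)$ up to constants absorbed in $\tilde O(\cdot)$. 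Next, that same iterate bound $\max_t\|w_t\|\le\tfrac{\epsilon}{2}2^T$ is precisely the hypothesis needed for Lemma~\ref{lem:error_correction}: \textsc{filter} receives $\tilde g_t$, \textsc{tracker} receives the played $w_t$, and the regularizer has the required form $r_t(w)=f_t(w)+a_t\|w\|^2$ with $a_t=\alpha_t+\beta_t$ and $\alpha=\epsilon\tau_G/c$. The lemma then supplies the explicit bounds on $\textsc{offset}$ (the $\tfrac{(k+1)\max(\tau_G,G)^2}{\gamma_\alpha}\lceil\log_2(8G/\tau_G)\rceil$ term, $\epsilon\tau_G$, the $\tfrac{k^2\max(\tau_G,G)^2}{\gamma_\beta}\ln(\cdots)$ term, $\tfrac{c}{2}\tau_D$, and $2k\tau_D\max(\tau_G,G)$) and on $\textsc{bias}$ ($\tilde O((\gamma_\alpha(k+1)+\gamma_\beta)\|u\|^2+c\|u\|+\|u\|k\max(\tau_G,G))$). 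Summing the three contributions, the $\|u\|^2$ terms from $R_T^{\cA}(u)$ and $\textsc{bias}$ merge, the $\|u\|\sqrt{T}$ and $\|u\|k$ pieces combine into $\|u\|\max(\tau_G,G)(\sqrt{T}+k)$, and the remaining terms line up with the claimed bound.

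The step I expect to require the most care is the logical ordering around the apparent circularity: Lemma~\ref{lem:error_correction} needs the a priori bound $\max_t\|w_t\|\le\tfrac{\epsilon}{2}2^T$, yet that bound is itself a byproduct of running the base learner. The resolution is that the iterate bound asserted in Theorem~\ref{thm:final_step1} does not depend on the error-correction analysis at all — it follows from the structure of Algorithm~\ref{alg:unknownG} (in particular its constant regret at the origin, in the spirit of Lemma~\ref{lem:unconstrained_iterate_growth}) — so it can legitimately be established first and only then fed into Lemma~\ref{lem:error_correction}. A secondary bookkeeping point is reconciling parameter conventions, since Theorem~\ref{thm:final_step1} is stated with the base-learner offset parameter $\alpha=\epsilon/c$ whereas Lemma~\ref{lem:error_correction} uses $\alpha=\epsilon\tau_G/c$; one simply rescales the base learner's initial guess by $\tau_G$ so that both invocations refer to the same regularizer $f_t$, and tracks the resulting $\max(\tau_G,G)$ factors through the logarithmic terms.
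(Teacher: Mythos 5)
Your proposal is correct and mirrors the paper's proof: the paper also establishes Theorem~\ref{thm:unknownG} by instantiating the base learner as Algorithm~\ref{alg:unknownG}, invoking Theorem~\ref{thm:final_step1} for $R_T^{\cA}(u)$ and the iterate bound $\max_t\|w_t\|\le\tfrac{\epsilon}{2}2^T$, and then applying Lemma~\ref{lem:error_correction} for the $\textsc{offset}$ and $\textsc{bias}$ terms, with $h_T\le 4\max(\tau_G,G)$ from Lemma~\ref{lem:filter}(3). Your remarks on the non-circularity of feeding the iterate bound into Lemma~\ref{lem:error_correction} and on reconciling the $\alpha=\epsilon/c$ versus $\alpha=\epsilon\tau_G/c$ convention are accurate and actually more explicit than the paper's one-line proof.
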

\begin{proof}
    Note the definition of $R_T(u)$, the proof is completed by combining Lemma \ref{lem:error_correction} and Theorem \ref{thm:final_step1}
\end{proof}

\section{Fenchel Conjugate}\label{app:conjugate}
Here we collects basic properties of \textit{Fenchel conjugate}, see reference such as \cite{bertsekas2009convex,orabona2019modern}, and previously established Lemma used in Appendix \ref{app:lower_bound} for completeness.

\begin{Definition}
Let $f: \R^d \rightarrow [-\infty, \infty]$, the \textit{Fenchel conjugate} $f^{\ast}$ is defined as
\begin{align*}
    f^{\ast} (\theta) = \sup_{x \in \R^d } \langle \theta, x \rangle - f(x)
\end{align*}
the \textit{double conjugate} $f^{\ast \ast}$ is defined as
\begin{align*}
    f^{\ast \ast} (\theta) = \sup_{x \in \R^d } \langle \theta, x \rangle - f^{\ast}(x)
\end{align*}
\end{Definition}

\begin{Theorem}\label{thm:conjugate}
    Let $f: \R^d \rightarrow (-\infty, \infty]$
    \begin{enumerate}
        \item $f(x) = f^{\ast \ast} (x),  \forall x \in \R^d $ iff $f$ is convex and lower semicontinuous
        \item $\langle \theta, x \rangle - f(x)$ achieves its supremum in $x$ at $x = x^{\ast}$ iff $x^{\ast} \in \nabla f^{\ast} (\theta)$
    \end{enumerate}
\end{Theorem}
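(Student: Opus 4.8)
The plan is the standard two-part argument from convex analysis: part~1 is the Fenchel--Moreau biconjugation theorem and part~2 is the conjugate--subgradient correspondence. For part~1 I would first dispatch the easy inclusion $f^{\ast\ast}\le f$: the Fenchel--Young inequality $f^{\ast}(\theta)+f(x)\ge\langle\theta,x\rangle$ holds for all $x,\theta$ directly from the definition of $f^{\ast}$, so rearranging and taking a supremum over $\theta$ gives $f^{\ast\ast}(x)=\sup_\theta\big(\langle\theta,x\rangle-f^{\ast}(\theta)\big)\le f(x)$ for every $x$, with no convexity needed. The substantive direction is $f\le f^{\ast\ast}$ under the hypothesis that $f$ is convex and lower semicontinuous. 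I would handle the trivial case $f\equiv+\infty$ separately (then $f^{\ast}\equiv-\infty$ and $f^{\ast\ast}\equiv+\infty$), and for proper $f$ rely on the geometric fact that the epigraph of a proper convex lsc function is a nonempty closed convex set, hence the intersection of the closed halfspaces containing it; equivalently, $f$ is the pointwise supremum of the affine functions it dominates. Given any affine minorant $x\mapsto\langle a,x\rangle-b\le f(x)$ one gets $f^{\ast}(a)=\sup_x\big(\langle a,x\rangle-f(x)\big)\le b$, whence $f^{\ast\ast}(x)\ge\langle a,x\rangle-f^{\ast}(a)\ge\langle a,x\rangle-b$; taking the supremum over all such minorants yields $f^{\ast\ast}(x)\ge f(x)$. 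The converse direction of part~1 is immediate: if $f=f^{\ast\ast}$ then $f(x)=\sup_\theta\big(\langle\theta,x\rangle-f^{\ast}(\theta)\big)$ exhibits $f$ as a pointwise supremum of affine functions of $x$, and any such supremum is convex and lower semicontinuous.

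For part~2 I would note that $x^{\ast}$ attains the supremum in $\langle\theta,x\rangle-f(x)$ precisely when $\langle\theta,x^{\ast}\rangle-f(x^{\ast})=f^{\ast}(\theta)$, i.e.\ when the Fenchel--Young inequality holds with equality, $f(x^{\ast})+f^{\ast}(\theta)=\langle\theta,x^{\ast}\rangle$. Reading this equality from the $f$-side and rearranging shows it is equivalent to $\theta\in\partial f(x^{\ast})$; reading it from the $f^{\ast}$-side and using $f^{\ast\ast}=f$ from part~1 shows it is equivalent to $x^{\ast}\in\partial f^{\ast}(\theta)$. When $f^{\ast}$ is differentiable at $\theta$ this subdifferential is the singleton $\{\nabla f^{\ast}(\theta)\}$, which is the stated form; this is exactly the regime in which the result is applied (e.g.\ $f(x)=\epsilon\exp(x^{2}/T)$ in Lemma~\ref{lem:unconstrained_lower_bound} is convex, finite, and smooth, so both part~1 and the differentiability hold).

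The step I expect to be the main obstacle is the inclusion $f\le f^{\ast\ast}$ in part~1: it is the only place that needs a genuine separation argument (Hahn--Banach / supporting hyperplane for the closed convex epigraph) rather than algebra, and the delicate point is producing affine minorants valid even at points where $f=+\infty$ --- there a ``vertical'' separating hyperplane must be tilted slightly, which is precisely where lower semicontinuity and properness are used. Once that separation fact is granted, everything else, including both halves of part~2, is routine bookkeeping.
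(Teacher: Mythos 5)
The paper does not actually give a proof of this theorem: it is presented in Appendix~\ref{app:conjugate} as a collection of textbook facts, with a pointer to \citet{bertsekas2009convex} and \citet{orabona2019modern} for the details. There is therefore nothing internal to compare against. Your proposal is the standard Fenchel--Moreau argument and the conjugate--subgradient correspondence, and it is correct as a sketch. The two directions of part~1 are handled the right way around (the algebraic inequality $f^{\ast\ast}\le f$ from Fenchel--Young, the separation/affine-minorant argument for $f\le f^{\ast\ast}$, the trivial converse because a supremum of affine maps is convex and lsc), and in part~2 you correctly reduce attainment of the supremum to Fenchel--Young equality and read it from both sides, noting that the $f^{\ast}$-side needs $f=f^{\ast\ast}$ from part~1. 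Two small points worth making explicit in a full write-up: (i) part~2 as stated in the paper silently presupposes the hypotheses of part~1 for the ``$x^{\ast}\in\partial f^{\ast}(\theta)\Rightarrow x^{\ast}$ attains the supremum'' direction, and the paper's notation $\nabla f^{\ast}(\theta)$ should be read as $\partial f^{\ast}(\theta)$ unless differentiability is assumed --- you flag both, which is right; (ii) the separation step you identify as the crux (dealing with vertical supporting hyperplanes at points where $f=+\infty$ by tilting with a non-vertical affine minorant, whose existence uses properness) is exactly where one must be careful, and you name the right ingredient (closed convex epigraph, Hahn--Banach, properness plus lsc). In short, there is no discrepancy to resolve; your proposal is a correct proof of a result the paper treats as citable background.
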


\begin{Lemma}(Theorem A.32 of \cite{orabona2019modern})\label{lem:lambert}
The Lambert function $W: \R^+ \rightarrow \R^+$ is defined as
\begin{align*}
x = W(x) \exp\left(W(x)\right), \quad \text{ for } x>0 
\end{align*}
and $W(x) > 0.6 \ln (1 + x)$ for $x > 0 $.
\end{Lemma}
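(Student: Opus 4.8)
The plan is to eliminate the implicitly-defined function $W$ by the substitution $x = ye^{y}$, which converts the claim into a pair of elementary one-variable monotonicity checks. Recall that $y\mapsto ye^{y}$ is a strictly increasing bijection of $(0,\infty)$ onto itself, so $W$ is exactly its inverse on $(0,\infty)$; hence ``$W(x) > 0.6\ln(1+x)$ for all $x>0$'' is equivalent to ``$y > 0.6\ln(1+ye^{y})$ for all $y>0$''. First I would record this equivalence, reducing everything to an inequality in the single variable $y>0$.

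Next, since $0.6>0$ and $\exp$ is increasing, $y > 0.6\ln(1+ye^{y})$ is equivalent to $e^{5y/3} > 1 + ye^{y}$, i.e. to $g(y) := e^{5y/3} - ye^{y} - 1 > 0$ for $y>0$. Because $g(0)=0$, it suffices to show that $g$ is strictly increasing on $[0,\infty)$. Differentiating, $g'(y) = \tfrac{5}{3}e^{5y/3} - (1+y)e^{y} = e^{y}\bigl(\tfrac{5}{3}e^{2y/3} - (1+y)\bigr)$, so the whole problem comes down to the single claim $h(y) := \tfrac{5}{3}e^{2y/3} - (1+y) > 0$ for $y \ge 0$.

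For this last step I would use the same ``positive value at $0$, positive derivative'' pattern once more: $h(0) = \tfrac{5}{3} - 1 = \tfrac{2}{3} > 0$, and $h'(y) = \tfrac{10}{9}e^{2y/3} - 1 \ge \tfrac{10}{9} - 1 = \tfrac{1}{9} > 0$ for all $y\ge 0$ (since $e^{2y/3}\ge 1$ there). Thus $h$ is strictly increasing on $[0,\infty)$, so $h(y) \ge h(0) > 0$; chaining back, $g'(y) = e^{y}h(y) > 0$, hence $g$ is strictly increasing, hence $g(y) > g(0) = 0$ for every $y>0$, and undoing the substitution yields $W(x) > 0.6\ln(1+x)$ for all $x>0$.

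I do not expect a genuine obstacle here: the only real ``idea'' is choosing the reformulation $x = ye^{y}$, after which the estimate is routine calculus with explicit constants. The points that need a little care are (i) justifying that $W$ is a bona fide bijection of $(0,\infty)$ so the substitution covers the entire range of $x$, and (ii) keeping the constant $5/3 = 1/0.6$ exact throughout rather than rounding, since the slack is narrow (e.g. $h'(0)=\tfrac19$). An alternative would be to invoke a known series lower bound for the Lambert function and estimate term by term, but the direct monotonicity argument above seems cleaner.
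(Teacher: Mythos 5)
Your argument is correct and complete: the substitution $x = ye^y$ is valid because $y\mapsto ye^y$ is a strictly increasing bijection of $(0,\infty)$ onto itself (derivative $(1+y)e^y>0$, value $0$ at $0$, divergent as $y\to\infty$), the reduction to $g(y)=e^{5y/3}-ye^y-1>0$ is an exact equivalence, and the two-stage monotonicity check ($g(0)=0$, $g'=e^y h$, $h(0)=2/3$, $h'\ge 1/9$) is airtight with the constants kept exact. Note that the paper does not actually prove this lemma — it simply cites it as Theorem A.32 of \citet{orabona2019modern} — so your proof is a self-contained replacement for the citation rather than an alternative to an in-paper argument; the technique (unwinding $W$ via $x=ye^y$ and then elementary calculus) is the standard one for such Lambert-function bounds and, as far as I can tell, matches the spirit of the cited proof.
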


\begin{Lemma}(Theorem A.3 of \cite{orabona2019modern})\label{lem:conjugate_exp}
    Let $a, b>0$, $f(x) = b \exp(x^2 / 2a)$. Then the Fenchel conjugate is 
    \begin{align*}
        f^{\ast} (\theta) = \sqrt{a} |\theta| \left( \sqrt{W \left(\frac{a \theta^2}{b^2} \right) }- \frac{1}{\sqrt{W \left(\frac{a \theta^2}{b^2} \right) }} \right)
    \end{align*}
    where $W(\cdot)$ is the Lambert function.
\end{Lemma}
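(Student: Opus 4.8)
The plan is to evaluate the defining supremum $f^{*}(\theta)=\sup_{x\in\R}\bigl(\theta x-b\exp(x^{2}/2a)\bigr)$ by finding the maximizer explicitly and substituting it back; the Lambert function appears precisely because the first-order optimality condition has the form ``$w e^{w}=\text{const}$''. (If one reads $f$ with a vector argument and $x^{2}=\|x\|^{2}$, radial symmetry immediately reduces the problem to this scalar one with $\theta$ replaced by $\|\theta\|$, so there is no loss in treating $x,\theta\in\R$.) Since $f$ is even, $f^{*}$ is even, so I may assume $\theta\ge 0$; for $\theta=0$ one checks directly that $f^{*}(0)=\sup_x\bigl(-b\exp(x^{2}/2a)\bigr)=-b$, attained at $x=0$, and this is the one point where the stated expression must be interpreted as its continuous extension (as $\theta\to0$ one has $W(a\theta^{2}/b^{2})\to0$ like $a\theta^{2}/b^{2}$, and the bracketed term, multiplied by $\theta$, tends to $-b$); in the paper's use of this lemma $\theta\ne0$, so this degeneracy is harmless.

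For fixed $\theta>0$, the objective $g(x):=\theta x-b\exp(x^{2}/2a)$ is continuous with $g(x)\to-\infty$ as $|x|\to\infty$, hence attains its maximum; moreover $f$ is strictly convex, since $f''(x)=\tfrac{b}{a}e^{x^{2}/2a}\bigl(1+x^{2}/a\bigr)>0$, so $g$ is strictly concave and the maximizer $x^{*}$ is the unique zero of $g'$, i.e. the unique solution of $\theta=\tfrac{b x}{a}\exp(x^{2}/2a)$; note $x^{*}>0$ because the right-hand side is negative for $x<0$. Squaring this identity gives $\tfrac{a\theta^{2}}{b^{2}}=\tfrac{x^{2}}{a}\exp(x^{2}/a)$, which with the substitution $w=x^{2}/a$ reads $w e^{w}=\tfrac{a\theta^{2}}{b^{2}}$; since $w\mapsto we^{w}$ is a strictly increasing bijection from $[0,\infty)$ onto $[0,\infty)$, this forces $w=W\!\bigl(\tfrac{a\theta^{2}}{b^{2}}\bigr)$, so $x^{*}=\sqrt{a\,W(a\theta^{2}/b^{2})}$ with no ambiguity of branch or sign.

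Finally I substitute $x^{*}$ back into $g$. Writing $z:=a\theta^{2}/b^{2}$ and using the defining relation $z=W(z)e^{W(z)}$ in the form $e^{W(z)}=z/W(z)$, I get $\exp\bigl((x^{*})^{2}/2a\bigr)=e^{\frac12 W(z)}=\sqrt{z/W(z)}=\dfrac{\theta\sqrt{a}}{b\sqrt{W(z)}}$, hence $b\exp\bigl((x^{*})^{2}/2a\bigr)=\dfrac{\theta\sqrt{a}}{\sqrt{W(z)}}$, while $\theta x^{*}=\theta\sqrt{a}\,\sqrt{W(z)}$. Subtracting,
\[
f^{*}(\theta)=\theta x^{*}-b\exp\!\Bigl(\tfrac{(x^{*})^{2}}{2a}\Bigr)=\sqrt{a}\,\theta\Bigl(\sqrt{W(z)}-\tfrac{1}{\sqrt{W(z)}}\Bigr),
\]
and replacing $\theta$ by $|\theta|$ (legitimate since $f^{*}$ is even) and $z$ by $a\theta^{2}/b^{2}$ yields exactly the claimed formula. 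The whole argument is a routine computation; the only points needing a little care — and thus the closest thing to an ``obstacle'' — are verifying that the maximizer is unique and positive, so that the square-root inversion and the Lambert branch are unambiguous, and handling the $\theta=0$ degeneracy noted above.
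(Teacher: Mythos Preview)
The paper does not supply its own proof of this lemma; it is simply quoted from \cite{orabona2019modern} and used as a black box in the lower-bound argument. Your computation is correct and is the standard derivation: set the derivative to zero, recognize the resulting equation as $we^{w}=a\theta^{2}/b^{2}$ after the substitution $w=x^{2}/a$, and substitute back. The care you take with uniqueness and sign of the maximizer, the Lambert branch, and the $\theta=0$ degeneracy is appropriate and nothing is missing.
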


\begin{Lemma}\label{lem:algebra}
\begin{align*}
    \sqrt{x} - \frac{1}{\sqrt{x}} \ge \sqrt{\frac{x}{9}}, \quad \forall x \ge \frac{3}{2}
\end{align*}    
\end{Lemma}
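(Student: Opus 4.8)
The plan is to reduce the claimed inequality to the trivial statement $x \ge 3/2$ via elementary algebra; no calculus or case analysis is needed. First I would use the identity $\sqrt{x/9} = \sqrt{x}/3$, so that the target inequality becomes $\sqrt{x} - 1/\sqrt{x} \ge \sqrt{x}/3$. Moving the right-hand term to the left, this is equivalent to $\tfrac{2}{3}\sqrt{x} - 1/\sqrt{x} \ge 0$.

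Next, since $x \ge 3/2 > 0$ we have $\sqrt{x} > 0$, so multiplying the last inequality through by $\sqrt{x}$ preserves its direction and yields the equivalent form $\tfrac{2}{3}x - 1 \ge 0$, i.e.\ $x \ge 3/2$, which is exactly the hypothesis. Reading the chain of equivalences backwards establishes the lemma; note that equality holds precisely at $x = 3/2$, consistent with the non-strict inequality.

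The only point requiring (minimal) care is that each manipulation — in particular multiplying by $\sqrt{x}$ — is a genuine equivalence rather than merely an implication, which holds because $\sqrt{x}$ is strictly positive on the relevant range. There is essentially no obstacle here. If one preferred a monotonicity argument instead, one could set $g(x) = \tfrac{2}{3}\sqrt{x} - x^{-1/2}$, observe that $g(3/2) = 0$, and check that $g'(x) = \tfrac{1}{3}x^{-1/2} + \tfrac{1}{2}x^{-3/2} > 0$ for all $x > 0$, so $g$ is increasing and hence nonnegative on $[3/2,\infty)$; but the direct rearrangement above is shorter and fully self-contained.
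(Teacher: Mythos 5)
Your proof is correct and takes essentially the same route as the paper's: both reduce the target inequality, after writing $\sqrt{x/9}=\sqrt{x}/3$, to the equivalent statement $\tfrac{2}{3}x \ge 1$ (the paper writes this as $(1-\tfrac{1}{3})x\ge 1$) by multiplying or dividing through by the positive quantity $\sqrt{x}$. The only cosmetic difference is direction (you go forward with a chain of equivalences; the paper starts from $x\ge 3/2$ and rearranges), so nothing further is needed.
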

\begin{proof}
    The proof is based on rearrange $x \ge \frac{3}{2}$, the condition is equivalent to
    \begin{align*}
        \left(1 - \frac{1}{3}\right) x \ge 1
    \end{align*}
    Given $x > 0$, divide both side by $\sqrt{x}$
    \begin{align*}
        \left(1 - \frac{1}{3}\right) \sqrt{x} \ge \frac{1}{\sqrt{x}}
    \end{align*}
    Rearrange and we complete the proof.
\end{proof}

\end{document}